\newcommand{\ignore}[1]{}
\definecolor{forestgreen}{rgb}{0.0, 0.27, 0.13}
\newcommand\numberthis{\addtocounter{equation}{1}\tag{\theequation}}
\DeclareMathOperator*{\argmax}{\arg\!\max}
\DeclareMathOperator*{\argmin}{\arg\!\min}
\newif\ifshowanswer    
\newcommand{\isitthree}[1]
{
  \ifnum#1=3
    number #1 is 3
  \else
    number #1 is not 3
  \fi
}
\newcommand{\Var}{\mathrm{Var}}
\newcommand{\be}{\begin{equation}}
\newcommand{\ee}{\end{equation}}
\newcommand\R{{\mathbb{R}}}
\renewcommand\P{{\mathds{P}}}
\newcommand\E{{\mathds{E}}}
\renewcommand\b{\boldsymbol}
\newcommand\CN{{\mathcal N}}
\newcommand\CS{{\mathcal S}}
\newcommand\CX{{\mathcal X}}
\newcommand\N{{\mathbb N}}
\newtheorem{thm}{Theorem}
\newtheorem{cor}{Corollary}
\newtheorem{prop}{Proposition}
\newtheorem{lem}{Lemma}
\newcommand{\bx}{\b{\mu}}
\newcommand{\bxh}{\hat{\bx}}
\newcommand{\bxt}{\tilde{\bx}}
\newcommand{\budget}{T}
\newcommand{\ferr}{\epsilon}
\newcommand{\perr}{\delta}
\newcommand{\ba}{\boldsymbol{\alpha}}
\newcommand{\bs}{\boldsymbol{\sigma}}
\newcommand{\bsh}{\hat{\bs}}
\renewcommand{\epsilon}{\varepsilon}
\begin{document}
\runningtitle{Approximate Function Evaluation via Multi-Armed Bandits}

\twocolumn[

\aistatstitle{Approximate Function Evaluation \\ via Multi-Armed Bandits}

\aistatsauthor{ Tavor Z. Baharav \And  Gary Cheng \And  Mert Pilanci \And   David Tse }

\aistatsaddress{ Stanford University} ]

\begin{abstract}
We study the problem of estimating the value of a known smooth function $f$ at an unknown point $\bx \in \R^n$, where each component $\mu_i$ can be sampled via a noisy oracle.
Sampling more frequently components of $\bx$ corresponding to directions of the function with larger directional derivatives is more sample-efficient.
However, as $\bx$ is unknown, the optimal sampling frequencies are also unknown.
We design an instance-adaptive algorithm that learns to sample according to the importance of each coordinate, and with probability at least $1-\delta$ returns an $\epsilon$ accurate estimate of $f(\bx)$.
We generalize our algorithm to adapt to heteroskedastic noise, and prove asymptotic optimality when $f$ is linear.
We corroborate our theoretical results with numerical experiments, showing the dramatic gains afforded by adaptivity.

\end{abstract}

\section{Introduction}
Estimation is a ubiquitous and expensive task in many modern applications.
Often, however, we are not interested in estimation for estimation's sake, but rather for a downstream task through an application specific function.
For an unknown parameter vector $\bx \in \R^n$ which we are given noisy coordinate-wise query access to, it is well understood how to sample to construct an estimate to minimize the approximation error between our estimate $\bxh$ and the true vector $\bx$.
Instead, if we are interested in estimating $f(\bx)$, a natural question is whether we can leverage knowledge of this function $f$ to construct a more efficient estimation procedure.
In this paper we show that using adaptivity we can dramatically reduce the number of samples required to estimate $f(\bx)$ to a target accuracy $\epsilon$, exploiting the fact that we are interested in $\bx$ solely through its evaluation $f(\bx)$. 
This problem has been well studied in the special case of $f(\bx)= \max_{i \in [n]} \mu_i$ in the multi-armed bandit literature.
In this paper, we focus on settings where $f$ has Lipschitz gradients.

We motivate our focus on smooth $f$ with several applications; software reliability testing \cite{cai2004optimal}, stratified sampling for Monte Carlo integration \cite{carpentier2015adaptive}, and estimating the norm of a matrix vector product $\|Ax\|_2^2$.
In software reliability testing \cite{cai2004optimal}, we are interested in estimating the failure rate of a piece of software across $n$ classes or tasks.
This is formulated as estimating $f(\bx) = \sum_{i=1}^n p_i \mu_i$, where $\mu_i$ is the probability that a test from class $i$ fails, and $p_i$ is the fraction of time that a task in class $i$ will be used.
Since $\bx$ is unknown a priori, we adaptively select what subroutines to test in order to estimate $f(\bx)$.
We note that in this setting the difficulty of the problem stems not from the complexity of $f$, as it is linear, but rather from the unknown variance of our observations.
These variances amplify the estimation errors in different coordinates by differing amounts, and so the optimal sampling frequencies critically depend on these unknown variances

Another application of smooth function estimation lies in the basic linear algebraic task of estimating the power in a matrix vector product $\|Ax\|_2^2$.
For $A\in \R^{n\times d}$, we see that defining $\bx := Ax$ we have that $dA_{i,J}x_J$ is an unbiased estimator of $A_i^\top x$ when $J\sim\text{Unif}([d])$.
This primitive of converting a computational problem to one of statistical estimation was formalized in \cite{bagaria2021bandit}.
This allows us to convert the computational problem of estimating $\|Ax\|_2^2$ into the problem of estimating $f(\bx) = \|\bx\|_2^2$ for an unknown vector $\bx$ that can only be sampled coordinate-wise via a noisy oracle.

We operate in the stochastic multi-armed bandit setting, a common framework for studying sequential decision making problems \cite{lai1985asymptotically}.
In this setting the vector of arm means $\bx\in \R^n$ is fixed but unknown, and at every time step we are able to query one coordinate of $\mu_i$, an arm, and obtain a noisy but unbiased sample, an arm pull \cite{bubeck2012regret}.
A prototypical multi-armed bandit problem is best arm identification, which has been extensively studied \cite{jamieson2014best}, and can be posed as $f(\bx) = \argmax_{i \in [n]} \mu_i$ where $f\colon\R^n\mapsto [n]$ for $[n] := \{1,\hdots,n\}$.
With this discrete valued output, the error between $f(\bx)$ and $f(\bxh)$ is taken as 0 if they are equal and 1 otherwise, with the same 0-1 loss being used across other pure exploration multi-armed bandit variants \cite{lattimore2020bandit}.

For smooth $f$, however, we are better able to analyze the function error caused by our $\bx$ estimation error.
We obtain via a Taylor expansion that a plug-in estimator $\bxh$ will achieve an error of $f(\bxh)-f(\bx)= \nabla f(\bx)^\top (\bxh - \bx) + O(\|\bxh-\bx\|_2^2)$.
This suggests that in the high accuracy regime where $\|\bxh-\bx\|_2$ is small, it is beneficial to sample coordinates with larger partial derivatives more.
Such Taylor expansions do not arise in classical multi-armed bandit variants like the best arm identification problem, as examining its real valued analogue $f(\bx) = \max_{i\in[n]} \mu_i$, we see that $\nabla f(\bx) = e_{i^{*}}$, where $i^{*}=\argmax_{i\in[n]} \mu_i$ (assumed to be unique).
If $\bx$ is well estimated such that $i^{*}$ has been identified, then the only way to improve our estimation of $f(\bx)$ is to improve our estimation of $\mu_{i^{*}}$.
This agrees with intuition but is not very useful, as in order for this dependence to hold, we need to have identified $i^{*}$, as $f$ is not everywhere differentiable nor are its gradients smooth.
In the setting with smooth $f$ however, we can guarantee that our estimate of the gradient $\nabla f(\bx)$ will improve as our estimate of $\bx$ improves.

\subsection{Outline}
In \Cref{sec:relatedWorks} we discuss related work and place this paper within the broader multi-armed bandit literature.
We formalize our problem setting in \Cref{sec:formulation}.
In \Cref{sec:algdesign} we propose our algorithm for adaptive function approximation.
We extend our analysis to more general scenarios in \Cref{sec:algExt}.
In \Cref{sec:apps} we numerically validate the theoretical improvements afforded by our scheme.
We conclude in \Cref{sec:conc}.
All theoretical claims are proved in the Appendix.

\section{Related work} \label{sec:relatedWorks}

The study of multi-armed bandits has a long history dating back to the 1950's \cite{robbins1952some,paulson1964sequential,lai1985asymptotically}.
Numerous variants of this fundamental problem have been studied, including many identification-based objectives \cite{lattimore2020bandit}.
Regarding estimation of a known functional of the unknown mean vector, several previous works have studied the case of linear functions $f$ with heteroskedastic noise, which we highlight below.
While the aim of our work is on constructing an algorithm with a stopping condition and fixed confidence guarantees,
these alternative schemes instead focus on regret and anytime performance and do not provide such guarantees.

\subsection{Software reliability assessment}
One well studied problem that fits as a special case of our results is software reliability testing \cite{cai2004optimal}.
In this setting there are $n$ classes of tests $C_1,\hdots, C_n$ that can be run on a piece of software. 
Tests from class $i$ fail independently with an unknown but fixed probability $\mu_i$. 
The system is known to be run under a given operational profile $\{p_i\}_{i=1}^n$, where $p_i$ indicates the fraction of time that a task in class $C_i$ will be used.
In this setting the objective is to estimate the unreliability of the system, which is $f(\bx) = \b{p}^\top \bx$.
Works in the adaptive software reliability estimation literature utilize the plug-in estimator $f(\bxh)$ which is unbiased due to the linearity of $f$,
and so the primary objective is to minimize the estimator's variance.
This is accomplished by a careful finite sample analysis of the corresponding Markov Chain that is induced by all possible sampling patterns \cite{hu2013enhancing}.
Since each arm fails with probability $\mu_i$ and succeeds with probability $1-\mu_i$, the noise in this problem is heteroskedastic.
Asymptotic sampling frequencies are derived by using a Lagrangian for constrained optimization \cite{lv2014asymptotic}.
These works provide a specialized analysis for the setting with Bernoulli noise, and can only accommodate linear functions $f$.

\subsection{Stratified sampling bandit literature}

A similar vein of literature has arisen in works on adaptive stratified sampling for Monte-Carlo integration \cite{carpentier2012adaptive}.
In this setting the objective is to integrate a function $f$ over a domain $\CX$ by partitioning the domain into strata and adaptively allocating samples within.
Assuming that we can compute the measure of each strata $i=1,\hdots,n$, the objective can be formulated as estimating $f(\bx) = \sum_i w_i \mu_i$, where $w_i$ is the measure of strata $i$ and $\mu_i$ is the expectation of $f$ over strata $i$.
Due to the different and unknown variances $\sigma_i^2$ of $f$ over different strata, adaptivity is required to learn the optimal sampling frequencies $\alpha_i \propto w_i \sigma_i$, the well known Neyman allocation \cite{lohr2019sampling}, in order to obtain sublinear regret.
An important distinction between the Monte Carlo integration problem and our setting is that in these works the objective is to estimate $\E \{f(X)\}$.
In our setting however, we are interested in evaluating $f$ at some ground truth value $\bx = \E\{X\}$ of which we are only able to obtain noisy coordinate-wise observations.
This means that our objective is to estimate $f(\E\{X\})$, which only coincides with the previous objective when $f$ is linear.

As these stratified sampling works both focus on regret, they provide guarantees on the deviation of the empirical sampling ratios from the optimal ones.
However, it is unclear how to construct algorithmic stopping conditions for these regret-focused methods to allow them to operate in the fixed confidence regime.
Such stopping conditions are necessary for fixed confidence algorithms, which are significantly more useful in practical estimation settings.
In the software reliability estimation setting for example, we wish to run tests until we are 98\% sure that we've estimated the reliability to within 1\%; regret is not a notion of interest, as the primary goal is this fixed confidence guarantee.
Additionally these works can only handle linear $f$ with heteroskedastic noise. We are able to generalize to arbitrary differentiable $f$ with $L$-Lipschitz gradients, providing fixed confidence guarantees.

\subsection{Bandits as a computational tool}
Due to the ever increasing size of datasets, multi-armed bandit-based randomized algorithms have been recognized as a useful tool for constructing instance-optimal algorithms for computational tasks.
Dating back to works in Monte Carlo Tree Search \cite{kocsis2006bandit} and hyper-parameter tuning \cite{li2017hyperband}, adaptivity has been used to focus computational efforts towards promising solutions.
Formalized into the framework of Bandit-Based Monte Carlo Optimization \cite{bagaria2021bandit}, this reduction from computation to adaptive statistical estimation has been used to solve many problems including finding the medoid of a dataset \cite{bagaria2018medoids,baharav2019ultra,tiwari2020bandit}, $k$-nearest neighbor graph construction \cite{lejeune2019adaptive,mason2019learning,mason2021nearest}, Monte Carlo permutation-based multiple testing \cite{zhang2019adaptive}, and mode estimation \cite{singhal2020query}.
Most similarly, a rank-one estimation problem was solved in \cite{kamath2020adaptive} via an adaptive SVD, showing the utility of this technique for linear algebraic primitives, which we extend here to estimating $\|Ax\|_2^2$.

In order to realize statistical gains as wall-clock gains, it is advantageous to use algorithms that require fewer rounds of adaptivity and avoid extra synchronization steps.
Semi-adaptive algorithms have been studied in various contexts in the multi-armed bandit literature, first for a multiplayer setting \cite{hillel2013distributed}, then for halving based algorithms \cite{karnin2013almost}, and recently in more general settings \cite{karpov2020batched,karbasi2021parallelizing}, including for submodular optimization \cite{balkanski2018adaptive,esfandiari2021adaptivity}.
Such schemes exploit the efficiency of batched operations in modern computer architectures leading to dramatically faster algorithms \cite{baharav2019ultra,bagaria2021bandit}.
To this end, all algorithms proposed in this paper perform batched sampling, and show dramatic wall-clock improvements over fully sequential methods in our experiments.

\section{Problem Formulation} \label{sec:formulation}

We study the problem of minimizing the number of samples $\budget$ required to construct an estimator $\bxh$ of $\bx$ such that $|f(\bx)-f(\bxh)|\le \epsilon$ with probability at least $1-\delta$.
The function $f:\R^n\mapsto\R$ is known, and is differentiable with $L$-Lipschitz gradients, where we denote $\nabla_i f(\b{x}) := \frac{\partial}{\partial x_i} f(\b{x}):= g_i$.
$\bx$ is unknown, and at each time step $t\in \N$ we select $i\in [n]$ and obtain $\mu_i+Z_{i,t}$, where $\{Z_{i,t}\}$ are independent standard normals.
Our results are readily extendable to the case where $\{Z_{i,t}\}$ are 1-sub-Gaussian, but for clarity of exposition we present our results for the Gaussian case.
Given these three metrics of interest ($\ferr,\perr,\budget$), we focus on algorithms for the fixed confidence setting where we constrain both the error probability $\perr$ and the output error $\ferr$, and minimize the number of samples needed.

\section{Adaptive Sampling Scheme}\label{sec:algdesign}

To design our adaptive algorithm, we begin by examining the error incurred by a plug-in estimator $\bxh$ more closely.
Due to the $L$-Lipschitz gradients of $f$ \cite{bubeckBlog} this error satisfies
\vspace{-.1cm}
\begin{equation}
        \left|f(\hat{\bx}) - f(\bx) - \nabla f(\bx)^\top (\bxh - \bx)\right| \le \frac{L}{2}\|\bxh-\bx\|_2^2.\vspace{-.1cm}
\end{equation}
Since our estimator for $\mu_i$ is distributed as $\hat{\mu}_i \sim \CN(\mu_i,1/T_i)$ after $T_i$ samples, the second order error $\|\bxh-\bx\|_2^2$ is readily bounded by uniformly sampling the arms.
This indicates that we need to have a sufficiently accurate estimate of $\bx$ before our linearized first order approximation is accurate. 
Once the second order error has been sufficiently controlled, we examine the first order error $\nabla f(\bx)^\top (\bxh - \bx)$ by noting that
\begin{equation}
    \nabla f(\bx)^\top (\bxh-\bx) \sim \CN \left(0,\sum_{i=1}^n \frac{g_i^2}{T_i}\right). \label{eq:genFnormal}
\end{equation}
To guarantee that this error is less than $\epsilon$ in magnitude with probability at least $1-\delta$, Hoeffding's inequality indicates that we need to sample until $\sum_{i=1}^n g_i^2/T_i \le 2\epsilon^2/\log(2/\delta)$.
Minimizing our sample complexity in order to reach this condition, we define our sampling distribution $\ba \in \Delta^{n}$, the $n$ dimensional probability simplex, such that $T_i=\alpha_i T$.
Observe that for a fixed number of samples $T$ we can equivalently minimize over $\ba$ instead of $\{T_i\}$, yielding the optimal sampling distribution
\vspace{-.1cm}
\begin{equation*}
    \argmin_{\ba\in \Delta^n} \sum_i \frac{g_i^2}{\alpha_i T}
    \implies \alpha_i \propto |g_i|, \numberthis\vspace{-.1cm}
\end{equation*}
due to Sion's minimax theorem \cite{sion1958general} and first order conditions.
Sampling each coordinate proportional to the magnitude of its partial derivative $|g_i|$ effectively rescales our confidence interval on each arm inversely proportional to the scalar multiplying it.

If $\b{g}$ is known a priori, as is the case in the linear setting, then so is the optimal allocation, and hence adaptivity is not necessary.
In our setting however, the gradient $\b{g} = \nabla f(\bx)$ is unknown, as the point $\bx$ is unknown.
Motivated by the goal of sampling proportionally to the unknown gradient magnitudes, we propose \Cref{alg:generalf}.
It proceeds in rounds with doubling budgets $B_r = 2^r B_0$, and within each round uniformly samples the arms to yield improved estimation of $\bx$ and by extension $\nabla f(\bx)$, to estimate the optimal sampling frequencies.
At each iteration the algorithm checks the stopping condition in Line 10, determining whether sampling according to the estimated optimal sampling frequencies with the same budget would yield sufficiently small error.
If so, it performs these pulls and returns the plug-in estimate of $f(\bx)$.
Otherwise, it continues to the next round to refine its estimates of the optimal sampling frequencies.

Through our algorithm we construct estimators $\bxh^{(r)}$ of $\bx$ for estimating $\nabla f(\bx)$, where $\hat{\mu}_i^{(r)}$ is the average of the $\tilde{B}_r:=\sum_{\ell=1}^r\lceil B_\ell/n\rceil$ samples of arm $i$ taken prior to and including round $r$.
We utilize anytime confidence intervals of width $C_r$, where after $\tilde{B}_r$ pulls of arm $i$ we have that
\vspace{-.15cm}
\begin{equation}
    |\hat{\mu}_i^{(r)} - \mu_i| \le \sqrt{\frac{ 2\log (12nr^2/\delta)}{\tilde{B}_r}}:= C_r \vspace{-.1cm}
\end{equation}
with high probability, formalized in Lemma \ref{lem:conf}.
While tighter confidence intervals based on the law of the iterated logarithm can be utilized as in \cite{jamieson2014lil,kaufmann2016complexity}, in this work we use simple ones derived from Hoeffding's inequality for clarity of exposition. 

To estimate the importance of each coordinate, we construct upper and lower bounds for our gradient as
\vspace{-.125cm}
\begin{align} \label{eq:gradEst}
    \hat{g}_i^{(r,L)} &:= \min_{\b{y} : \|\b{y} - \bxh^{(r)}\|_\infty \le C_r} \left| \nabla_i f(\b{y})\right|,\nonumber\\
    \hat{g}_i^{(r,U)} &:= \max_{\b{y} : \|\b{y} - \bxh^{(r)}\|_\infty \le C_r} \left| \nabla_i f(\b{y})\right|. \vspace{-.125cm}
\end{align}
Assuming that our confidence intervals hold, we have that $\hat{g}_i^{(r,L)} \le g_i \le \hat{g}_i^{(r,U)}$ for all $i,r$.
Our estimators $\hat{g}_i^{(r,L)}$ presume that the partial derivative is as small as it could feasibly be.
This is critical to ensure that we do not oversample unimportant arms.
If this minimization is computationally intensive, one efficient alternative is to define $\hat{g}_i^{(r,L)}$ as $(|\nabla_i f(\bxh^{(r)})| - L\sqrt{n} C_r )_{+}$ where $(x)_{+} := \max(x,0)$.
This coarse bound ignores the local geometry of the problem and instead utilizes the global Lipschitzness bound $L$, but maintains the same guarantees for worst case functions.
We correspondingly can set $\hat{g}_i^{(r,U)}$ as $|\nabla_i f(\bxh^{(r)})| + L\sqrt{n} C_r$. These overestimates $\hat{g}_i^{(r,U)}$ ensure that we do not terminate too early.

\begin{algorithm}[h]
\begin{algorithmic}[1]
\caption{\label{alg:generalf} \texttt{Adaptive function approximation}}
\State \textbf{Input:} arms $[n]$, function $f$, target accuracy $\epsilon$, error probability $\delta$
\State Set $B_0 = 17Ln^2 \epsilon^{-1} \log(12n/\delta)$
\For{$r=1,2,\hdots$}
\State Set round budget $B_r = B_02^r$
\State Pull each arm $\lceil B_r/n\rceil$ times
\State Use all previous samples to construct $\bxh^{(r)}$
\State Compute bounds $\{\hat{g}_i^{(r,L)},\hat{g}_i^{(r,U)}\}$ as in \eqref{eq:gradEst}
\State Construct sampling frequencies $\alpha_i^{(r)} \propto \hat{g}_i^{(r,L)}$ 
\State Set $\tilde{T}_i^{(r)} :=\lceil (\alpha_i+\frac{1}{n})B_r\rceil$
\If{$\sum_i \frac{\left(\hat{g}_i^{(r,U)}\right)^2}{\tilde{T}_i^{(r)}} \le \frac{\epsilon^2}{16\log(6/\delta)} $} \label{line:alg1TermCond}
\State Pull arm $i$ $\tilde{T}_i^{(r)}$ times to construct $\bxt$
\State \Return{$f\left(\bxt\right)$}
\EndIf
\EndFor
\end{algorithmic}
\end{algorithm}

We provide the following theorem regarding the performance of Algorithm \ref{alg:generalf}. We express our results utilizing big O notation to highlight the dependence on relevant problem parameters. All constants are made explicit in \Cref{app:mainProof}.

\begin{thm} \label{thm:main}
Algorithm 1 succeeds with probability at least $1-\delta$ in outputting $\bxt$ such that 
$\left|f(\bxt) - f(\bx)\right| \le \epsilon$, using a number of arm pulls at most
\begin{equation*}
    O\left(\frac{\|\nabla f(\bx)\|_1^2 \log(1/\delta)}{\epsilon^2}+\frac{n^2L \log(n/\delta)}{\epsilon}\right).
\end{equation*}
\end{thm}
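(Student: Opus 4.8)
The plan is to condition throughout on a high-probability \emph{good event} $\CE$ and to separate the argument into a correctness part and a sample-complexity part. Define $\CE$ as the intersection of the event of Lemma~\ref{lem:conf}, namely $|\hat{\mu}_i^{(r)}-\mu_i|\le C_r$ for all $i,r$, together with a coordinate-wise confidence event for the terminal estimator $\bxt$. A union bound over the $n$ arms and over rounds (using $\sum_{r\ge1} r^{-2}=\pi^2/6$, which the factors $12nr^2$ and $6n$ inside the logarithms are calibrated to absorb) gives $\P(\CE^c)\le 2\delta/3$. The single structural fact I use repeatedly is that on $\CE$ the sandwich $\hat{g}_i^{(r,L)}\le|g_i|\le\hat{g}_i^{(r,U)}$ holds for every $i,r$, as noted below \eqref{eq:gradEst}. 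Starting from the $L$-Lipschitz-gradient inequality stated at the top of this section, I decompose
\begin{equation*}
\left|f(\bxt)-f(\bx)\right| \le \left|\nabla f(\bx)^\top(\bxt-\bx)\right| + \tfrac{L}{2}\|\bxt-\bx\|_2^2
\end{equation*}
and bound the two terms separately.

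For the second-order term, the $+\tfrac1n$ in $\tilde{T}_i^{(r)}=\lceil(\alpha_i^{(r)}+\tfrac1n)B_r\rceil$ forces every coordinate to receive at least $B_r/n$ samples; combined with the confidence event for $\bxt$ this yields $\|\bxt-\bx\|_2^2 = O\!\left(n^2\log(n/\delta)/B_r\right)$, and the choice of $B_0$ makes $\tfrac{L}{2}\|\bxt-\bx\|_2^2\le\epsilon/2$ for every $r\ge1$. For the first-order term, $\nabla f(\bx)^\top(\bxt-\bx)\sim\CN(0,\sum_i g_i^2/\tilde{T}_i^{(r)})$; the sandwich bounds its variance by the terminating value $\sum_i(\hat{g}_i^{(r,U)})^2/\tilde{T}_i^{(r)}\le\epsilon^2/(16\log(6/\delta))$, and a Gaussian tail bound gives $|\nabla f(\bx)^\top(\bxt-\bx)|\le\epsilon/2$ outside an event of probability at most $\delta/3$. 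Here the random stopping round causes no difficulty: conditioning on the data through the terminal round, the variance bound is deterministic on $\CE$ and $\bxt$ is formed from fresh pulls, so no union over rounds is needed. Summing the failure probabilities gives total error probability at most $\delta$ and $|f(\bxt)-f(\bx)|\le\epsilon$.

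For the sample complexity, the doubling budgets make the last round dominate, so the total number of pulls is $O(B_R+Rn)$ where $R$ is the stopping round and $\sum_i\tilde{T}_i^{(R)}\le 2B_R+n$. The crux is to upper bound $B_R$. On $\CE$ I use the oscillation bound $\hat{g}_i^{(r,U)}-\hat{g}_i^{(r,L)}\le 2L\sqrt{n}\,C_r$ (the variation of $|\nabla_i f|$ over the $\ell_\infty$-ball of radius $C_r$) to write $(\hat{g}_i^{(r,U)})^2\le 2(\hat{g}_i^{(r,L)})^2+8L^2 n C_r^2$. Plugging in $\tilde{T}_i^{(r)}\ge\alpha_i^{(r)}B_r=\hat{g}_i^{(r,L)}B_r/S_r$ (with $S_r:=\sum_j\hat{g}_j^{(r,L)}\le\|\nabla f(\bx)\|_1$) for the first piece and $\tilde{T}_i^{(r)}\ge B_r/n$ (exactly where the $+\tfrac1n$ averts division by zero) for the second, the stopping-condition left-hand side is at most
\begin{equation*}
\frac{2S_r^2}{B_r}+\frac{8L^2 n^3 C_r^2}{B_r} \le \frac{2\|\nabla f(\bx)\|_1^2}{B_r}+O\!\left(\frac{L^2 n^4\log(nr^2/\delta)}{B_r^2}\right).
\end{equation*}
The value $B_0=17Ln^2\epsilon^{-1}\log(12n/\delta)$ is calibrated so that the second (Lipschitz-slack) term already sits below half the threshold for every $r\ge1$; the stopping condition is therefore guaranteed once $B_r\gtrsim\|\nabla f(\bx)\|_1^2\log(1/\delta)/\epsilon^2$. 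Doubling then yields $B_R=O\!\left(B_0+\|\nabla f(\bx)\|_1^2\log(1/\delta)/\epsilon^2\right)$, and since $R=O(\log(\cdot))$ the term $Rn$ is of lower order, producing the claimed bound.

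I expect the main obstacle to be the coupling between gradient misestimation and sample complexity: the allocation $\alpha_i^{(r)}\propto\hat{g}_i^{(r,L)}$ is built from a lower bound that may be loose or even zero for small partial derivatives, so I must ensure that coordinates whose importance is underestimated are neither wastefully oversampled nor catastrophically undersampled. The split of $(\hat{g}_i^{(r,U)})^2$ into a true-gradient part and a Lipschitz-slack part, together with the $+\tfrac1n$ flooring, is what resolves this; the delicate step is the constant-chasing showing that $B_0$ is simultaneously large enough to annihilate the slack term uniformly in $r$ while keeping $R=O(1)$ in the regime where that term binds.
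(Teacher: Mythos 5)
Your proposal is correct, and its skeleton is the same as the paper's: a good event on which the mean estimators (and hence the gradient sandwich $\hat{g}_i^{(r,L)} \le |g_i| \le \hat{g}_i^{(r,U)}$) hold, the first/second-order Taylor decomposition, the $1/n$ flooring to control the quadratic term via the choice of $B_0$, a Gaussian tail bound at the stopping round for the linear term, and a bound on the first round at which the termination condition must trigger, with doubling budgets absorbing the per-round overhead. Your observation that the random stopping round is harmless because $\bxt$ is built from fresh pulls and the variance bound is deterministic given the pre-termination data is exactly the justification the paper leaves implicit. The one step where you genuinely diverge is the bound on the termination-condition left-hand side: the paper splits the arms into two sets according to whether $g_i \ge 4L\sqrt{n}C_r$, uses $\alpha_i^{(r)} \ge g_i/(2H)$ on the large-gradient set and the uniform floor on the rest, arriving at $36n^3L^2C_r^2/B_r + 9H^2/(4B_r)$; you instead use the oscillation bound $\hat{g}_i^{(r,U)} - \hat{g}_i^{(r,L)} \le 2L\sqrt{n}C_r$ with $(a+b)^2 \le 2a^2+2b^2$, and then the exact cancellation $\sum_i (\hat{g}_i^{(r,L)})^2/\bigl(\alpha_i^{(r)}B_r\bigr) = S_r^2/B_r \le \|\nabla f(\bx)\|_1^2/B_r$ where $S_r := \sum_j \hat{g}_j^{(r,L)}$. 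This avoids the case analysis entirely (arms with $\hat{g}_i^{(r,L)}=0$ contribute nothing to that piece, so no division-by-zero care is needed beyond the $1/n$ floor for the slack piece), and your slack term $O\bigl(L^2n^4\log(nr^2/\delta)/B_r^2\bigr)$ is precisely what the paper's $36n^3L^2C_r^2/B_r$ becomes after substituting $C_r^2 \le 2n\log(12nr^2/\delta)/B_r$, so both routes are killed uniformly in $r$ by the same choice of $B_0$ and yield the same final bound. Your identity-based route is arguably cleaner for Theorem~\ref{thm:main} itself; the paper's explicit two-set split is the version that carries over to the variance-adaptive setting (Lemma~\ref{lem:adaConfBudget}), where four cases in the sizes of $g_i$ and $\sigma_i$ are needed and no analogous exact cancellation is available.
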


The proof of this theorem is relegated to \Cref{app:mainProof}, but we provide a sketch below. The proof begins by showing that the good event $\xi$, where the mean estimators $\{\hat{\mu}_i^{(r)}\}$ stay within their confidence intervals, occurs with high probability.
Then, we show that our error due to the second order terms is less than $\epsilon/2$ due to our choice of a sufficiently large $B_0$.
Next, we show that our sampling frequencies $\alpha_i^{(r)}$ are sufficiently large for important arms.
Critically, arms with small $g_i$ are sufficiently sampled from the uniform sampling in $\tilde{T}_i^{(r)}$, and arms with large $g_i$ will have large $\alpha_i^{(r)}$. 
Finally, we bound the number of rounds required until the termination condition is met.

In \Cref{app:l2tighter} we discuss how \Cref{alg:generalf} and its analysis can be improved for the special case where $f(\bx)=\|\bx\|_2^2$, where we are able to tighten our bounds on the lower order terms.
In this case we are able to improve our gradient bounds, defining $\hat{g}_i^{(r,L)} := (|\hat{\mu}_i^{(r)}| - 4C_r)_{+}$, where we no longer need to use the Lipschitzness of the entire gradient operator due to the separability of $f$.
More importantly, we are now able to subtract the mean of the second order error terms; concretely $\|\bxt^{(r)}\|_2^2 - \sum_{i=1}^n 1/\tilde{T}_i$ is now an unbiased estimator of $f(\bx)$.
This allows us to argue probabilistically that our second order errors average out, reducing our lower order sample complexity by a factor of $\sqrt{n}$.

\begin{cor} \label{cor:l2main}
The above variant of \Cref{alg:generalf} will with probability at least $1-\delta$ estimate $\|\bx\|_2^2$ to accuracy $\epsilon$ using a number of arm pulls at most
\begin{equation*}
    O\left(\frac{\|\nabla f(\bx)\|_1^2 \log(1/\delta)}{\epsilon^2} + \frac{n^{3/2}\log(n/\delta)}{\epsilon}\right).
\end{equation*}
\end{cor}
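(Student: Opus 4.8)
The plan is to mirror the proof of \Cref{thm:main}, reusing its first-order analysis essentially verbatim and replacing only the treatment of the second-order terms. First I would condition on the same good event $\xi$ of \Cref{lem:conf} on which all confidence intervals $|\hat\mu_i^{(r)}-\mu_i|\le C_r$ hold simultaneously, with $\P(\xi)\ge 1-\delta/3$. Because $f(\bx)=\|\bx\|_2^2$ is separable with $g_i=\nabla_i f(\bx)=2\mu_i$ depending only on $\mu_i$, the per-coordinate gradient bounds of \Cref{app:l2tighter} (whose lower bound is $(|\hat\mu_i^{(r)}|-4C_r)_+$) sandwich $|g_i|$ on $\xi$ without the $\sqrt n$ inflation that the global estimate $L\sqrt n\,C_r$ incurs in the general algorithm. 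This lets me carry over, essentially unchanged, the argument that the terminating budget $B_R$ needed to drive $\sum_i (\hat g_i^{(r,U)})^2/\tilde T_i^{(r)}$ below the threshold of Line~\ref{line:alg1TermCond} is $O(\|\nabla f(\bx)\|_1^2\log(1/\delta)/\epsilon^2)$, which produces the first term of the claimed bound.

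The crux is the second-order term. Writing $\bxt_i=\mu_i+\eta_i$ with $\eta_i\sim\CN(0,1/\tilde T_i)$ for the final allocation $\tilde T_i:=\tilde T_i^{(R)}$, the debiased estimator decomposes as
\[
\Bigl(\|\bxt\|_2^2-\textstyle\sum_i 1/\tilde T_i\Bigr)-f(\bx)=\underbrace{\sum_i g_i\eta_i}_{\text{first order}}+\underbrace{\sum_i\bigl(\eta_i^2-1/\tilde T_i\bigr)}_{=:S}.
\]
The first-order Gaussian term is controlled exactly as in \Cref{thm:main}: on the termination event its variance is $\sum_i g_i^2/\tilde T_i\le \epsilon^2/(16\log(6/\delta))$, so a Gaussian tail bound makes it at most $\epsilon/2$ with probability $\ge 1-\delta/3$. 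For $S$, each summand equals $(1/\tilde T_i)(\chi^2_{1,i}-1)$, a centered sub-exponential variable; conditionally on $\{\tilde T_i\}$ the terms are independent, so Bernstein's inequality yields a tail governed by the variance proxy $\sum_i 2/\tilde T_i^2$ and the scale $\max_i 1/\tilde T_i$. The key quantitative input is that the floor $\tilde T_i^{(r)}=\lceil(\alpha_i+1/n)B_r\rceil\ge B_r/n$ guarantees $\sum_i 1/\tilde T_i^2\le n^3/B_R^2$, so the standard deviation of $S$ is $O(n^{3/2}/B_R)$, in place of the deterministic worst-case $O(n^2/B_R)$ bound used for $\|\bxh-\bx\|_2^2$ in \Cref{thm:main}. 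Requiring this to be $\le\epsilon/2$ with probability $\ge 1-\delta/3$ forces only $B_0=\Theta(n^{3/2}\epsilon^{-1}\log(n/\delta))$, which is precisely the source of the improved second term and of the $\sqrt n$ saving advertised in the text.

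I would close by bounding the number of rounds, as in \Cref{thm:main}. Since budgets double, termination occurs at the first round with $B_r\gtrsim\max\bigl(\|\nabla f(\bx)\|_1^2\log(1/\delta)/\epsilon^2,\,B_0\bigr)$, and the total sample count $\sum_{r\le R}\lceil B_r/n\rceil\,n+\sum_i\tilde T_i^{(R)}=O(B_R)$ is within a constant factor of this terminating budget; a union bound allocating $\delta/3$ to each of $\xi$, the first-order term, and $S$ then yields the stated complexity.

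The main obstacle I anticipate is making the concentration of $S$ rigorous under adaptivity, since both the allocation $\{\tilde T_i\}$ and the terminating round $R$ are data-dependent and so Bernstein's inequality cannot be applied unconditionally. I would resolve this by taking the final pulls that construct $\bxt$ to be fresh samples drawn after the stopping decision; then, conditionally on $\{\tilde T_i\}$, the errors $\eta_i$ are independent $\CN(0,1/\tilde T_i)$ and independent of the event fixing the allocation, so the debiasing is conditionally unbiased and the tail bound applies. Crucially, because the floor guarantees $\tilde T_i\ge B_0/n$ for \emph{every} realization, the conditional tail bound is uniform in $\{\tilde T_i\}$ and integrates to an unconditional bound with no union over rounds; verifying this uniformity, and the conditional unbiasedness of the random correction $\sum_i 1/\tilde T_i$, is the step that most needs care.
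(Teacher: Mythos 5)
Your proposal is correct and follows essentially the same route as the paper's own proof: the same debiased estimator $\|\bxt\|_2^2-\sum_i 1/\tilde{T}_i$, the same decomposition into a first-order Gaussian term plus the centered sum $\sum_i(\eta_i^2-1/\tilde{T}_i)$, the same sub-exponential (Bernstein-type) tail bound with variance proxy $\sum_i \tilde{T}_i^{-2}\le n^3/B_r^2$ forcing only $B_0=\Theta\bigl(n^{3/2}\log(n/\delta)\epsilon^{-1}\bigr)$, and the same separability-based tightening of the gradient bounds and round-doubling count. Your explicit treatment of adaptivity via fresh post-termination pulls is a detail the paper leaves implicit, but it is exactly what the algorithm's structure provides, so it does not change the argument.
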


\subsection{Gain of adaptivity}
Examining the result of \Cref{thm:main}, we see that the first term stemming from the first order Taylor expansion will dominate when $H^2 > Ln^2 \epsilon$, where the problem complexity measure $H$ is defined as $H:= \|\nabla f(\bx)\|_1$.
This scaling with $n$ is to be expected, as $H$ is the sum of $n$ terms.
This naturally occurs in the high accuracy regime when $\epsilon\rightarrow 0$, where we show that a sample complexity of $\Theta(H^2 \log(1/\delta) \epsilon^{-2})$ is the optimal dependence on problem parameters, as formalized in our lower bound in \Cref{sec:lb} (\Cref{prop:lowerbound}).

Comparing against the uniform sampling baseline, we see that taking $T_i = T/n$ would require a budget of approximately $n\|\nabla f(\bx)\|_2^2 \log(2/\delta)\epsilon^{-2}$ pulls.
Hence, the ratio between the number of samples required by this uniform baseline and the number of samples required by our adaptive scheme scales as 
\begin{equation}
    \texttt{gain}(f;\bx) := \frac{n\|\nabla f(\bx)\|_2^2}{\|\nabla f(\bx)\|_1^2},
\end{equation}
which is bounded between 1 and $n$.
This lower limit is achieved if all the partial derivatives are equal, as then there is no imbalance in the gradient to exploit.
On the other extreme, if our gradient has only one nonzero entry we see that uniform sampling requires a factor of $n$ more samples than its adaptive counterpart.
Studying the special case of $\ell_p$ norms, we see that $\texttt{gain}(\|\cdot\|_1;\bx)=1$ as all coordinates matter equally, and $\texttt{gain}(\|\cdot\|_2^2;\bx)=n\|\bx\|_2^2/\|\bx\|_1^2\ge 1$.
In accordance with intuition, in \Cref{app:lp} we show that when the coordinates of $\bx$ are not all identical the gain of adaptivity in estimating $\|\bx\|_p^p$ is a monotonically increasing function of $p \in [1,\infty)$.

\section{Algorithmic Extensions}\label{sec:algExt}
We now discuss practical extensions to \Cref{alg:generalf}, showing how it can be generalized to accommodate broader scenarios of practical interest.

\subsection{Thresholding variant}

We first extend our algorithm to the setting where our objective is to minimize the number of samples required to determine whether our function value $f(\bx)$ is above a given threshold $\tau$ or not, i.e. to determine whether $f(\bx)> \tau$.
This is a useful primitive in settings where the desired task is not estimation but rather decision making, for example in certifying software reliability.
This thresholding objective can be thought of as providing an ``adaptive'' $\epsilon$, where fewer samples can be used if $f(\bx)$ is far from $\tau$.
Our proposed algorithm for this setting proceeds in a similar manner to \Cref{alg:generalf}, but at each round now explicitly computes an estimate $f(\bxt^{(r)})$ to check whether the estimated gap $|f(\bxt^{(r)}) - \tau|$ is large enough relative to a bound on the error $|f(\bxt^{(r)}) - f(\bx)|$.

\begin{algorithm}[h]
\begin{algorithmic}[1]
\caption{\label{alg:thresholding} \texttt{Adaptive thresholding}}
\State \textbf{Input: } arms $[n]$, function $f$, threshold $\tau$, error probability $\delta$
\For{$r=1,2,\hdots$}
\State Set round budget $B_r = n2^r$
\State Pull each arm $\lceil B_r/n\rceil$ times
\State Use all previous samples to construct $\bxh^{(r)}$
\State Compute bounds $\{\hat{g}_i^{(r,L)},\hat{g}_i^{(r,U)}\}$ as in \eqref{eq:gradEst}
\State Construct sampling frequencies $\alpha_i^{(r)} \propto \hat{g}_i^{(r,L)}$ 
\State Set $\tilde{T}_i^{(r)} :=\lceil (\alpha_i+\frac{1}{n})B_r\rceil$
\State Pull arm $i$ $\tilde{T}_i^{(r)}$ times to construct $\bxt^{(r)}$
\If{$C_r^{(f)} \le |f(\bxt^{(r)})-\tau| $}
\State Define estimate $\bxt:= \bxt^{(r)}$
\State \Return{$f\left(\bxt\right) > \tau$}
\EndIf
\EndFor
\end{algorithmic}
\end{algorithm}

For this adaptive thresholding algorithm we define and utilize confidence intervals $C^f_r$ for the function evaluation estimators $f(\bxt^{(r)})$ such that $|f(\bxt^{(r)}) - f(\bx)|\le C^f_r$ with high probability, where
\begin{align*}
C^f_{r,1} &:= \sqrt{2\log(24r^2/\delta)\sum_i \frac{\left(\hat{g}_i^{(r,U)}\right)^2}{\tilde{T}_i^{(r)}}}\\
C^f_{r,2}&:= \frac{ Ln \log (24nr^2/\delta)}{\tilde{B}_r}\\
C^f_r &:= C_{r,1} + C_{r,2}, \numberthis \label{eq:cThresh}
\end{align*}
where $C^f_{r,1}$ is our confidence interval on our first order error in round $r$, and $C^f_{r,2}$ is our confidence interval on our higher order error in round $r$.
For simplicity we assume that $f(\bx)\neq \tau$, but \Cref{alg:thresholding} can be modified to output $\perp$ in borderline cases where $f(\bx)$ is very close to $\tau$.
Concretely, the algorithm can output $\perp$ if it has not yet terminated and $C^f_r \le \epsilon/2$ for some input tolerance $\epsilon$ to indicate that $|f(\bx)-\tau| \le \epsilon$ with high probability.
This is similar to the indifference zone formulation in the operations research community \cite{kim2001fully}.
We provide the following theorem regarding the performance of \Cref{alg:thresholding}, where $\tilde{O}$ suppresses $\log \log$ terms in poly$\left(n,\delta,\|\nabla f(\bx)\|_1, (f(\bx)-\tau)^{-1} \right)$.

\begin{thm} \label{thm:thresholding}
\Cref{alg:thresholding} succeeds with probability at least $1-\delta$ and outputs whether $f(\bx)>\tau$, using 
\begin{equation*}
    \tilde{O}\left( \frac{\|\nabla f(\bx)\|_1^2\log(1/\delta)}{(f(\bx)-\tau)^2} + \frac{n^2L\log(n/\delta)}{|f(\bx)-\tau|}\right)
\end{equation*}
arm pulls, assuming that $f(\bx)\neq \tau$.
\end{thm}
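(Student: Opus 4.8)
The plan is to reuse the two-phase structure of the proof of \Cref{thm:main}: first establish correctness on a global good event $\xi$, then bound the number of rounds to termination. First I would define $\xi$ as the intersection of three families of events, holding simultaneously for every round $r$: (i) each mean estimator feeding $\bxh^{(r)}$ lies within its width-$C_r$ confidence interval, so that the gradient bounds satisfy $\hat g_i^{(r,L)} \le |g_i| \le \hat g_i^{(r,U)}$; (ii) the first-order deviation $\nabla f(\bx)^\top(\bxt^{(r)}-\bx)$ obeys its Gaussian tail bound; and (iii) the mean estimates forming $\bxt^{(r)}$ stay within their confidence widths, controlling $\|\bxt^{(r)}-\bx\|_2$. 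By \Cref{lem:conf}, a Gaussian tail bound, and a union bound over arms and rounds — the $r^2$ factors render $\sum_r$ summable and the constants in \eqref{eq:cThresh} and in \Cref{lem:conf} split the failure budget — we obtain $\P(\xi) \ge 1-\delta$.

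Next I would verify that on $\xi$ the quantity $C_r^f$ is a valid confidence width, $|f(\bxt^{(r)})-f(\bx)| \le C_r^f$ for all $r$, by splitting $f(\bxt^{(r)})-f(\bx)$ into first- and second-order pieces via the Taylor bound preceding the theorem. The first-order piece is controlled by $C_{r,1}^f$: conditionally it is $\CN(0,\sum_i g_i^2/\tilde T_i^{(r)})$, and $g_i \le \hat g_i^{(r,U)}$ lets me replace $g_i$ by $\hat g_i^{(r,U)}$ inside the variance. The second-order piece is at most $\tfrac{L}{2}\|\bxt^{(r)}-\bx\|_2^2$, which the accumulated uniform sampling (at least $\tilde B_r$ pulls per arm) drives below $C_{r,2}^f$. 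Correctness is then immediate: when the test $C_r^f \le |f(\bxt^{(r)})-\tau|$ fires, the triangle inequality gives $|f(\bxt^{(r)})-f(\bx)| \le |f(\bxt^{(r)})-\tau|$, so $f(\bx)$ and $f(\bxt^{(r)})$ lie on the same side of $\tau$ (using $f(\bx)\neq\tau$) and the returned comparison correctly reports whether $f(\bx)>\tau$.

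For the sample complexity, writing $\Delta := |f(\bx)-\tau|$, I would show that on $\xi$ termination is forced once $2C_r^f \le \Delta$: the reverse triangle inequality gives $|f(\bxt^{(r)})-\tau| \ge \Delta - C_r^f \ge C_r^f$, which is exactly the stopping test. It therefore suffices to locate the first round $r^\star$ at which the upper bound on $C_r^f$ (deterministic on $\xi$) drops below $\Delta/2$, after which the doubling schedule makes the total pull count $O(B_{r^\star})$, since each round costs $O(B_r)$ pulls and the budgets sum geometrically. Forcing $C_{r,2}^f \le \Delta/4$ requires $B_r \gtrsim n^2 L \log(nr^2/\delta)/\Delta$, the source of the second term, while forcing $C_{r,1}^f \le \Delta/4$ requires $\sum_i (\hat g_i^{(r,U)})^2/\tilde T_i^{(r)} \lesssim \Delta^2/\log(r^2/\delta)$.

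The hard part, which I would import directly from the \Cref{thm:main} analysis, is showing that this last sum approaches its ideal value $\|\nabla f(\bx)\|_1^2/B_r$ once $B_r$ is large. This requires arguing that for important arms the allocation $\alpha_i^{(r)} \propto \hat g_i^{(r,L)}$ is close to the optimal $|g_i|/\|\nabla f(\bx)\|_1$, because $C_r \to 0$ forces $\hat g_i^{(r,L)},\hat g_i^{(r,U)} \to |g_i|$, while for unimportant arms the uniform $1/n$ floor already supplies enough samples that their inflated $\hat g_i^{(r,U)}$ contributes negligibly. This yields $B_r \gtrsim \|\nabla f(\bx)\|_1^2 \log(r^2/\delta)/\Delta^2$ for the first term. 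Combining the two thresholds and substituting $r^\star = O(\log \mathrm{poly}(n,1/\delta,\|\nabla f(\bx)\|_1,1/\Delta))$ into the $\log(r^2/\delta)$ factors turns them into the $\log\log$ terms absorbed by $\tilde O$, producing the stated bound.
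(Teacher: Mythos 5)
Your proposal is correct and follows essentially the same route as the paper's proof: a good event under which both the mean estimators and the function-value estimators $f(\bxt^{(r)})$ respect their confidence widths $C_r^f$, correctness via the triangle inequality when the stopping test fires, forced termination once $2C_r^f \le |f(\bx)-\tau|$, the important/unimportant arm split imported from the \Cref{thm:main} analysis to bound $C_{r,1}^f$, and the $\log(r^2/\delta)$-to-$\log\log$ conversion under the doubling schedule. The only deviations are organizational (you union-bound one global event where the paper conditions $\tilde{\xi}$ on $\xi$ and uses $\P(E\mid\xi)\le 2\P(E)$, and you argue same-sidedness directly rather than by contradiction), neither of which changes the substance.
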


The proof of this theorem, relegated to \Cref{app:thresh}, proceeds similarly to that of \Cref{thm:main}, with some interesting nuances stemming from the fact that our $\epsilon$ is now unknown.
Since we do not a priori know what accuracy to estimate $f(\bx)$ to, we need to sample according to our estimated optimal frequencies $\alpha_i^{(r)}$ in every round to generate $f(\bxt^{(r)})$ to estimate the necessary accuracy as $|f(\bxt^{(r)}) - \tau|$. 
Proceeding in rounds of doubling budgets, we analyze our sample complexity by bounding the number of rounds until our sampling frequencies are sufficiently accurate, our estimate of $\epsilon$ is sufficiently tight, and the number of samples is sufficiently large to yield the desired error.

\subsection{Adapting to unknown variances}

As stated, Algorithms 1 and 2 assume that the additive noise associated with the arm pulls is i.i.d. $\CN(0,1)$.
In many scenarios of practical interest however, noise variance may vary dramatically across arms.
In software reliability assessment for example, the arm reward distributions are Bernoulli with mean $\mu_i$, where the variances are a function of the unknown arm rewards, with $\sigma_i^2=\mu_i(1-\mu_i)$.
More generally, these unknown variances can be thought of as nuisance parameters, which we need to estimate in order to efficiently run our algorithm but do not need to output.
To adapt to this scenario we generalize our algorithm to accommodate bounded distributions by utilizing an empirical Bernstein-type bound \cite{maurer2009empirical}, subsuming the software reliability assessment setting.
Our doubly adaptive algorithm adapts to both the unknown arm variances $\sigma_i$ as well as the unknown gradient $\nabla f(\bx)$.
For simplicity of exposition, in this section we assume that our function is separable, i.e. $f(\bx) = \sum_{i=1}^n f_i(\mu_i)$, but all ideas extend to the general case as discussed at the end of this section.

In order to accommodate these unknown variances, our new algorithm utilizes variance adaptive confidence intervals of width $C_\mu(i,r)$ for our sample mean estimators $\hat{\mu}_{i}^{(r)}$ of $\mu_i$ in round $r$, defined in \eqref{eq:cmu}.
To quantify the improved rate of decay of our confidence intervals, and to sample coordinates with high variance more, we utilize confidence intervals for our sample standard deviation estimators $\hat{\sigma}_i^{(r)}$.
We construct these confidence intervals $C_\sigma(i,r)$ for our estimator $\hat{\sigma}_{i}^{(r)}$ of $\sigma_i$ in round $r$ as below:
\begin{align}
    \hspace{-.25cm}C_\sigma(i,r) &:= \sqrt{\frac{2\log (8nr^2/\delta)}{T_i^{(r)}-1}} \label{eq:csigma}\\
    \hspace{-.25cm}C_\mu(i,r) &:= \hat{\sigma}_i^{(r)}\sqrt{\frac{2 \log (32nr^2/\delta)}{T_i^{(r)}}} + \frac{7 \log (32nr^2/\delta)}{3(T_i^{(r)}-1)}.\hspace{-.05cm} \label{eq:cmu}
\end{align}
In accordance with our variance adaptive confidence intervals, we redefine our gradient estimators as 
\begin{align} \label{eq:gradEstVar}
    \hat{g}_i^{(r,L)} &:= \min_{|y_i - \hat{\mu}_i^{(r)}| \le C_\mu(i,r) } \left| f_i'(y_i)\right|, \nonumber\\
    \hat{g}_i^{(r,U)} &:= \max_{|y_i - \hat{\mu}_i^{(r)}| \le C_\mu(i,r) } \left| f_i'(y_i)\right|.
\end{align}
We note that we have similar bounds for our sample standard deviation estimates, as $\hat{\sigma}_i^{(r,L)} := \big(\hat{\sigma}_i^{(r)} - C_\sigma(i,r) \big)_{+} $ and $\hat{\sigma}_i^{(r,U)} = \hat{\sigma}_i^{(r)} + C_\sigma(i,r)$.
With these in hand we are able to define \Cref{alg:adaConf}, which operates in a manner similar to \Cref{alg:generalf}.
It proceeds in rounds of increasing accuracy, first coarsely estimating all the means and standard deviations via uniform sampling then refining estimates on higher variance arms by sampling proportionally to a (pessimistic) estimate of their standard deviation.
At the end of each round termination sampling allocations $\tilde{T}_i^{(r)}$ are constructed, and the algorithm samples according to these allocations and terminates if the following termination condition is met: \vspace{-.15cm}
\begin{align*}
    \hspace{-.1cm}\sum_{i=1}^n  \frac{\left(\hat{g}_i^{(r,U)} \hat{\sigma}_i^{(r,U)}\right)^2}{\tilde{T}_i^{(r)}} 
    &+ \frac{\epsilon}{3}\max_i \frac{\hat{g}_i^{(r,U)}}{\tilde{T}_i^{(r)}} \le \frac{\epsilon^2}{8\log(8/\delta)}. \hspace{-.1cm}\numberthis\label{eq:termCondAlg}
\end{align*}

\begin{algorithm}[h]
\begin{algorithmic}[1]
\caption{\label{alg:adaConf} \texttt{Adaptive function approximation under heteroskedastic noise}}
\State \textbf{Input: } arms $[n]$, function $f$, target accuracy $\epsilon$, error probability $\delta$
\State Set $B_0 = \frac{2n^2L\log(8n/\delta)}{\epsilon}$
\For{$r=1,2,\hdots$}
\State Set round budget $B_r = B_02^r$
\State Pull each arm $\lceil B_r/n\rceil$ times
\State Compute $\bxh^{(r)},\bsh^{(r)}$
\State Construct $\{\hat{g}_i^{(r,L)},\hat{g}_i^{(r,U)}\}$ as in \eqref{eq:gradEstVar}
\State Compute sampling frequencies $\beta_i^{(r)} \propto \hat{\sigma}_i^{(r,L)}$
\State Pull arm $i$  $T_i^{(r)} = \lceil (\beta_i^{(r)} + 1/n)B_r\rceil$ times
\State Update $\bxh^{(r)},\bsh^{(r)}$, recompute $\{\hat{g}_i^{(r,L)},\hat{g}_i^{(r,U)}\}$
\State Compute frequencies $\alpha_i^{(r)}\propto \hat{g}_i^{(r,L)} \hat{\sigma}_i^{(r,L)}$ 
\State Set $\tilde{T}_i^{(r)} :=\lceil (\alpha_i+\frac{1}{n}) B_r\rceil$
\If{ Termination condition in \eqref{eq:termCondAlg} is met}
\State Pull arm $i$ $\tilde{T}_i^{(r)}$ times to construct $\bxt$
\State \Return{$f\left(\bxt\right)$}
\EndIf
\EndFor
\end{algorithmic}
\end{algorithm}
\vspace{-.35cm}

We are able to state the following Theorem regarding the performance of  \Cref{alg:adaConf}, expressing our sample complexity in big Oh notation with respect to $\epsilon$ for clarity, but detailing all the lower order dependencies on $\{g_i\},\{\sigma_i\},n,\epsilon,$ and $\delta$ with explicit constants in \Cref{app:adavar} (\Cref{lem:adaConfBudget}).
\begin{thm} \label{thm:algbounded}
\Cref{alg:adaConf} successfully outputs $f(\bxt)$ such that $|f(\bxt)-f(\bx)|\le \epsilon$ with probability at least $1-\delta$ using a number of arm pulls at most
\begin{equation*}
    O\left( \frac{\left(\sum_i g_i \sigma_i\right)^2 \log(1/\delta)}{\epsilon^2}\right) + o\left(\epsilon^{-2}\right).
\end{equation*}
\end{thm}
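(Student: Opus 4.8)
The plan is to mirror the three–part structure used for \Cref{thm:main}, inserting a second layer of confidence-interval bookkeeping to handle the unknown variances. First I would define the \emph{good event} $\xi$ on which every mean and standard-deviation confidence interval holds simultaneously, i.e. $|\hat\mu_i^{(r)}-\mu_i|\le C_\mu(i,r)$ and $|\hat\sigma_i^{(r)}-\sigma_i|\le C_\sigma(i,r)$ for all arms $i\in[n]$ and all rounds $r$, with $C_\sigma,C_\mu$ as in \eqref{eq:csigma}--\eqref{eq:cmu}. The interval $C_\sigma$ follows from a concentration bound for the empirical standard deviation and $C_\mu$ from the empirical Bernstein inequality of \cite{maurer2009empirical}; the explicit constants $8nr^2$ and $32nr^2$ inside the logarithms are chosen so that a union bound over $i$ and $r$ (summing $\sum_r r^{-2}<2$) yields $\P(\xi^c)\le\delta/2$. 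On $\xi$ the monotonicity of the optimizations in \eqref{eq:gradEstVar} and the truncation in the standard-deviation bounds give the sandwiching $\hat g_i^{(r,L)}\le g_i\le\hat g_i^{(r,U)}$ and $\hat\sigma_i^{(r,L)}\le\sigma_i\le\hat\sigma_i^{(r,U)}$ for every $i,r$.

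Second, for \textbf{correctness} I would condition on $\xi$ and on termination at the (random) round $R$ with allocation $\{\tilde T_i^{(R)}\}$. Using separability, a second-order Taylor expansion of each $f_i$ splits the error as $f(\bxt)-f(\bx)=\sum_i f_i'(\mu_i)(\tilde\mu_i-\mu_i)+\tfrac12\sum_i f_i''(\zeta_i)(\tilde\mu_i-\mu_i)^2$, with the remainder bounded by $\tfrac L2\sum_i(\tilde\mu_i-\mu_i)^2$. The crucial observation is that the final pulls in Line 14 are \emph{fresh} and independent of the history determining $R$ and $\{\tilde T_i^{(R)}\}$, so I can concentrate on them conditionally. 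The higher-order term is forced below $\epsilon/2$ with high probability by the choice of $B_0$: since every allocation obeys $\tilde T_i^{(R)}\ge B_0/n$, a per-arm Bernstein bound on the fresh samples (union bounded over the $n$ arms, which accounts for the $n$ inside the logarithm of $B_0=2n^2L\log(8n/\delta)/\epsilon$) keeps $\tfrac L2\sum_i(\tilde\mu_i-\mu_i)^2$ small. For the first-order term, Bernstein's inequality controls $\sum_i f_i'(\mu_i)(\tilde\mu_i-\mu_i)$ by $\sqrt{2\log(8/\delta)\sum_i g_i^2\sigma_i^2/\tilde T_i^{(R)}}$ plus a sub-exponential correction of order $\max_i|g_i|/\tilde T_i^{(R)}$. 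On $\xi$ the true quantities are dominated by $\hat g_i^{(R,U)}\hat\sigma_i^{(R,U)}$ and $\hat g_i^{(R,U)}$, so the termination test \eqref{eq:termCondAlg} is exactly calibrated to make this first-order error at most $\epsilon/2$ (its two terms matching the variance and the sub-exponential scale); summing the two pieces gives $|f(\bxt)-f(\bx)|\le\epsilon$ on an event of probability at least $1-\delta$.

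Third and last comes the \textbf{sample-complexity} bound, which I expect to be the main obstacle. Here I would identify the smallest round $r^\star$ at which $\xi$ guarantees termination. Once $B_r$ is large enough that $C_\mu(i,r)$ and $C_\sigma(i,r)$ shrink below constant multiples of $|g_i|$ and $\sigma_i$ on the important arms, the recomputed frequencies satisfy $\alpha_i^{(r)}\propto\hat g_i^{(r,L)}\hat\sigma_i^{(r,L)}\gtrsim g_i\sigma_i/\sum_j g_j\sigma_j$, so $\tilde T_i^{(r)}\gtrsim\alpha_i^{(r)}B_r$ drives the first term of \eqref{eq:termCondAlg} down to $\sum_i g_i^2\sigma_i^2/\tilde T_i^{(r)}\lesssim(\sum_j g_j\sigma_j)^2/B_r$. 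Requiring this to fall below $\epsilon^2/(8\log(8/\delta))$ fixes $B_{r^\star}=O\!\big((\sum_i g_i\sigma_i)^2\log(1/\delta)\,\epsilon^{-2}\big)$ up to the additive floor coming from $B_0$ and the uniform $1/n$ term. Because the budgets double, the total number of pulls across Lines 5, 9 and 14 is a geometric sum bounded by a constant multiple of $B_{r^\star}$, giving the stated leading term plus an $o(\epsilon^{-2})$ remainder absorbing $B_0\propto n^2L\log(8n/\delta)/\epsilon$ and the lower-order contributions. The delicate part is the two-sided control of \emph{both} nuisance estimates at once: the pessimistic lower bounds $\hat g^{(r,L)},\hat\sigma^{(r,L)}$ must not be so small that important arms are undersampled (handled by the intermediate variance-proportional phase in Line 9, which sharpens $\hat\sigma_i$ precisely on the noisy arms before $\alpha^{(r)}$ is formed), while the optimistic upper bounds $\hat g^{(r,U)},\hat\sigma^{(r,U)}$ must not be so loose that termination is delayed. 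Quantifying the round at which both gaps close simultaneously, including the truncated arms with $g_i\approx0$ or $\sigma_i\approx0$ that are carried solely by the uniform floor, is where the bulk of the technical work lies.
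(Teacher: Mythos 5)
Your proposal is correct and follows essentially the same route as the paper's proof: the same good event $\xi$ over both mean and standard-deviation confidence intervals with a $\delta/2$ union bound, the same first-order/second-order error split (Bernstein for the linear term calibrated to the two pieces of the termination condition \eqref{eq:termCondAlg}, and the choice of $B_0$ for the quadratic remainder), the same on-$\xi$ dominance of the empirical stopping rule over its true-parameter counterpart, and the same round-counting argument with doubling budgets yielding the leading term $O\bigl(\bigl(\sum_i g_i\sigma_i\bigr)^2\log(1/\delta)\epsilon^{-2}\bigr)$. The ``delicate part'' you flag at the end is precisely what the paper executes via a four-way case split of arms according to whether $g_i$ and $\sigma_i$ exceed constant multiples of their confidence widths, with the uniform floor and the variance-proportional phase covering the truncated arms; all of those contributions land in the $o(\epsilon^{-2})$ remainder, exactly as you anticipate.
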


We prove \Cref{thm:algbounded} in \Cref{app:adavar}, beginning by showing that the estimators $\hat{\mu}_i^{(r)}$ and $\hat{\sigma}_i^{(r)}$ stay within their confidence intervals.
This yields sampling frequencies $\beta_i^{(r)}$ that sample high variance arms sufficiently, which in turn results in $\alpha_i^{(r)}$ that ensure that arms with large $g_i$ and $\sigma_i$ are sampled enough.
We then show that once sufficiently many rounds have elapsed the algorithm will terminate and output an $\epsilon$ accurate estimate.
Additionally, while the sampling frequencies $\alpha_i^{(r)}$ in \Cref{alg:adaConf} are constructed with simplicity and asymptotic optimality in mind,
we discuss further optimizations in \Cref{app:adavar} to improve lower order terms in $\epsilon$. 

For the sake of simplicity, in this section we assumed that our function $f$ is separable.
Our algorithm can be modified to accommodate general $f$ by adding into our $\beta_i^{(r)}$ sampling frequencies a term proportional to $(\hat{\sigma}_i^{(r,L)})^2$ to minimize the maximum confidence interval width, as in \cite{antos2010active}.
While here we handled the case of bounded noise distributions, our results can additionally be extended to the case of additive gaussian noise with unknown noise variances $\sigma_i^2$.

\subsection{Lower bound} \label{sec:lb} 

As previously discussed, our algorithms appears to be order optimal in terms of all relevant problem parameters in the high accuracy regime.
To this end we provide a matching lower bound for the linear setting where $f(\bx) = \b{g}^\top \bx$, a plug-in estimator is used, and our arm pulls for arm $i$ are corrupted by additive Gaussian noise with variance $\sigma_i^2$.

\begin{prop}\label{prop:lowerbound}
In order to estimate $f(\bx) = \b{g}^\top \bx$ to accuracy $\epsilon$ with error probability at most $\delta$, where pulls from arm $i$ are corrupted by independent additive Gaussian noise with variance $\sigma_i^2$, a plug-in estimator based on any static sampling allocation $\{T_i\}$ where $\sum_{i=1}^n T_i = T$ requires
\begin{equation*}
    T\ge \left(\sum_i |g_i| \sigma_i\right)^2 \epsilon^{-2} \log(1/4\delta).
\end{equation*}
\end{prop}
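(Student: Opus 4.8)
The plan is to exploit the fact that, for a static allocation together with a plug-in estimator, the estimation error is \emph{exactly} Gaussian, so the whole argument reduces to two elementary inequalities and no change-of-measure machinery is needed.

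First I would write down the law of the estimator. After $T_i$ pulls of arm $i$ the sample mean satisfies $\hat\mu_i \sim \CN(\mu_i,\sigma_i^2/T_i)$ independently across $i$, so the plug-in estimator $f(\bxh)=\sum_i g_i\hat\mu_i$ obeys
\[
f(\bxh)-f(\bx)\sim \CN\!\left(0,\,V\right),\qquad V:=\sum_{i=1}^n \frac{g_i^2\sigma_i^2}{T_i}.
\]
Crucially $V$ is independent of $\bx$, so the success probability is identical on every instance. Writing $Z\sim\CN(0,1)$, the requirement $\P(|f(\bxh)-f(\bx)|\le\epsilon)\ge 1-\delta$ is, by symmetry of the Gaussian, exactly $\P(Z>\epsilon/\sqrt V)\le\delta/2$.

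Next I would turn this tail condition into an \emph{upper} bound on $V$ via Gaussian anti-concentration, and separately produce a matching \emph{lower} bound on $V$ in terms of $T$ via Cauchy--Schwarz. For the first, I would invoke the standard tail lower bound $\P(Z>x)\ge \tfrac14 e^{-x^2}$ for $x\ge 0$ (verifiable directly). Applied at $x=\epsilon/\sqrt V$ this forces $\tfrac14 e^{-\epsilon^2/V}\le\delta/2$, hence $V\le \epsilon^2/\log(1/2\delta)$. For the second, Cauchy--Schwarz applied to the splitting $|g_i|\sigma_i=\tfrac{|g_i|\sigma_i}{\sqrt{T_i}}\cdot\sqrt{T_i}$ gives
\[
\left(\sum_i |g_i|\sigma_i\right)^2 \le \left(\sum_i \frac{g_i^2\sigma_i^2}{T_i}\right)\left(\sum_i T_i\right) = V\,T,
\]
i.e. $V\ge (\sum_i|g_i|\sigma_i)^2/T$, valid for every allocation (with the convention $g_i^2\sigma_i^2/T_i=\infty$ when $T_i=0$ and $g_i\sigma_i\neq 0$).

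Chaining the two bounds yields $(\sum_i|g_i|\sigma_i)^2/T\le V\le \epsilon^2/\log(1/2\delta)$, and rearranging gives $T\ge(\sum_i|g_i|\sigma_i)^2\epsilon^{-2}\log(1/2\delta)\ge(\sum_i|g_i|\sigma_i)^2\epsilon^{-2}\log(1/4\delta)$, which is the claim (the stated $\log(1/4\delta)$ being the weaker of the two). The one genuinely delicate point is the \emph{direction} of the Gaussian bound: a concentration inequality would point the wrong way, so I specifically need an anti-concentration lower bound on the tail, $\P(Z>x)\ge\tfrac14 e^{-x^2}$, rather than any upper tail estimate. The Cauchy--Schwarz step is precisely what identifies the Neyman-type optimal allocation $T_i\propto|g_i|\sigma_i$ (the equality case), so the bound is tight over static plug-in schemes.
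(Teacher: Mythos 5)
Your proof is correct, and its core is the same as the paper's: both arguments use the fact that for a static allocation the plug-in error is exactly $\CN(0,V)$ with $V=\sum_i g_i^2\sigma_i^2/T_i$, and both apply the Gaussian anti-concentration bound $\P(Z>x)\ge \tfrac{1}{4}e^{-x^2}$ (pointing in the correct, lower-tail direction, as you note) to conclude that success forces $V\le \epsilon^2/\log(1/4\delta)$; your two-sided treatment gives the marginally sharper $\epsilon^2/\log(1/2\delta)$, which you correctly relax to the stated constant. Where you genuinely differ is the step that turns this variance constraint into a bound on $T$: the paper writes $T_i=\alpha_i T$ and computes $\min_{\ba\in\Delta^n}\sum_i g_i^2\sigma_i^2/(\alpha_i T)=\bigl(\sum_i|g_i|\sigma_i\bigr)^2/T$ via a Lagrangian and Sion's minimax theorem, identifying the optimal allocation $\alpha_i\propto|g_i|\sigma_i$ and plugging it back in; you instead obtain the same inequality $V\ge\bigl(\sum_i|g_i|\sigma_i\bigr)^2/T$ for every allocation in one line by Cauchy--Schwarz applied to the splitting $|g_i|\sigma_i=\bigl(|g_i|\sigma_i/\sqrt{T_i}\bigr)\cdot\sqrt{T_i}$. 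The two are mathematically equivalent (Cauchy--Schwarz is precisely what makes the Neyman-type allocation optimal, and its equality case recovers $T_i\propto|g_i|\sigma_i$), but your route is more elementary: it needs no constrained-optimization or minimax machinery, and it handles the degenerate $T_i=0$ case cleanly with your stated convention. What the paper's route buys in exchange is an explicit formula for the extremal allocation, which is the quantity the authors reuse when arguing that their adaptive algorithm matches this lower bound.
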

We conjecture that this lower bound extends beyond linear functions to general $f$ with $L$-Lipschitz gradients.
This bound should be expected to hold, as in the high accuracy regime where $\epsilon\to0$ the function is essentially linear within the region of estimation.
Similar techniques can be used to extend this lower bound to static allocations for such $f$ when $\epsilon$ is sufficiently small and $T_i=\omega(\epsilon^{-1})$, as then the error in the linear approximation can be bounded with probability at least $1-\delta$ as $\frac{L}{2}\|\bxh-\bx\|_2^2 \ge \frac{L}{2} \max_i T_i^{-1} \log(1/\delta) = o(\epsilon)$.
The technical difficulty in providing an information theoretic lower bound in this more general setting is that the number of pulls per arm cannot be independently bounded, as the estimation error accumulates jointly across the coordinates.
This lower bound, proved in \Cref{app:lb}, shows that our scheme's dependence on all relevant problem parameters $\{g_i\}, \{\sigma_i\},\epsilon,$ and $\delta$, is order optimal in the linear case.

\section{Numerical Experiments} \label{sec:apps}

We corroborate our theoretical claims with numerical simulations to show the practical utility and desirable finite sample performance of our scheme.
Note that, while not shown in these plots, one significant advantage of our schemes over optimisim-based ones is that our schemes offer valuable fixed-confidence stopping conditions.
Additional experimental details can be found in \Cref{app:expdetails}.

\subsection{Estimating $\|Ax\|_2^2$} 
Here we tackle the problem of estimating the power of a matrix vector product, which arises in statistical testing and Fourier signal processing applications.
In our simulations we compare our \Cref{alg:generalf} with a uniform sampling benchmark, which pulls all arms $T/n$ times for a total budget of $T$.
Due to the nonlinearity of $f(\bx) = \|\bx\|_2^2$, we cannot compare with the software reliability estimation algorithms or the optimism-based regret minimization ones.
We are able to construct a modified optimism-based algorithm for this setting, which we discuss and show our improvement over in \Cref{app:extraexp}.

\begin{figure}[h]
 \vspace{-.2cm}
    \centering
    \begin{subfigure}[b]{0.47\linewidth}
    \includegraphics[width=\linewidth]{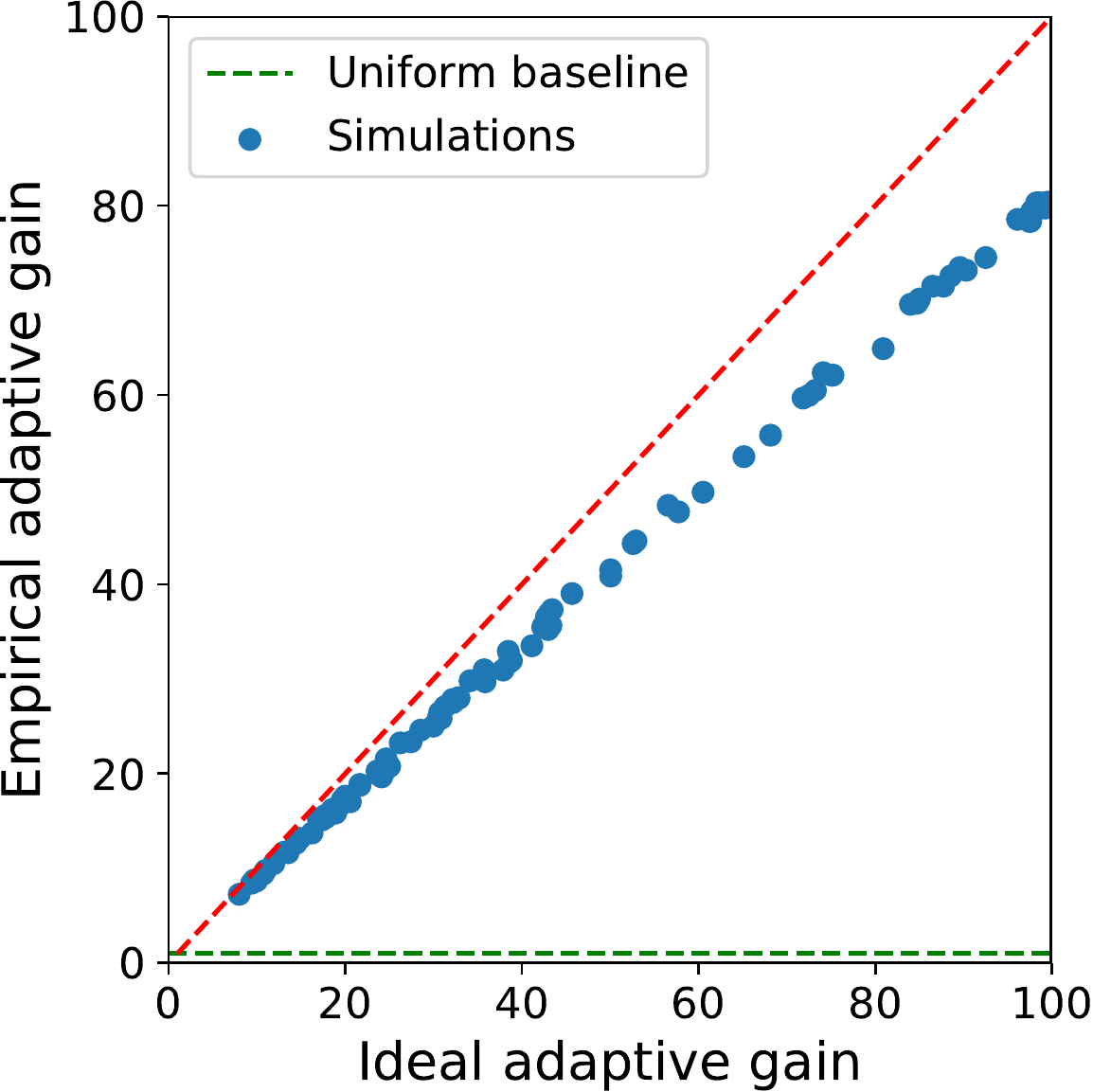}
    \vspace{-.5cm}
    \subcaption{}
    \label{fig:gainPlot_l2}
    \end{subfigure}
    \hfill
    \begin{subfigure}[b]{0.47\linewidth}
    \includegraphics[width=\linewidth]{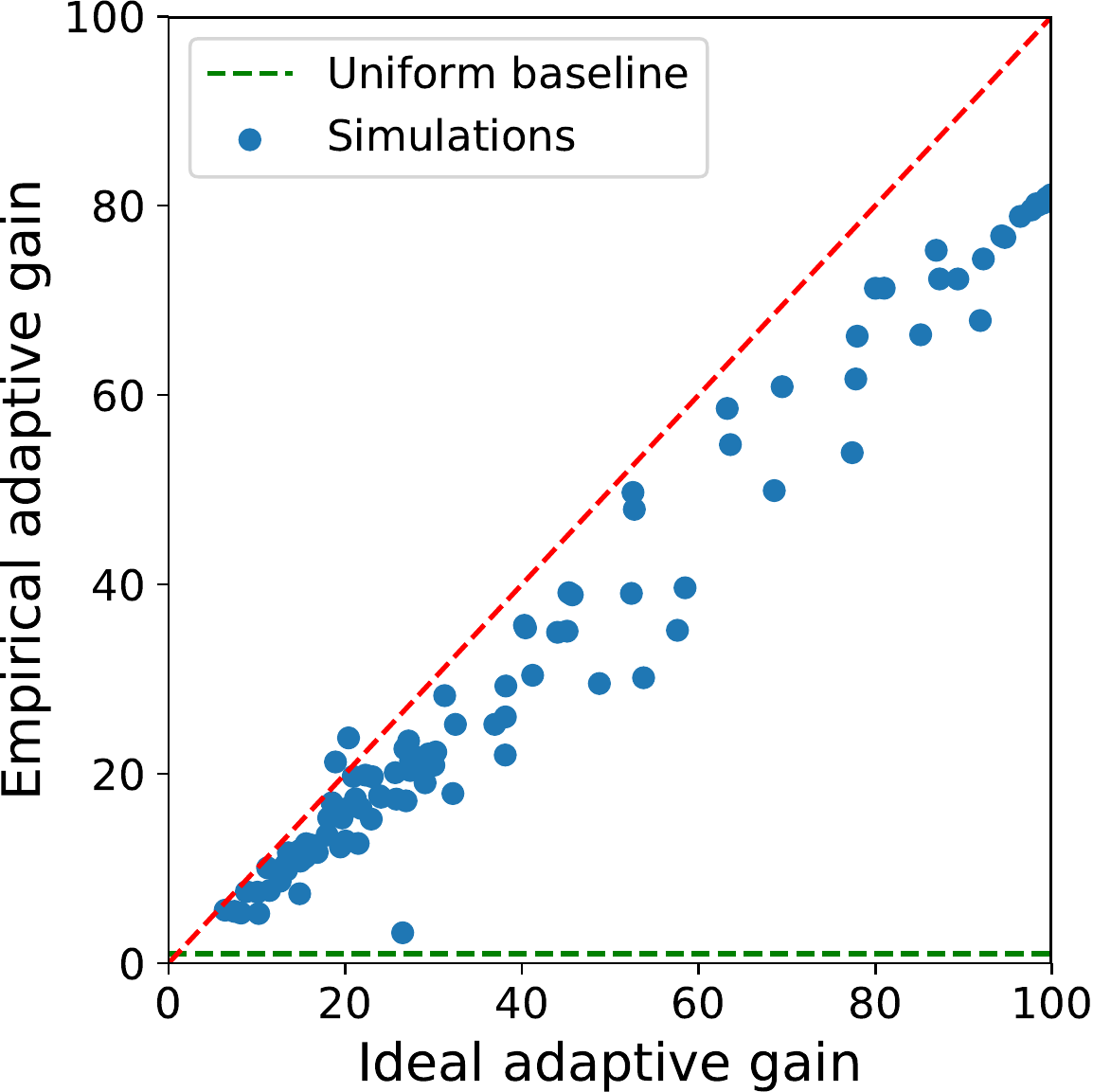}
    \vspace{-.5cm}
    \subcaption{}
    \label{fig:gainPlot_bern}
    \end{subfigure}
    \vspace{-.3cm}
    \caption{
    Time until termination for (a) $f(\bx)=\|\bx\|_2^2$ and (b) $f(\bx) = \b{g}^\top \bx$ for $\epsilon= 10^{-4},\delta= .01$.
    Dashed red line indicates theoretically predicted gain from adaptivity, the line $y=x$.
    100 randomly generated problem instances are shown.}
    \label{fig:gainPlot}
    \vspace{-.3cm}
\end{figure}
In \Cref{fig:gainPlot} we plot the improvement in stopping time afforded by adaptivity.
Taking the ratio of the number of samples required by the uniform sampling based algorithm to terminate to the number of samples required by our adaptive sampling based algorithm to terminate, we see that these ratios follow a linear trend.
In these plots we took $\tilde{T}_i^{(r)} = \lceil 10\alpha_i B_r\rceil$, which does not orderwise impact our theoretical guarantees but allows for superior finite sample performance.
While theory would predict that all points fall on the line $y=x$, finite sample issues and our choice of $\tilde{T}_i^{(r)}$ yield the resulting slope of $\approx 10/12$, for which we provide further discussion in \Cref{app:expdetails}.
Additionally, we reduce the confidence intervals constants to allow for more aggressive sampling.
All modifications from the written algorithm are detailed in \Cref{app:changesAlg}.

\begin{figure}[h]
    \centering
    \includegraphics[width=\columnwidth]{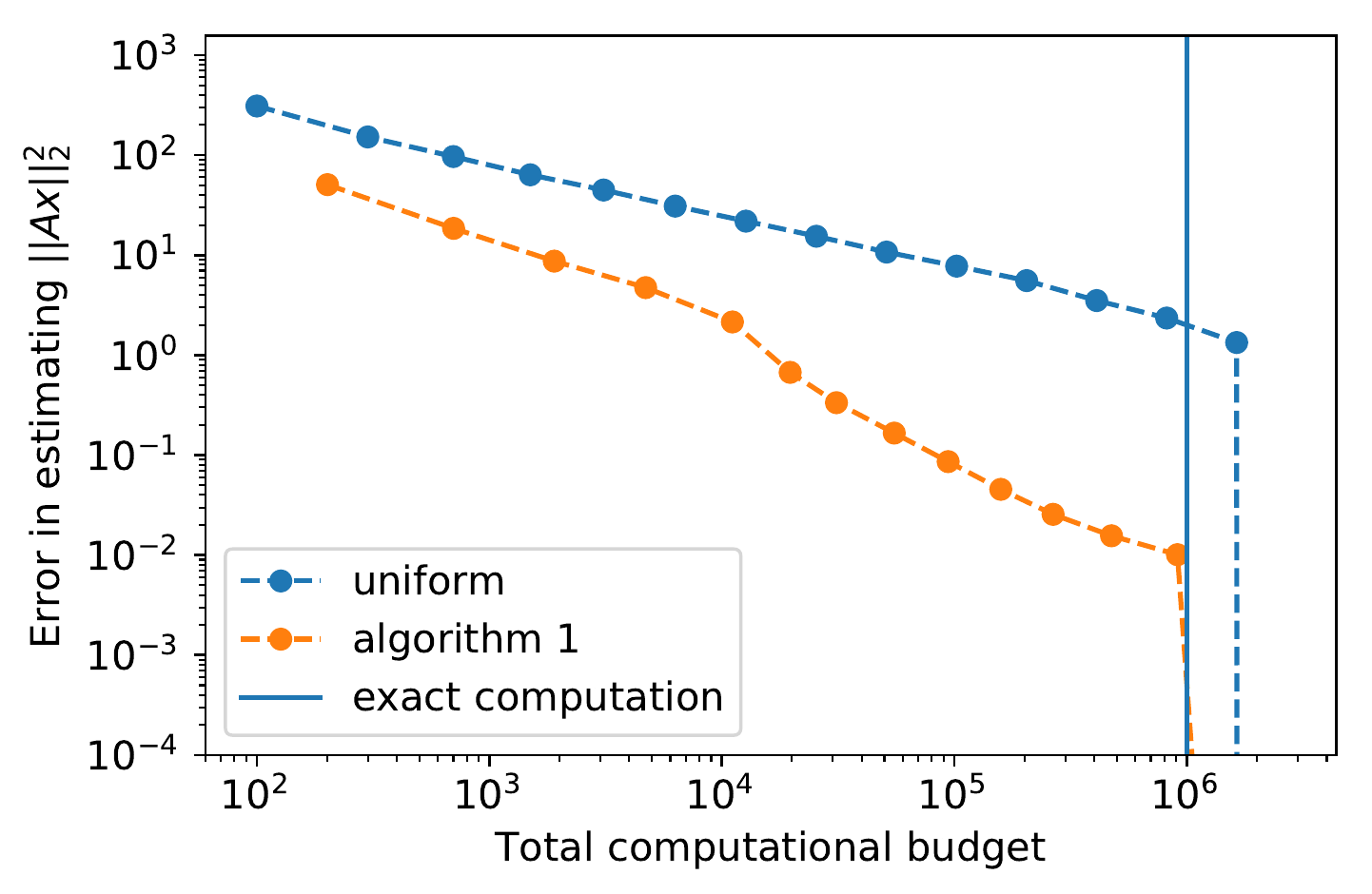}
    \vspace{-.8cm}
    \caption{Estimation error of $\|Ax\|_2^2$ for matrix $A$, vector $x$, for $n=100$ and $d=10k$.}
    \label{fig:Ax_n100_d10k_pareto}
    \vspace{-.2cm}
\end{figure}

In \Cref{fig:Ax_n100_d10k_pareto} we see the dramatic gain afforded by adaptivity in the estimation of $\|Ax\|_2^2$.
For this random example the gain between uniform and adaptive sampling was projected to be $60.5$.
While in these experiments we utilize $A,x$ such that the variance of our drawn samples is approximately $1$ for simplicity, in many applications one would need to adapt to the unknown variance of different entries.
In order to accomplish this, a bound on $\max_{i,j} |A_{ij} x_j|$ is required.
This allows for adaptation to the unknown variances $\sigma_i^2 = d \sum_{j=1}^d (A_{ij}-\mu_i)^2$.

Note that in this computational setting, we have that in addition to high probability bounds on the sample complexity of the algorithm, we can provide bounds on the expected runtime as well, as the total number of samples required is upper bounded by $nd$ with probability 1.

\subsection{Linear $f$ with heteroskedastic noise} 

We provide further numerical validation of our method by demonstrating its performance in the software reliability estimation setting, where we wish to estimate $f(\bx) = \b{g}^\top \bx$, and when we query arm $i$ we obtain a Bernoulli sample with mean $\mu_i$ (heteroskedastic noise) indicating whether the test failed.
We see that our general algorithm is able to outperform the optimism-based one of \cite{carpentier2015adaptive} in both error and wall-clock time, while additionally providing stopping time guarantees.

\begin{figure}[h]
    \centering
    \vspace{-.3cm}
    \includegraphics[width=\columnwidth]{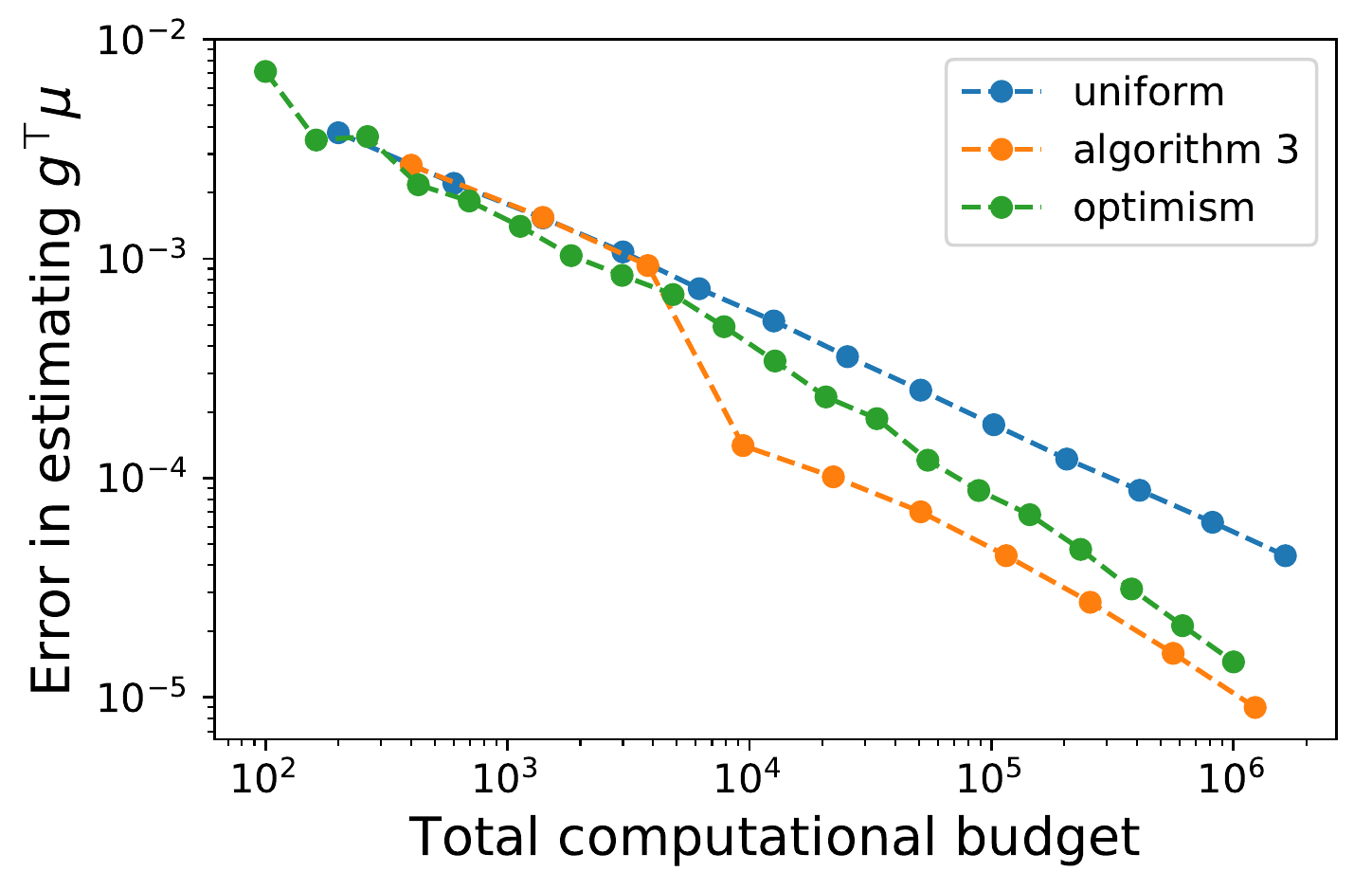}
    \vspace{-.7cm}
    \caption{Estimation error of $\b{g}^\top \bx$ under Bernoulli noise for $n=100$, $g_i$ uniform in [0,1] and $\bx$ normalized pareto distribution.}
    \label{fig:bernoulli_n100_pareto}
    \vspace{-.3cm}
\end{figure}

In \Cref{fig:bernoulli_n100_pareto} we see the favorable comparison of \Cref{alg:adaConf} to the optimism based method.
Even though \Cref{alg:adaConf} is designed with a fixed confidence stopping condition in mind, and does not optimize for this fixed budget metric of error after $T$ samples, we see that it is in fact able to slightly outperform the optimism-based algorithm.
Both adaptive algorithms exhibit an inflection point (which is much sharper in \Cref{alg:adaConf}) when they identify the most important coordinate with largest $g_i \mu_i$ and are able to use the majority of their samples on it.
This effect can also be seen to some degree in \Cref{fig:Ax_n100_d10k_pareto}.
Additional numerical experiments can be found in \Cref{app:extraexp}.

\begin{table}[h] 
\begin{tabular}{l|l|l|l}
                                                           & Uniform         & Algorithm 3                                         & Optimism \\ \hline
$T=10^4$   & $3.4\times 10^{-3}$ & $7.0\times 10^{-3}$ & $3.2\times 10^{-1}$     \\ \hline
$T=10^5$  & $5.2\times 10^{-3}$                & $8.4\times 10^{-3}$                                                  & $3.1$       \\ \hline
$T=10^6$ & $7.5\times 10^{-3}$                 & $1.1\times 10^{-2}$                                                   & $3.1 \times 10$     
\end{tabular}\vspace{-.2cm}
\caption{Wall-clock time comparison for estimation of $f(\bx) = \b{g}^\top \bx$ as in \Cref{fig:bernoulli_n100_pareto}, averaged over 100 trials. Simulation details in \Cref{app:expdetails}.}\label{table:times}
\vspace{-.2cm}
\end{table}

We see that while the statistical performance of the optimism-based algorithm is comparable to our scheme, it is computationally much slower.
This is shown in \Cref{table:times}, where we see that our algorithm is over 3 orders of magnitude faster when run for $T=10^6$ samples, and is almost as fast as the batched pure uniform sampling algorithm.
The runtimes of the two batched methods increase extremely slowly (logarithmically) as the budget increases, while the fully sequential optimism based algorithm suffers from a runtime linear in $T$.

\section{Conclusions and Future Work} \label{sec:conc}
In this paper, we considered the problem of estimating the value of a known smooth function $f$ at an unknown point $\bx\in\R^n$.
We provided a novel algorithm to solve this problem, showing that adaptively estimating the important coordinates of the unknown point can yield dramatic performance improvements.
We provided a matching lower bound in the linear setting, showing that our algorithm achieves order optimal performance in terms of all relevant problem parameters.
We demonstrated the broad applicability of our scheme by showing its extensions to a thresholding decision setting, as well as our algorithms' ability to adapt to unknown variances.

There are several interesting directions for future work.
We showed in the special case where $f(\bx)=\|\bx\|_2^2$ that additional structure can lead to a more efficient algorithm.
One natural question is what other classes of functions can yield improved rates.
Additionally, while in this work we only considered bounded noise distributions (or Gaussian noise with unknown variance), one should be able to use more sophisticated estimators like Catoni's M-estimator \cite{catoni2012challenging} to accommodate more general noise distributions with moments of order $1+\epsilon$ for $\epsilon\in(0,1]$ \cite{bubeck2013bandits}.
Finally, the technique we use for estimating $\|Ax\|_2$ can also be utilized for the estimation of $\|AB\|_F^2$, as this can be viewed as the $\ell_2$ norm of the vectorized resulting matrix.
This matrix matrix product contains additional structure, and has been studied from alternative perspectives in the Monte Carlo matrix multiplication literature \cite{drineas2006fast}, so designing wall-clock efficient algorithms for this setting is an interesting question for future work.

\bibliographystyle{apalike}
\balance 
\bibliography{mybib}

\begin{thebibliography}{}

\bibitem[Antos et~al., 2010]{antos2010active}
Antos, A., Grover, V., and Szepesv{\'a}ri, C. (2010).
\newblock Active learning in heteroscedastic noise.
\newblock {\em Theoretical Computer Science}, 411(29-30):2712--2728.

\bibitem[Bagaria et~al., 2021]{bagaria2021bandit}
Bagaria, V., Baharav, T.~Z., Kamath, G.~M., and David, N.~T. (2021).
\newblock Bandit-based monte carlo optimization for nearest neighbors.
\newblock {\em IEEE Journal on Selected Areas in Information Theory}.

\bibitem[Bagaria et~al., 2018]{bagaria2018medoids}
Bagaria, V., Kamath, G., Ntranos, V., Zhang, M., and Tse, D. (2018).
\newblock Medoids in almost-linear time via multi-armed bandits.
\newblock In {\em International Conference on Artificial Intelligence and
  Statistics}, pages 500--509. PMLR.

\bibitem[Baharav and Tse, 2019]{baharav2019ultra}
Baharav, T. and Tse, D. (2019).
\newblock Ultra fast medoid identification via correlated sequential halving.
\newblock {\em Advances in Neural Information Processing Systems}, 32.

\bibitem[Balkanski and Singer, 2018]{balkanski2018adaptive}
Balkanski, E. and Singer, Y. (2018).
\newblock The adaptive complexity of maximizing a submodular function.
\newblock In {\em Proceedings of the 50th annual ACM SIGACT symposium on theory
  of computing}, pages 1138--1151.

\bibitem[Bubeck, 2013]{bubeckBlog}
Bubeck, S. (2013).
\newblock Oracle complexity of smooth convex functions.

\bibitem[Bubeck and Cesa-Bianchi, 2012]{bubeck2012regret}
Bubeck, S. and Cesa-Bianchi, N. (2012).
\newblock Regret analysis of stochastic and nonstochastic multi-armed bandit
  problems.
\newblock {\em Foundations and Trends in Machine Learning}, 5(1):1--122.

\bibitem[Bubeck et~al., 2013]{bubeck2013bandits}
Bubeck, S., Cesa-Bianchi, N., and Lugosi, G. (2013).
\newblock Bandits with heavy tail.
\newblock {\em IEEE Transactions on Information Theory}, 59(11):7711--7717.

\bibitem[Cai et~al., 2004]{cai2004optimal}
Cai, K.-Y., Li, Y.-C., and Liu, K. (2004).
\newblock Optimal and adaptive testing for software reliability assessment.
\newblock {\em Information and Software Technology}, 46(15):989--1000.

\bibitem[Carpentier and Munos, 2012]{carpentier2012adaptive}
Carpentier, A. and Munos, R. (2012).
\newblock Adaptive stratified sampling for monte-carlo integration of
  differentiable functions.
\newblock {\em Advances in neural information processing systems}, 25.

\bibitem[Carpentier et~al., 2015]{carpentier2015adaptive}
Carpentier, A., Munos, R., and Antos, A. (2015).
\newblock Adaptive strategy for stratified monte carlo sampling.
\newblock {\em J. Mach. Learn. Res.}, 16:2231--2271.

\bibitem[Catoni, 2012]{catoni2012challenging}
Catoni, O. (2012).
\newblock Challenging the empirical mean and empirical variance: a deviation
  study.
\newblock In {\em Annales de l'IHP Probabilit{\'e}s et statistiques},
  volume~48, pages 1148--1185.

\bibitem[Drineas et~al., 2006]{drineas2006fast}
Drineas, P., Kannan, R., and Mahoney, M.~W. (2006).
\newblock Fast monte carlo algorithms for matrices i: Approximating matrix
  multiplication.
\newblock {\em SIAM Journal on Computing}, 36(1):132--157.

\bibitem[Esfandiari et~al., 2021]{esfandiari2021adaptivity}
Esfandiari, H., Karbasi, A., and Mirrokni, V. (2021).
\newblock Adaptivity in adaptive submodularity.
\newblock In {\em Conference on Learning Theory}, pages 1823--1846. PMLR.

\bibitem[Hillel et~al., 2013]{hillel2013distributed}
Hillel, E., Karnin, Z.~S., Koren, T., Lempel, R., and Somekh, O. (2013).
\newblock Distributed exploration in multi-armed bandits.
\newblock {\em Advances in Neural Information Processing Systems}, 26.

\bibitem[Hu et~al., 2013]{hu2013enhancing}
Hu, H., Jiang, C.-H., Cai, K.-Y., Wong, W.~E., and Mathur, A.~P. (2013).
\newblock Enhancing software reliability estimates using modified adaptive
  testing.
\newblock {\em Information and Software Technology}, 55(2):288--300.

\bibitem[Jamieson et~al., 2014]{jamieson2014lil}
Jamieson, K., Malloy, M., Nowak, R., and Bubeck, S. (2014).
\newblock lil’ucb: An optimal exploration algorithm for multi-armed bandits.
\newblock In {\em Conference on Learning Theory}, pages 423--439.

\bibitem[Jamieson and Nowak, 2014]{jamieson2014best}
Jamieson, K. and Nowak, R. (2014).
\newblock Best-arm identification algorithms for multi-armed bandits in the
  fixed confidence setting.
\newblock In {\em Information Sciences and Systems (CISS), 2014 48th Annual
  Conference on}, pages 1--6. IEEE.

\bibitem[Kamath et~al., 2020]{kamath2020adaptive}
Kamath, G., Baharav, T., and Shomorony, I. (2020).
\newblock Adaptive learning of rank-one models for efficient pairwise sequence
  alignment.
\newblock {\em Advances in Neural Information Processing Systems},
  33:7513--7525.

\bibitem[Karbasi et~al., 2021]{karbasi2021parallelizing}
Karbasi, A., Mirrokni, V., and Shadravan, M. (2021).
\newblock Parallelizing thompson sampling.
\newblock {\em Advances in Neural Information Processing Systems}, 34.

\bibitem[Karnin et~al., 2013]{karnin2013almost}
Karnin, Z., Koren, T., and Somekh, O. (2013).
\newblock Almost optimal exploration in multi-armed bandits.
\newblock In {\em International Conference on Machine Learning}, pages
  1238--1246. PMLR.

\bibitem[Karpov and Zhang, 2020]{karpov2020batched}
Karpov, N. and Zhang, Q. (2020).
\newblock Batched coarse ranking in multi-armed bandits.
\newblock {\em Advances in Neural Information Processing Systems}, 33.

\bibitem[Kaufmann et~al., 2016]{kaufmann2016complexity}
Kaufmann, E., Capp{\'e}, O., and Garivier, A. (2016).
\newblock On the complexity of best-arm identification in multi-armed bandit
  models.
\newblock {\em The Journal of Machine Learning Research}, 17(1):1--42.

\bibitem[Kim and Nelson, 2001]{kim2001fully}
Kim, S.-H. and Nelson, B.~L. (2001).
\newblock A fully sequential procedure for indifference-zone selection in
  simulation.
\newblock {\em ACM Transactions on Modeling and Computer Simulation (TOMACS)},
  11(3):251--273.

\bibitem[Kocsis and Szepesv{\'a}ri, 2006]{kocsis2006bandit}
Kocsis, L. and Szepesv{\'a}ri, C. (2006).
\newblock Bandit based monte-carlo planning.
\newblock In {\em European conference on machine learning}, pages 282--293.
  Springer.

\bibitem[Lai and Robbins, 1985]{lai1985asymptotically}
Lai, T.~L. and Robbins, H. (1985).
\newblock Asymptotically efficient adaptive allocation rules.
\newblock {\em Advances in applied mathematics}, 6(1):4--22.

\bibitem[Lattimore and Szepesv{\'a}ri, 2020]{lattimore2020bandit}
Lattimore, T. and Szepesv{\'a}ri, C. (2020).
\newblock {\em Bandit algorithms}.
\newblock Cambridge University Press.

\bibitem[LeJeune et~al., 2019]{lejeune2019adaptive}
LeJeune, D., Heckel, R., and Baraniuk, R. (2019).
\newblock Adaptive estimation for approximate $ k $-nearest-neighbor
  computations.
\newblock In {\em The 22nd International Conference on Artificial Intelligence
  and Statistics}, pages 3099--3107. PMLR.

\bibitem[Li et~al., 2017]{li2017hyperband}
Li, L., Jamieson, K., DeSalvo, G., Rostamizadeh, A., and Talwalkar, A. (2017).
\newblock Hyperband: A novel bandit-based approach to hyperparameter
  optimization.
\newblock {\em The Journal of Machine Learning Research}, 18(1):6765--6816.

\bibitem[Lohr, 2019]{lohr2019sampling}
Lohr, S.~L. (2019).
\newblock {\em Sampling: design and analysis}.
\newblock Chapman and Hall/CRC.

\bibitem[Lv et~al., 2014]{lv2014asymptotic}
Lv, J., Yin, B.-B., and Cai, K.-Y. (2014).
\newblock On the asymptotic behavior of adaptive testing strategy for software
  reliability assessment.
\newblock {\em IEEE transactions on Software Engineering}, 40(4):396--412.

\bibitem[Mason et~al., 2019]{mason2019learning}
Mason, B., Tripathy, A., and Nowak, R. (2019).
\newblock Learning nearest neighbor graphs from noisy distance samples.
\newblock {\em Advances in Neural Information Processing Systems}, 32.

\bibitem[Mason et~al., 2021]{mason2021nearest}
Mason, B., Tripathy, A., and Nowak, R. (2021).
\newblock Nearest neighbor search under uncertainty.
\newblock In {\em Uncertainty in Artificial Intelligence}, pages 1777--1786.
  Proceedings of Machine Learning Research.

\bibitem[Maurer and Pontil, 2009]{maurer2009empirical}
Maurer, A. and Pontil, M. (2009).
\newblock Empirical bernstein bounds and sample variance penalization.
\newblock {\em Conference on Learning Theory}, 22.

\bibitem[Paulson, 1964]{paulson1964sequential}
Paulson, E. (1964).
\newblock A sequential procedure for selecting the population with the largest
  mean from k normal populations.
\newblock {\em The Annals of Mathematical Statistics}, pages 174--180.

\bibitem[Robbins, 1952]{robbins1952some}
Robbins, H. (1952).
\newblock Some aspects of the sequential design of experiments.
\newblock {\em Bulletin of the American Mathematical Society}, 58(5):527--535.

\bibitem[Singhal et~al., 2021]{singhal2020query}
Singhal, A., Pirojiwala, S., and Karamchandani, N. (2021).
\newblock Query complexity of k-nn based mode estimation.
\newblock In {\em 2020 IEEE Information Theory Workshop (ITW)}, pages 1--5.
  IEEE.

\bibitem[Sion, 1958]{sion1958general}
Sion, M. (1958).
\newblock On general minimax theorems.
\newblock {\em Pacific Journal of mathematics}, 8(1):171--176.

\bibitem[Tiwari et~al., 2020]{tiwari2020bandit}
Tiwari, M., Zhang, M.~J., Mayclin, J., Thrun, S., Piech, C., and Shomorony, I.
  (2020).
\newblock Banditpam: Almost linear time k-medoids clustering via multi-armed
  bandits.
\newblock {\em Advances in Neural Information Processing Systems},
  33:10211--10222.

\bibitem[Wainwright, 2019]{wainwright2019high}
Wainwright, M.~J. (2019).
\newblock {\em High-dimensional statistics: A non-asymptotic viewpoint},
  volume~48.
\newblock Cambridge University Press.

\bibitem[Zhang et~al., 2019]{zhang2019adaptive}
Zhang, M., Zou, J., and Tse, D. (2019).
\newblock Adaptive monte carlo multiple testing via multi-armed bandits.
\newblock In {\em International Conference on Machine Learning}, pages
  7512--7522. PMLR.

\end{thebibliography}

\begin{appendix}
\onecolumn

\section{$\ell_p$ norms}\label{app:lp}

In this Appendix we analyze our algorithms' performance for $\ell_p$ norms, where we wish to estimate $f(\bx) = f(\bx;p) := \|\bx\|_p^p$ to accuracy $\epsilon$.
We focus on the high accuracy regime for the sake of exposition, taking $\epsilon\rightarrow 0 $ and ignoring $o(\epsilon^{-2})$ terms.
For $\|\bx\|_\infty$, in the high accuracy regime it is order optimal to first identify an $\epsilon/2$ optimal arm $i^{*}$ such that $|\mu_{i^{*}}| \ge \max_i |\mu_i| - \epsilon/2$, and then estimate $\mu_{i^{*}}$ to accuracy $\epsilon/2$.
This two step procedure has an identification cost of $\tilde{O}(\sum_{i=1}^n (\Delta_i^{(\epsilon)})^{-2})$ for $\Delta_i^{(\epsilon)}= \max(\Delta_i,\epsilon)$ \cite{jamieson2014best}, after which the identified arm can be pulled $\tilde{O}(\epsilon^{-2})$ times to estimate its mean to accuracy $\epsilon/2$, yielding an overall sample complexity of $\tilde{O}(\epsilon^{-2} + \sum_{i=1}^n (\Delta_i^{(\epsilon)})^{-2}) \to \tilde{O}(\epsilon^{-2})$, assuming that only a constant number of arms attain this maximum.
Here, $\tilde{O}$ hides log factors in poly$(n,1/\delta,1/\epsilon,1/\Delta_i)$.
This is in contrast to a uniform sampling based algorithm, which would estimate the means of all arms to accuracy $\epsilon/2$ and then output the largest value, requiring $\tilde{O}(n\epsilon^{-2})$ samples.
This yields a performance improvement afforded by adaptivity of 
$\tilde{O}(n)$ when the best arm is unique.
As shown in \Cref{thm:main}, our adaptive algorithm requires $O(\|\nabla f(\bx)\|_1^2\epsilon^{-2}\log(1/\delta))$ samples for smooth $f$ ignoring lower order terms in $\epsilon$, as opposed to a uniform sampling algorithm which will require $O(n \epsilon^{-2}\|\nabla f(\bx)\|_2^2\log(1/\delta))$, yielding a ratio of $O(n\|\nabla f(\bx)\|_2^2 / \|\nabla f(\bx)\|_1^2)$.
On the other end of the spectrum, the case of $f(\bx)=\|\bx\|_1$ is surprising in that adaptivity does not save samples, as for all $i$ such that $\mu_i\neq0$ we have that $|\nabla_i f(\bx)| = 1$.
Thus, all coordinates are equally important, and it is optimal to uniformly sample the entries.
Revisiting  $f(\bx)=\|\bx\|_2^2$, we have that $\nabla f(\bx) = 2 \bx$, and so the gain from adaptivity will be $O(n\|\bx\|_2^2 / \|\bx\|_1^2)$, potentially a factor of $n$ improvement.
Defining the gain afforded by adaptivity for a fixed vector $\bx$ as
\begin{equation} \label{eq:hp}
     \texttt{gain}(\|\cdot\|_p^p,\bx) := \frac{n\|\nabla f(\bx;p)\|_2^2}{\|\nabla f(\bx;p)\|_1^2},
\end{equation}
the asymptotic gain of adaptive sampling over uniform sampling in the limit as $\epsilon \to 0$, we are able to state the following proposition showing that the gain increases with $p$ as the function becomes less smooth.
\begin{prop}\label{prop:lp}
For $p \in [1,\infty)$, the sample complexity improvement  $\texttt{gain}(\|\cdot\|_p^p;\bx)$ in \eqref{eq:hp} is a monotone nondecreasing function of $p$, and a monotone increasing function if the $\mu_i$ are not all identical.
\end{prop}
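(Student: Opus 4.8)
The plan is to reduce the proposition to a convexity statement about a one–parameter family of weighted log–means. First I would compute the gradient. Since $f(\bx;p)=\sum_{i=1}^n |\mu_i|^p$, we have $\nabla_i f(\bx;p)=p\,|\mu_i|^{p-1}\sgn(\mu_i)$, so $|\nabla_i f(\bx;p)|=p\,|\mu_i|^{p-1}$. Substituting into \eqref{eq:hp}, the common factor $p^2$ cancels from numerator and denominator, leaving
\[
\texttt{gain}(\|\cdot\|_p^p;\bx)=\frac{n\sum_{i}|\mu_i|^{2(p-1)}}{\big(\sum_i|\mu_i|^{p-1}\big)^2}.
\]
Writing $a_i:=|\mu_i|$ (restricted to the coordinates with $\mu_i\neq0$, which are the only ones contributing once $p>1$), $q:=p-1\ge0$, and $S(t):=\sum_i a_i^{t}$, this is exactly $G(q):=n\,S(2q)/S(q)^2$. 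The factor $n$ is a positive constant, so it suffices to show that $S(2q)/S(q)^2$ is nondecreasing on $[0,\infty)$, and strictly increasing when the $a_i$ are not all equal.

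For the monotonicity I would pass to the logarithmic derivative. Setting $\psi(t):=\frac{d}{dt}\log S(t)=\frac{\sum_i a_i^{t}\ln a_i}{\sum_i a_i^{t}}$, the chain rule gives
\[
\frac{d}{dq}\log G(q)=2\psi(2q)-2\psi(q)=2\big(\psi(2q)-\psi(q)\big),
\]
so it is enough to prove that $\psi$ is nondecreasing. This is immediate from convexity: writing $S(t)=\sum_i e^{t\ln a_i}$ exhibits $\log S$ as a log–sum–exp of affine functions of $t$, hence convex, so $\psi=(\log S)'$ is nondecreasing and $\psi(2q)\ge\psi(q)$ for all $q\ge0$. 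Quantitatively, $\psi'(t)=\Var_{w(t)}(\ln a)\ge0$, the variance of $\ln a_i$ under the probability weights $w_i(t)=a_i^{t}/S(t)$. This yields $\frac{d}{dq}\log G(q)\ge0$ and hence the claimed monotonicity of $\texttt{gain}$ in $p$.

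For strictness I would note that $\Var_{w(t)}(\ln a)>0$ precisely when the values $\{\ln a_i\}$ on the support are not all equal, i.e.\ when the magnitudes $|\mu_i|$ are not all equal; in that case $\psi$ is strictly increasing, so $\psi(2q)>\psi(q)$ for every $q>0$, giving $\frac{d}{dq}\log G>0$ and thus strict monotonicity of $G$ on $(0,\infty)$, extended to the endpoint $p=1$ by continuity (and matching $\texttt{gain}=1$ when all $|\mu_i|$ coincide). The only real content is the log–sum–exp convexity; I expect the only obstacle to be bookkeeping, namely (i) dropping the $\mu_i=0$ coordinates cleanly and handling the $p=1$ boundary by continuity, and (ii) observing that the sharp condition for strict increase is that the \emph{magnitudes} $|\mu_i|$ are not all equal (so that, e.g., $\bx=(1,-1)$ gives a constant gain), which is the correct reading of the hypothesis that the $\mu_i$ are not all identical.
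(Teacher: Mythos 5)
Your proof is correct, and it diverges from the paper's at the decisive step. Both arguments perform the same reduction: write $a_i=|\mu_i|$, $q=p-1$, $S(t)=\sum_i a_i^t$, differentiate the gain $nS(2q)/S(q)^2$ in $q$, and reduce to showing $S'(2q)S(q)-S(2q)S'(q)\ge 0$, i.e.\ $\psi(2q)\ge\psi(q)$ for $\psi=(\log S)'$. The paper proves this comparison directly and combinatorially: it expands the left-hand side as a double sum, pairs the $(i,j)$ and $(j,i)$ terms, and observes each pair contributes $a_i^{q}a_j^{q}\ln(a_i/a_j)\left(a_i^{q}-a_j^{q}\right)\ge 0$ --- a Chebyshev-type rearrangement argument tied to the specific exponents $q$ and $2q$. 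You instead prove the stronger statement that $\psi$ is nondecreasing on all of $[0,\infty)$, via convexity of log-sum-exp (equivalently $\psi'(t)=\Var_{w(t)}(\ln a)\ge 0$, a Cauchy--Schwarz fact), and then specialize to the pair $(q,2q)$. Your route is more conceptual, generalizes beyond the particular exponent pair, and makes strictness immediate from positivity of the variance; the paper's is elementary and self-contained but bespoke. Two further points in your favor: first, your reading of the strictness hypothesis is the sharp one --- as literally stated, the proposition fails for $\bx=(1,-1)$ or $(1,1,0)$, whose coordinates are not all identical yet whose gain is constant in $p$; both proofs actually establish strict increase under the corrected hypothesis that the nonzero magnitudes $|\mu_i|$ are not all equal (the paper absorbs this silently into its ``WLOG $\mu_i\ge 0$'' and the removal of zero coordinates). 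Second, your continuity argument at $p=1$ patches a small gap in the paper: its displayed derivative vanishes identically at $p=1$ (every factor $\mu_i^{p-1}-\mu_j^{p-1}$ is zero there), so positivity of the derivative only holds on the open interval, exactly the situation your endpoint argument handles.
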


\begin{proof}[Proof of \Cref{prop:lp}]
Analyzing our gain, we see that we can assume without loss of generality that $\mu_i\ge0$ for all $i$.
Defining $h(\bx;p) := \sum_{i=1}^n \mu_i^{2(p-1)}$ and $g(\bx;p) := (\sum_{i=1}^n \mu_i^{p-1})^2$, we have that
\begin{equation}
    \texttt{gain}(\|\cdot\|_p^p,\bx) = \frac{n\|\nabla f(\bx;p)\|_2^2}{\|\nabla f(\bx;p)\|_1^2} = \frac{n \sum_{i=1}^n \mu_i^{2(p-1)}}{\left(\sum_{i=1}^n \mu_i^{p-1}\right)^2} = \frac{n h(\bx;p)}{g(\bx;p)}.
\end{equation}
To show that this is a nondecreasing function of $p$, we simply need to show that $h'(\bx;p)g(\bx;p) -h(\bx;p) g'(\bx;p)\ge 0$, where the derivatives are taken with respect to $p$ for a fixed $\bx$.
Since coordinates where $\mu_i=0$ affect neither the numerator nor the denominator we can remove them from consideration, and assume without loss of generality that $\mu_i>0$ for all $i$.
Taking the derivatives and simplifying, we have that our gain is a non decreasing function of $p$ as
\begin{align*}
\left(\sum \mu_i^{2(p-1)} \ln \mu_i\right)& \left(\sum \mu_i^{p-1}\right) - \left(\sum \mu_i^{2(p-1)}\right) \left(\sum \mu_i^{p-1} \ln \mu_i\right)\\
&=\left(\sum_{i,j} \mu_i^{2(p-1)} \mu_j^{p-1} \ln \mu_i \right) - \left(\sum_{i,j} \mu_i^{2(p-1)} \mu_j^{p-1} \ln \mu_j\right) \\
&\underset{(a)}{=} \sum_{i<j}\left(\left( \mu_i^{2(p-1)} \mu_j^{p-1} \ln \mu_i + \mu_j^{2(p-1)} \mu_i^{p-1} \ln \mu_j\right) - \left(\mu_i^{2(p-1)} \mu_j^{p-1} \ln \mu_j + \mu_j^{2(p-1)} \mu_i^{p-1} \ln \mu_i \right)\right)\\
&= \sum_{i<j}\left(\mu_i^{2(p-1)} \mu_j^{p-1} \ln \left(\mu_i/\mu_j\right) + \mu_j^{2(p-1)} \mu_i^{p-1} \ln \left(\mu_j/\mu_i\right)\right)\\
&= \sum_{i<j}\left(\mu_i^{p-1} \mu_j^{p-1} \ln \left(\mu_i/\mu_j\right) \left(\mu_i^{p-1} - \mu_j^{p-1} \right)\right)\\
&\ge 0. \numberthis
\end{align*}
Where in (a) we grouped pairs of terms $(i,j)$ and $(j,i)$ from both sums, and in the last inequality we observe that for $\mu_i,\mu_j>0$, the expression is always nonnegative.
This shows that the derivative of the gain with respect to $p$ is nonnegative for all $p\in[1,\infty)$.
Examining this final line more closely, we see that unless $\mu_i = \mu_j$ for all pairs $(i,j)$ there will be a positive term in this sum, indicating that the gain is a monotone increasing function of $p$ when the $\mu_i$ are not all identical.
\end{proof}

\section{Proofs of \Cref{alg:generalf}} \label{app:mainProof}

\textbf{Proof sketch:} we define the good event $\xi$ where the mean estimators $\{\hat{\mu}_i^{(r)}\}$ stay within their confidence intervals, and show in \Cref{lem:conf} that this event occurs with probability at least $1-\delta/3$.
This yields control of our gradient bounds $\{\hat{g}_i^{(r,L)},\hat{g}_i^{(r,U)}\}$, and guarantees that the sampling frequencies $\alpha_i^{(r)}$ are sufficiently large for important arms.
Then, we show in \Cref{lem:secondorder} that the error due to higher order terms is less than $\epsilon/2$ due to our choice of $B_0$.
We then analyze our first order error term, and show that this is sufficiently small when the termination condition is met. 
Finally, we bound the number of rounds required until the termination condition is met.

\subsection{Algorithm terminates correctly}
We begin by showing that the confidence intervals on the mean estimators hold.
To do this we define the good event $\xi$, where our confidence intervals on our mean estimators hold.
\begin{equation}
    \xi := \bigcap_{i\in[n],r\in \N} \left\{|\hat{\mu}_i^{(r)} - \mu_i| < C_r \right\}
\end{equation}
This event occurs with very high probability, as formalized in the lemma below.
\begin{lem} \label{lem:conf}
The good event $\xi$ where the estimators $\{\hat{\mu}_i^{(r)}\}$ stay within their confidence intervals satisfies $\P(\xi) \ge 1-\delta/3$.
\end{lem}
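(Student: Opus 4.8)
The plan is a routine union bound over arms and rounds, resting on the observation that the per-arm sample counts are deterministic, so that no anytime/peeling machinery is needed for the mean estimators. First I would fix a coordinate $i \in [n]$ and a round $r \in \N$ and note that, because Line 5 pulls each arm $\lceil B_r/n\rceil$ times regardless of the observed values, the cumulative count $\tilde{B}_r = \sum_{\ell=1}^r \lceil B_\ell/n\rceil$ is a fixed (non-random) integer. Consequently $\hat{\mu}_i^{(r)} - \mu_i$ is the average of $\tilde{B}_r$ i.i.d. standard normal noise terms, hence distributed as $\CN(0, 1/\tilde{B}_r)$ (more generally, $1/\tilde{B}_r$-sub-Gaussian under the $1$-sub-Gaussian noise assumption).

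Next I would apply the standard Gaussian/Hoeffding tail bound $\P(|\hat{\mu}_i^{(r)} - \mu_i| \ge t) \le 2\exp(-\tilde{B}_r t^2/2)$ and substitute $t = C_r = \sqrt{2\log(12nr^2/\delta)/\tilde{B}_r}$. The factor $\tilde{B}_r$ cancels in the exponent, leaving $2\exp(-\log(12nr^2/\delta)) = \delta/(6nr^2)$ as the per-$(i,r)$ failure probability.

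Then I would take a union bound over all $i \in [n]$ and all $r \in \N$. Since the complement of $\xi$ is exactly $\bigcup_{i,r}\{|\hat{\mu}_i^{(r)} - \mu_i| \ge C_r\}$, summing the per-event probabilities gives $\P(\xi^c) \le \sum_{i=1}^n \sum_{r=1}^\infty \delta/(6nr^2) = (\delta/6)\sum_{r=1}^\infty r^{-2} = \pi^2\delta/36$. Since $\pi^2/36 < 1/3$, this yields $\P(\xi) \ge 1 - \delta/3$, as claimed. (The strict inequality defining $\xi$ matches cleanly, since $\{|\cdot| < C_r\}$ is precisely the complement of $\{|\cdot| \ge C_r\}$.)

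There is no genuine obstacle here: the constant $12nr^2$ inside the logarithm of $C_r$ was engineered so that the $n$ in the denominator absorbs the union over coordinates and the $r^2$ makes the sum over rounds a convergent Basel series, leaving comfortable slack against the target $\delta/3$. The only point that warrants explicit mention is that $\tilde{B}_r$ is deterministic; this is what lets us use a simple per-round concentration bound plus a union bound rather than a law-of-the-iterated-logarithm or uniform-in-time argument, and it is valid precisely because the estimation-phase sampling schedule does not depend on the random observations.
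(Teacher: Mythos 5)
Your proposal is correct and follows essentially the same argument as the paper: a Hoeffding/Gaussian tail bound per $(i,r)$ pair giving failure probability $\delta/(6nr^2)$, followed by a union bound over arms and rounds whose sum $\pi^2\delta/36$ is below $\delta/3$. The paper states this more tersely, but the additional details you supply (the deterministic sampling schedule justifying the simple per-round tail bound, and the convergent sum over rounds) are exactly what its one-line computation implicitly relies on.
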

\begin{proof}
Analyzing $\xi$, we have that the probability that any of our confidence intervals fail is at most
\begin{align*}
    \P(\bar{\xi})
    =\P\left( \exists i \in [n], r \in \N, \text{ s.t. } |\hat{\mu}_i^{(r)} - \mu_i| \ge C_r\right)
    \le \sum_{i \in [n], r \in \N} 2\exp \left( - \log (12nr^2/\delta)\right)
    \le \delta/3. \numberthis
\end{align*}

\end{proof}
The rest of the analysis will be conditioned on this good event $\xi$.
Analyzing the gradient estimators $\{\hat{g}_i^{(r,L)},\hat{g}_i^{(r,U)}\}$, we have for all $i,r$ that
\begin{align}
    g_i + 2L \sqrt{n}C_r \ge \hat{g}_i^{(r,U)} \ge g_i \ge \hat{g}_i^{(r,L)}\ge g_i - 2L \sqrt{n}C_r
\end{align}
This is because, conditioned on $\xi$,
\begin{align*}
     \hat{g}_i^{(r,L)} &=  \min_{\b{y} : \|\b{y} - \bxh^{(r)}\|_\infty \le C_r} \left| \nabla_i f(\b{y})\right|\\
     &\ge g_i - \max_{\b{y} : \|\b{y} - \bx\|_\infty \le 2C_r}\left| \nabla_i f(\b{y}) - \nabla_i f(\bx) \right|\\
     &\ge g_i - \max_{\b{y} : \|\b{y} - \bx\|_\infty \le 2C_r}\left\|  \nabla f(\b{y}) - \nabla f(\bx) \right\|\\
     &\ge g_i - 2L\sqrt{n}C_r. \numberthis
\end{align*}
Additionally, $g_i \ge \hat{g}_i^{(r,L)}$ as the set we are minimizing over includes $\b{y}=\bx$. 
The result for $\hat{g}_i^{(r,U)}$ follows identically.

With Lemma \ref{lem:conf} in hand, we can now proceed with the analysis knowing that our gradient estimators stay within their confidence intervals for the duration of the algorithm with high probability.
This result directly implies the accuracy of our sampling frequencies $\alpha_i^{(r)}$, where we can see that after defining our problem complexity measure $H:= \sum_i g_i = \|\nabla f(\bx)\|_1$ we have that
\begin{align*}
    \alpha_i^{(r)}=\frac{\hat{g}_i^{(r,L)}}{\sum_j \hat{g}_j^{(r,L)}} 
    \ge \begin{cases}
    \frac{g_i}{2H} & \text{ if } g_i \ge 4L\sqrt{n}C_r,\\
    0 & \text{ otherwise.}
    \end{cases}\numberthis
\end{align*}

We now analyze the higher order error terms in our output $f(\bxt)$ caused by not knowing the true gradient.

\begin{lem} \label{lem:secondorder}
If each arm is sampled at least $2nL\log(6n/\delta)\epsilon^{-1}$ times, then with probability at least $1-\delta/3$
\begin{equation*}
    \frac{L}{2}\|\bxt - \bx\|_2^2 \le \epsilon/2
\end{equation*}
\end{lem}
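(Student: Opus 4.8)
The plan is to recognize that the quantity to be controlled is purely the second-order Euclidean estimation error of the final plug-in vector, and to bound it coordinate by coordinate. Writing $\tilde{\mu}_i$ for the $i$-th coordinate of $\bxt$, i.e.\ the estimator of $\mu_i$ formed from its $\tilde{T}_i^{(r)} \ge N := 2nL\log(6n/\delta)\epsilon^{-1}$ pulls, the target inequality $\frac{L}{2}\|\bxt-\bx\|_2^2 \le \epsilon/2$ is equivalent to $\sum_{i=1}^n (\tilde{\mu}_i - \mu_i)^2 \le \epsilon/L$. Since the pulls are i.i.d.\ standard normal, $\tilde{\mu}_i - \mu_i \sim \CN(0, 1/\tilde{T}_i^{(r)})$ independently across $i$, with variance at most $1/N$.

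First I would apply a Gaussian (equivalently, sub-Gaussian) tail bound to each coordinate at confidence level $\delta/(3n)$. For $\tilde{\mu}_i - \mu_i \sim \CN(0, 1/\tilde{T}_i^{(r)})$, choosing the threshold $t_i = \sqrt{2\log(6n/\delta)/\tilde{T}_i^{(r)}}$ yields $\P(|\tilde{\mu}_i - \mu_i| \ge t_i) \le 2\exp(-\log(6n/\delta)) = \delta/(3n)$, so that $(\tilde{\mu}_i - \mu_i)^2 \le 2\log(6n/\delta)/\tilde{T}_i^{(r)}$ holds with probability at least $1-\delta/(3n)$.

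Next I would union bound over the $n$ coordinates, giving total failure probability at most $\delta/3$, which matches the claimed confidence. On the complementary event, using $\tilde{T}_i^{(r)} \ge N$ for every $i$ and summing gives
\[
\sum_{i=1}^n (\tilde{\mu}_i - \mu_i)^2 \le \frac{2n\log(6n/\delta)}{N} = \frac{2n\log(6n/\delta)\,\epsilon}{2nL\log(6n/\delta)} = \frac{\epsilon}{L},
\]
where the middle step substitutes the definition of $N$. Multiplying by $L/2$ produces exactly $\frac{L}{2}\|\bxt-\bx\|_2^2 \le \epsilon/2$.

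There is no serious obstacle here; the only thing to watch is that the constants are balanced, so that the per-coordinate budget $\delta/(3n)$ sums to $\delta/3$ while the worst-case variance bound $1/N$, multiplied by the $2\log(6n/\delta)$ tail factor and the $n$ coordinates, cancels against $N$ to leave exactly $\epsilon/L$. An alternative would be to treat $\sum_i(\tilde{\mu}_i-\mu_i)^2$ directly as a weighted $\chi^2$ variable and apply a Laurent--Massart concentration bound, which would tighten the dependence on $\log(6n/\delta)$ by exploiting averaging across coordinates; however, the coordinate-wise union bound above is simpler and already delivers the stated constant, so I would use it.
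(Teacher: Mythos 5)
Your proof is correct and matches the paper's argument: both reduce the claim to a per-coordinate Gaussian (Hoeffding) tail bound at level $\delta/(3n)$ followed by a union bound over the $n$ coordinates, with identical constants. The only cosmetic difference is that you fix the confidence level and derive the deviation radius, while the paper fixes the radius $\sqrt{\epsilon/(nL)}$ (via the bound $\|\bxt-\bx\|_2^2 \le n\|\bxt-\bx\|_\infty^2$) and bounds the resulting probability; these are the same calculation.
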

\begin{proof}

We have by Hoeffding's inequality \cite{wainwright2019high} that if arm $i$ has been pulled $\tilde{T}_i^{(r)}$ times then
\begin{align*}
    \P\left(\frac{L}{2}\|\bxt - \bx\|_2^2 \ge \epsilon/2\right)
    \le \P\left(\|\bxt - \bx\|_\infty^2 \ge \frac{\epsilon}{nL}\right) 
    \le 2\sum_{i=1}^n \exp\left(- \frac{\tilde{T}_i^{(r)}\epsilon}{2nL}\right) 
    \le \delta/3, \numberthis
\end{align*}
where the last inequality holds for $\tilde{T}_i^{(r)}\ge 2nL\log(6n/\delta)\epsilon^{-1}$ which is satisfied by $B_0 \ge 2n^2L\log(6n/\delta)\epsilon^{-1}$.
\end{proof}

This requirement is non-adaptive, being independent of $\bx$, where above this sampling threshold these higher order effects can be readily bounded.

Moving to the error in our linear approximation in \eqref{eq:genFnormal}, we can design our ideal termination condition by appealing to the linear case, noting that by Hoeffding's inequality
\begin{equation}
    \sum_{i=1}^n \frac{g_i^2}{\tilde{T}_i^{(r)}} \le \frac{\epsilon^2}{8\log(6/\delta)} \quad \implies \quad \P\left( |\nabla f(\bx)^\top (\bxt-\bx)| > \epsilon/2 \right) \le \delta/3.\label{eq:genfTermCond}
\end{equation} 

Without knowledge of the true $\nabla f(\bx)$ however, we cannot compute this stopping condition.
This necessitates an algorithmic stopping condition involving our estimators $\hat{g}_i^{(r,L)}$.
Conditioning on the good event $\xi$, we proceed with our analysis in this section assuming that our gradient bounds are well behaved.
This leads to the stopping condition in \Cref{alg:generalf} of
\begin{align*}
    &\sum_{i=1}^n \frac{(\hat{g}_i^{(r,U)})^2}{\tilde{T}_i^{(r)}} \le \frac{\epsilon^2}{8\log(6/\delta)}\quad  \implies \quad \sum_{i=1}^n \frac{g_i^2}{\tilde{T}_i^{(r)}} \le \frac{\epsilon^2}{8\log(6/\delta)}, \numberthis \label{eq:genfTermCondTrue}
\end{align*}
where the implication holds on $\xi$, meaning that our algorithm will not incorrectly terminate early except on this error event.

\subsection{Bounding the sample complexity}

We analyze the number of pulls required for the bound in \eqref{eq:genfTermCondTrue} to hold by splitting our arms into 2 sets; those with $g_i$ larger or smaller than $\tau=4L\sqrt{n}C_r$.   
Note that $\tilde{B}_r \ge B_r/n$. 
Conditioned on $\xi$ we have that
\begin{align*}
     \sum_{i=1}^n \frac{(\hat{g}_i^{(r,U)})^2}{\tilde{T}_i^{(r)}}
     &\underset{(a)}{\le}\sum_{i=1}^n \frac{(g_i + 2L\sqrt{n}C_r)^2}{\tilde{T}_i^{(r)}}\\
     &\le \sum_{i:g_i<\tau}\frac{(g_i + 2L\sqrt{n}C_r)^2}{ B_r/n } + \sum_{i: g_i\ge \tau} \frac{(g_i + 2L\sqrt{n}C_r )^2}{\alpha_i B_r}\\
     &\le \sum_{i:g_i<\tau} \frac{36nL^2C_r^2}{B_r/n} + \sum_{i: g_i\ge \tau} \frac{9g_i H}{4B_r }\\
     &\le \frac{36n^3L^2C_r^2}{B_r} + \frac{9H^2}{4B_r}, \numberthis \label{eq:terminationAnalysis}
\end{align*}
where (a) utilizes the event $\xi$.

Analyzing the first term in \eqref{eq:terminationAnalysis}, we see that this is less than $\frac{\epsilon^2}{16\log(6/\delta)}$ for all $r>0$ as long as $B_0\ge 17n^{2}L\log(12n/\delta) \epsilon^{-1}$, since
\begin{align*}
    \frac{36n^3L^2C_r^2}{B_r}
    \le \frac{36n^3L^2C_1^2}{2B_0}
    \le \frac{18n^4L^2\log(12n/\delta)}{B_0^2 }
    \le \frac{\epsilon^2}{16\log(6/\delta)}. \numberthis
\end{align*}

Moving to the second term  in \eqref{eq:terminationAnalysis}, we see that it is less than $\frac{\epsilon^2}{16\log(6/\delta)}$ when $B_r \ge 36 H^2 \log(6/\delta) \epsilon^{-2}$.
This means that our algorithm will terminate no later than the first round $r$ such that
\begin{equation}
    B_r \ge 36 H^2 \log(6/\delta) \epsilon^{-2}. \label{eq:generalfBudgetReq}
\end{equation}
Due to the doubling of budgets round to round, the round where this condition is met must satisfy $B_r \le 72 H^2 \log(6/\delta) \epsilon^{-2}$.

Analyzing the total sample complexity, we see that in round $r$ Algorithm \ref{alg:generalf} takes at most $B_r+n = B_02^r+n$ samples, yielding a cumulative sample complexity of at most $2B_r-B_0+rn$ samples by the end of round $r$. 
After the termination condition is met, the algorithm will perform $\sum_{i=1}^n \tilde{T}_i^{(r)} \le 2B_r+n$ additional pulls before returning its estimate. Hence, if our algorithm terminates at the end of round $r$ its sample complexity can be upper bounded by $4B_r -B_0+ (r+1)n\le 5B_r$, as $B_0\ge n$ and $B_r \ge 2^r n \ge r n$ for $r\ge 1$.

Analyzing the failure probability of \Cref{alg:adaConf}, we see that we will output an estimate $f(\bxt)$ with error at most $\epsilon$ with probability at least $1-\delta$, as by \Cref{lem:conf},  Lemma \ref{lem:secondorder}, and \eqref{eq:genfTermCond} we have that 
\begin{align*}
    \P\left(|f(\bxt)-f(\bx)|\ge \epsilon \right)
    &\le \P\left(|f(\bxt)-f(\bx)|\ge \epsilon \ | \ \xi\right) + \P(\bar{\xi})\\
    &\le \P\left( |\nabla f(\bxt)^\top (\bxt - \bx)| \ge \epsilon/2 \ | \ \xi \right)
    +\P\left( \frac{L}{2} \|\bxt-\bx\|_2^2 \ge \epsilon/2 \ \middle|\ \xi\right) + \P(\bar{\xi})\\
    &\le \delta. \numberthis
\end{align*}

This means that Algorithm 1 will with probability at least $1-\delta$ return an estimate $f(\bxt)$ of $f(\bx)$ such that $|f(\bxt)-f(\bx)|\le \epsilon$ using at most
\begin{equation} \label{eq:thm1Result}
    \frac{85n^2L \log(12n/\delta)}{\epsilon}+ \frac{360H^2 \log(6/\delta)}{\epsilon^2}
\end{equation}
samples, as desired.

\paragraph{More refined termination conditions:}
Note that our termination condition essentially only requires that we provide an upper bound on the first order error.
In certain cases, we can utilize the structure of the function to provide tighter, but more computationally intensive, stopping conditions, where on $\xi$ we have that
\begin{equation}
    \sum_{i=1}^n \frac{g_i^2}{\tilde{T}_i^{(r)}}
    \le \max_{\b{y}: \|\b{y} - \bxh^{(r)}\|_\infty \le C_r} \sum_{i=1}^n \frac{|\nabla_i f(\b{y})|^2}{\tilde{T}_i^{(r)}}
    \le \sum_{i=1}^n \frac{\max_{\b{y}: \|\b{y} - \bxh^{(r)}\|_\infty \le C_r} |\nabla_i f(\b{y})|^2}{\tilde{T}_i^{(r)}}
    \le \sum_{i=1}^n \frac{(|\nabla_i f(\bxh^{(r)})| + L\sqrt{n}C_r)^2}{\tilde{T}_i^{(r)}}.
\end{equation}
Currently, our algorithms utilize the rightmost quantity in their stopping conditions, but the middle two can provide better performance depending on the structure of the function.
Concretely, the second from right quantity can be thought of as computing ``pessimistic'' estimators for the magnitudes of the gradient entries, where each gradient entry is assumed to be as large as it could possibly be within its confidence interval.
This contrasts with our sampling frequency construction, where for the $\ba^{(r)}$ we utilize ``optimistic'' estimators of our gradient magnitudes, where they are assumed to be as small as they could possibly be.
Note that a similar argument can be made for the construction of sampling frequencies, taking
\begin{equation}
    \alpha_i^{(r)} = \min_{\b{y}: \|\b{y} - \bxh^{(r)}\|_\infty} \frac{\nabla_i f(\b{y})}{\|\nabla f(\b{y})\|_1}.
\end{equation}
This duality between our sampling and termination frequencies allows our algorithm to efficiently run and guarantee an $\epsilon$ correct output.

\subsection{Proof of \Cref{cor:l2main}, tightened results for $f(\bx)=\|\bx\|_2^2$} \label{app:l2tighter}

In the case where the function $f$ is separable, i.e. $f(\bx)= \sum_{i=1}^n f_i(\mu_i)$, where each $f_i$ has $L$-Lipschitz derivatives, we are able to give a tighter bound on $\hat{g}_i^{(r,L)}$ as $\hat{g}_i^{(r,L)} \ge g_i -2L C_r$, since the error in estimating $\nabla_i f(\bx)$ depends only on the error in estimating $\mu_i$.
This allows us to loosen the termination condition in Line \ref{line:alg1TermCond} of \Cref{alg:generalf}, and provide a tighter bound on the lower order sample complexity terms.
If in addition $f_i(\mu_i) = \mu_i^2$ (more generally, quadratic), then we are able to partially correct for the second order error, as our plug-in estimator yields $f_i(\hat{\mu}_i) = \mu_i^2 + 2\mu_i W_i/\sqrt{\tilde{B}_r} + W_i^2/\tilde{B}_r$ for standard normal $W_i$.
While the crossterm has zero mean and involves the unknown $\mu_i$, our standard first order error term, the final term $W_i^2/\tilde{B}_r$ has expectation $1/\tilde{B}_r$ which does not depend on any unknown parameters, and can be subtracted to yield an \textit{unbiased} estimator of $f(\bx)$.
Dropping the tilde and overloading $T_i$ as $T_i=\tilde{T}_i^{(r)}$ for notational simplicity, we see that estimating $f_i(\mu_i)$ as $f_i(\tilde{\mu}_i^{(r)}) - \frac{1}{T_i}$ allows us to attain a tighter bound, as stated in \Cref{cor:l2main}.

\begin{proof}[Proof of \Cref{cor:l2main}]
Expanding the error of our approximation for our modified algorithm, we have that
\begin{align*}
    \P\left(\left|\|\bxt\|_2^2-\sum_{i=1}^n \frac{1}{T_i}-\|\bx\|_2^2\right|\ge \epsilon \right)
    &=\P\left(\left|\sum_{i=1}^n \left(\mu_i + W_i/\sqrt{T_i}\right)^2 -\frac{1}{T_i}-\sum_{i=1}^n \mu_i^2\right|\ge \epsilon \right)\\
    &=\P\left(\left|\sum_{i=1}^n \frac{2\mu_iW_i}{\sqrt{T_i}} +\sum_{i=1}^n \frac{1}{T_i} (W_i^2-1)\right|\ge \epsilon \right)\\
    &\le \P\left(\left|\sum_{i=1}^n \frac{2\mu_iW_i}{\sqrt{T_i}} \right|\ge \epsilon/2 \right)
    + \P\left(\left|\sum_{i=1}^n \frac{1}{T_i} (W_i^2-1)\right|\ge \epsilon/2 \right).\numberthis
\end{align*}
For each $i$, $\frac{1}{T_i} (W_i^2-1)$ is independent across $i$, has 0 mean, and is $\left(\frac{2}{T_i},\frac{4}{T_i}\right)$-sub-exponential \cite{wainwright2019high}. 
This shows that the sum is $(\nu_{*},b_{*})$-sub-exponential, where $\nu_{*}= 2\sqrt{\sum_{i=1}^n T_i^{-2}}$ and $b_{*} = \frac{4}{\min_i T_i}$.
Hence,
\begin{equation} \label{eq:l2secondOrder}
    \P\left( \left| \sum_{i=1}^n \frac{Z_i^2}{T_i} - \frac{1}{T_i}\right| > \epsilon/2 \right)
    \le \begin{cases}
    2\exp\left(-\frac{\epsilon^2}{8\sum_{i=1}^nT_i^{-2}} \right) & \text{ if } 0\le \epsilon \le \frac{\nu_{*}}{b_{*}}\\
    2\exp\left(-\frac{\epsilon \min_i T_i}{8} \right) & \text{ if }  \epsilon > \frac{\nu_{*}}{b_{*}}\\
    \end{cases}
\end{equation}

In order to guarantee that our higher order error terms are less than $\epsilon/2$ with probability at least $1-\delta/3$, we tackle the two possible cases.
In the first case $\sum_i T_i^{-2} \le 4n^3/B_r^2\le n^3/B_0^2$ and so $B_0 \ge \sqrt{8}n^{3/2} \log(6/\delta)\epsilon^{-1}$ is sufficient.
In the second case, $\min_i T_i \ge B_r/n\ge 2B_0/n$, and so $B_0 \ge 4n \log(6/\delta)\epsilon^{-1}$ is sufficient.
Thus, taking $B_0= 4n^{3/2}\log(6/\delta)\epsilon^{-1}$ guarantees that our error due to these second order terms in \eqref{eq:l2secondOrder} is less than $\epsilon/2$ with probability at least $1-\delta/3$.

With our tighter bound on $g_i$ due to the separability of $f$ across coordinates, we have that the first term in \eqref{eq:terminationAnalysis}, due to indices with $g_i < \tau$, is upper bounded by $36 n^2 L^2 C_r^2/B_r$.
This can be made sufficiently small by taking $B_0\ge 17n^{3/2}L\log(12n/\delta)\epsilon^{-1}$.

Combining these improved bounds on $B_0$ with the analysis from \Cref{thm:main} yields a reduced overall sample complexity of 
\begin{equation}
    \frac{85n^{3/2}L \log(12n/\delta)}{\epsilon}+ \frac{360H^2 \log(6/\delta)}{\epsilon^2}
\end{equation}
analogously to \eqref{eq:thm1Result}.
\end{proof}

\section{Proofs for \Cref{alg:thresholding}: thresholding} \label{app:thresh}

\textbf{Proof sketch:} the proof proceeds similarly to that of \Cref{thm:main}, with some interesting nuances stemming from the fact that our $\epsilon$ is now unknown.
Concretely, since we do not a priori know what accuracy to estimate $f(\bx)$ to, we need to sample according to our estimated optimal frequencies $\alpha_i^{(r)}$ in every round to generate $f(\bxt^{(r)})$ to estimate the necessary accuracy as $|f(\bxt^{(r)}) - \tau|$. 
We begin by defining the event $\tilde{\xi}$ where our estimators $f(\bxt^{(r)})$ fall within their $C_r^f$ confidence intervals, and show in \Cref{lem:thresh_conf} that this occurs with probability at least $1-2\delta/3$.
Next, we show that on $\xi$ and $\tilde{\xi}$, our algorithm will terminate correctly.
Then, since our estimators $f(\bxt^{(r)})$ are correct up to an error of $C_r^f$, we show that when $C_r^f < |f(\bx)-\tau|/2$, the termination condition must be met.
Finally, we bound the number of samples required for this to occur.

\subsection{Algorithm terminates correctly}
As before we condition on the good event $\xi$, where our mean estimators fall within their confidence intervals.
We define an additional good event $\tilde{\xi}$ where our estimators $f(\bxt^{(r)})$ fall within their confidence intervals as
\begin{equation}
    \tilde{\xi} :=  \bigcap_{r \in \N} \left\{ |f(\bxt^{(r)}) - f(\bx)| < C^f_r \right\}.
\end{equation}
With this, we can state the following Lemma.
\begin{lem} \label{lem:thresh_conf}
Following \Cref{alg:thresholding}, 
we have that
\begin{equation*}
    \P \left( \tilde{\xi} \ \middle| \ \xi  \right) \ge 1-2\delta/3
\end{equation*}
where $\xi$ is the event that our confidence intervals on the estimators $\hat{\mu}_i^{(r)}$ hold.
\end{lem}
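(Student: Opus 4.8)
The plan is to control the function-evaluation error through the same first-order/higher-order split used for \Cref{thm:main}, now matched term-by-term to the two pieces of $C^f_r = C^f_{r,1} + C^f_{r,2}$. Writing $f(\bxt^{(r)}) - f(\bx) = \nabla f(\bx)^\top(\bxt^{(r)} - \bx) + R_r$, the $L$-Lipschitz gradient bound gives $|R_r| \le \frac{L}{2}\|\bxt^{(r)} - \bx\|_2^2$. I would bound the linear term by $C^f_{r,1}$ and the remainder $R_r$ by $C^f_{r,2}$, each via a union bound over rounds, and then recombine by the triangle inequality. Everything is carried out conditional on $\xi$, which is precisely what lets me replace the unknown $g_i$ by the computable upper bounds $\hat{g}_i^{(r,U)}$.

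For the linear term, first I would condition on the uniform samples defining $\bxh^{(r)}$ (hence on $\xi$, and on the resulting $\hat{g}_i^{(r,U)}$ and $\tilde{T}_i^{(r)}$, which are then deterministic). Given this, $\nabla f(\bx)^\top(\bxt^{(r)} - \bx)$ is a centered Gaussian with variance $\sum_i g_i^2/\tilde{T}_i^{(r)}$, which on $\xi$ is at most $\sum_i (\hat{g}_i^{(r,U)})^2/\tilde{T}_i^{(r)}$ since $g_i \le \hat{g}_i^{(r,U)}$. A Gaussian tail bound at the threshold $C^f_{r,1} = \sqrt{2\log(24r^2/\delta)\sum_i (\hat{g}_i^{(r,U)})^2/\tilde{T}_i^{(r)}}$ then yields a per-round failure probability of at most $2\exp(-\log(24r^2/\delta)) = \delta/(12r^2)$; summing over $r \ge 1$ gives a total of at most $\frac{\delta}{12}\sum_{r\ge1} r^{-2} < \delta/3$.

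For the remainder I would bound $\frac{L}{2}\|\bxt^{(r)} - \bx\|_2^2 \le \frac{Ln}{2}\|\bxt^{(r)} - \bx\|_\infty^2$ and then apply an $\ell_\infty$ concentration over arms and rounds, exactly as in \Cref{lem:conf} but at confidence level $24nr^2/\delta$: since $\bxt^{(r)}$ is built from at least $\tilde{B}_r$ samples per arm, $\P(\max_i |\bxt_i^{(r)} - \mu_i| \ge \sqrt{2\log(24nr^2/\delta)/\tilde{B}_r})$ summed over $i \in [n]$ and $r \in \N$ is at most $\sum_{r\ge1}\frac{\delta}{12r^2} < \delta/3$. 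On the complementary event this gives $\frac{L}{2}\|\bxt^{(r)} - \bx\|_2^2 \le Ln\log(24nr^2/\delta)/\tilde{B}_r = C^f_{r,2}$ for every $r$. A union bound over the two failure events then yields $|f(\bxt^{(r)}) - f(\bx)| \le C^f_{r,1} + C^f_{r,2} = C^f_r$ for all $r$ with conditional probability at least $1 - 2\delta/3$, which is exactly $\tilde{\xi}$.

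The main obstacle is the bookkeeping around conditioning and measurability rather than any single inequality: the threshold $C^f_{r,1}$ and the allocation $\tilde{T}_i^{(r)}$ are themselves random, so the Gaussian tail bound must be applied after conditioning on the round-$r$ uniform estimators (via the tower property), and one must handle the samples realizing $\bxt^{(r)}$ consistently with the definition of $\xi$, which constrains the uniform estimators. The variance-replacement step $g_i \le \hat{g}_i^{(r,U)}$ is exactly where $\xi$ enters, so the whole argument is stated conditionally on $\xi$, and it is this interplay — deterministic on $\xi$, Gaussian given the conditioning — that the proof must make rigorous.
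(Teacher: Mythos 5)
You follow the paper's proof almost exactly --- the same split of $f(\bxt^{(r)})-f(\bx)$ into a first-order term bounded by $C^f_{r,1}$ (Gaussian tail, using $g_i\le\hat g_i^{(r,U)}$ on $\xi$) and a Lipschitz remainder bounded by $C^f_{r,2}$ (coordinatewise concentration), followed by a union bound over rounds --- so the substance is correct; the one real difference is the treatment of the conditioning on $\xi$. The paper avoids the measurability bookkeeping you describe by a crude deconditioning step: for any event $E$, $\P(E\mid\xi)\le \P(E)/\P(\xi)\le 2\P(E)$ since $\P(\xi)\ge 1-\delta/3\ge 2/3$, so it applies \emph{unconditional} Hoeffding bounds and pays a factor of $2$ on each per-round failure probability, arriving at $\delta/(3r^2)$ per round and $2\delta/3$ in total. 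You instead invoke the tower property, conditioning on the uniform samples so that $\hat g_i^{(r,U)}$ and $\tilde T_i^{(r)}$ are fixed and the fresh samples building $\bxt^{(r)}$ are conditionally Gaussian; this is tighter (no factor of $2$, per-round failure $\delta/(6r^2)$) and makes rigorous the conditional-Gaussianity step that the paper leaves implicit. Two points to patch: (i) $\xi$ constrains the uniform estimators of \emph{every} round, so you must condition on the full uniform-sample $\sigma$-algebra, not just the samples defining $\bxh^{(r)}$ --- the argument goes through unchanged since the samples realizing $\bxt^{(r)}$ are independent of all of them; (ii) your assertion that $\bxt^{(r)}$ is built from at least $\tilde B_r$ samples per arm holds only up to a constant, since $\tilde T_i^{(r)}\ge\lceil B_r/n\rceil$ while $\tilde B_r\approx 2B_r/n$, but the paper's own bound on the second-order deviation implicitly relies on the same comparison, so this is shared constant-level slack rather than a flaw specific to your route.
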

\begin{proof}
Due to the $L$-Lipschitz gradients of $f$ we have that
\begin{equation}
    |f(\bxt)-f(\bx)| \le |\nabla f(\bx)^\top (\bxt - \bx)| + \frac{L}{2}\|\bxt-\bx\|_2^2,
\end{equation}
and so we can upper bound the deviation probability as
\begin{align*}
    \P&\left(|f(\bxt^{(r)})-f(\bx)|\ge C^f_r \ \middle|\  \xi \right)\\
    &\le \P\left(|\nabla f(\bx)^\top (\bxt^{(r)} - \bx)| + \frac{L}{2}\|\bxt^{(r)}-\bx\|_2^2\ge \sqrt{2\log(24r^2/\delta)\sum_i \frac{g_i^2}{\tilde{T}_i^{(r)}}} + C^f_{r,2} \ \middle|\ \xi\right)\\
    &\le \P\left( |\nabla f(\bx)^\top (\bxt^{(r)} - \bx)| \ge \sqrt{2\log(24r^2/\delta)\sum_i \frac{g_i^2}{\tilde{T}_i^{(r)}}} \ \middle|\ \xi \right) + \P\left( \frac{L}{2}\|\bxt^{(r)}-\bx\|_2^2 \ge C^f_{r,2} \ \middle|\ \xi \right)\\
    &\le 2\P\left( |\nabla f(\bx)^\top (\bxt^{(r)} - \bx)| \ge \sqrt{2\log(24r^2/\delta)\sum_i \frac{g_i^2}{\tilde{T}_i^{(r)}}} \right) + 2\P\left( \frac{L}{2}\|\bxt^{(r)}-\bx\|_2^2 \ge C^f_{r,2} \right). \numberthis \label{eq:threshterm1}
\end{align*}
where we used the fact that on $\xi$ we have that $\hat{g}_i^{(r,U)}\ge g_i$, allowing us to bound the width of $C_{r,1}^f$ in terms of deterministic quantities, the $g_i$ (recalling the definitions of these constants in \eqref{eq:cThresh}).
We additionally noted that for an event $E$, $\P(E|\xi) = \frac{\P(E)-\P(E|\bar{\xi}) \P(\bar{\xi})}{\P(\xi)} \le 2\P(E)$, as $\P(\xi)\ge 2/3$.

Analyzing the first term in \eqref{eq:threshterm1}, we see by Hoeffding's inequality \cite{wainwright2019high} that
\begin{align*}
    \P\left( |\nabla f(\bx)^\top (\bxt^{(r)} - \bx)| \ge \sqrt{2\log(24r^2/\delta)\sum_i \frac{g_i^2}{\tilde{T}_i^{(r)}}} \right)
    \le 2\exp\left(-\log(24r^2/\delta) \right)
    \le \frac{\delta}{12r^2}, \numberthis \label{eq:33}
\end{align*}
as $\nabla f(\bx)^\top (\bxt^{(r)} - \bx) \sim \CN\left(0,\sum_{i=1}^n \frac{g_i^2}{\tilde{T}_i^{(r)}}\right)$.
Analyzing the second term in \eqref{eq:threshterm1}, we have by Hoeffding's inequality that
\begin{align*}
    \P\left( \frac{L}{2}\|\bxt^{(r)}-\bx\|_2^2 \ge C_{r,2}^f\right)
    &\le \frac{\delta}{12r^2}. \numberthis \label{eq:34}
\end{align*}
Plugging \eqref{eq:33} and \eqref{eq:34} into \eqref{eq:threshterm1}, we have that
\begin{equation}
    \P\left(|f(\bxt^{(r)})-f(\bx)|\ge C^f_r \ \middle|\  \xi \right)\le \frac{\delta}{3r^2},
\end{equation}
and so
\begin{align*}
    \P\left(\tilde{\xi} \ \middle| \ \xi \right) 
    &= 1-\P \left(\bigcup_{r \in \N} \left\{ |f(\bxt^{(r)}) - f(\bx)| < C^f_r \right\} \ \middle| \ \xi \right)\\
    &\ge 1-\sum_{r\ge 1} \frac{\delta}{3r^2}\\
    &\ge 1-2\delta/3. \numberthis
\end{align*}
\end{proof}

Hence, on the events $\tilde{\xi},\xi$, our estimator $f(\bxt^{(r)})$ is within its confidence interval for all $r$.
This implies that on these events, \Cref{alg:thresholding} well never incorrectly output whether $f(\bxt)>\tau$:
assume for the sake of contradiction that $\xi,\tilde{\xi}$ occur, and that without loss of generality $f(\bx)>\tau$ but the algorithm outputs $f(\bxt)<\tau$.
Then for all $r$ we have
\begin{equation}
    f\left(\bxt^{(r)}\right)-\tau  
    \ge f(\bx) - \tau - C_r^f
    >-C_r^f.
\end{equation}
However, the algorithm will only terminate and output $f(\bxt)<\tau$ if $f(\bxt^{(r)}) - \tau < -C_r^f$, a contradiction.
Hence \Cref{alg:thresholding} will not output an incorrect decision on $\xi,\tilde{\xi}$.
Bounding this probability, we have that
\begin{equation} \label{eq:threshErrorProb}
    \P(\xi \cap \tilde{\xi}) = \P(\xi) \P(\tilde{\xi} \ |\  \xi) \ge (1-\delta/3)(1-2\delta/3) \ge 1-\delta.
\end{equation}

\subsection{Bounding the sample complexity}
We see that this algorithm's sample complexity will not be too large since it will stop sampling when $2C_r^f< |f(\bx)-\tau|$, as then we have that 
\begin{equation}
    |f(\bxt^{(r)}) - \tau| \ge |f(\bx)-\tau| - |f(\bxt^{(r)} - f(\bx)|\ge C_r^f,
\end{equation}
using the fact that $|f(\bxt^{(r)} - f(\bx)|\le C_r^f$ on $\xi,\tilde{\xi}$.
We can now provide our proof regarding \Cref{alg:thresholding}.

\begin{proof}[Proof of \Cref{thm:thresholding}]
\Cref{alg:thresholding} returns the correct answer with probability at least $1-\delta$, as shown in \eqref{eq:threshErrorProb}.

To provide the desired sample complexity guarantee, we bound the number of rounds required until $C_r^f < |f(\bx)-\tau|/2$ on $\xi,\tilde{\xi}$.
This is accomplished similarly to before;
\begin{align*}
    C_r^f &=\sqrt{2\log(24r^2/\delta)\sum_i \frac{\left(\hat{g}_i^{(r,U)}\right)^2}{\tilde{T}_i^{(r)}}}  + C^f_{r,2}\\
    &\underset{(a)}{\le} \sqrt{2\log(24r^2/\delta)\sum_i \frac{\left(g_i + 2L\sqrt{n}C_r\right)^2}{\tilde{T}_i^{(r)}}} + \frac{Ln \log (24nr^2/\delta)}{\tilde{B}_r}\\
    &\underset{(b)}{\le} \sqrt{\frac{2\log(24r^2/\delta) \left(36n^3L^2C_r^2+9H^2/4\right)}{B_r} } +  \frac{Ln^2 \log (24nr^2/\delta)}{B_r}\\
    &\le \sqrt{\frac{9H^2\log(24r^2/\delta)}{2B_r}} + \sqrt{\frac{144 n^3L^2\log^2(24r^2/\delta)}{B_r^2} } +  \frac{Ln^2 \log (24nr^2/\delta)}{B_r}\numberthis
\end{align*} 
where in (a) we utilized the event $\xi$, and in (b) we divided the sum into two two sets based on whether $g_i$ dominates the confidence interval or vice versa.
This must be less than $|f(\bx)-\tau|/2$ when
\begin{equation}
    B_r \ge \frac{18\log(24r^2/\delta) H^2}{(f(\bx)-\tau)^2} + \frac{24Ln^2 \log (24nr^2/\delta)}{|f(\bx)-\tau|}, \label{eq:threshPreTilde}
\end{equation}
and so our algorithm will terminate in the first round $r$ where this condition is met.
Since $B_r =n2^r$, we have that $\log(r) = \log \log (B_r/n)$, yielding that our algorithm terminates when
\begin{equation}
    B_r = \tilde{O}\left( \frac{H^2\log(1/\delta)}{(f(\bx)-\tau)^2} + \frac{n^2L\log(n/\delta)}{|f(\bx)-\tau|}\right),
\end{equation}
where $\tilde{O}$ suppresses $\log \log (\cdot)$ terms in poly$\left(n,\delta,H, (f(\bx)-\tau)^{-1} \right)$, which can be explicitly obtained by solving in \eqref{eq:threshPreTilde}.
Since \Cref{alg:thresholding} makes at most $3B_r+n$ arm pulls per round, we have that its total budget through round $r$ can be at most $\sum_{\ell=1}^r (3B_\ell+n) \le 6B_r + nr \le 7B_r$.
This means that the total number of samples used will be
\begin{equation}
    \tilde{O}\left( \frac{H^2\log(1/\delta)}{(f(\bx)-\tau)^2} + \frac{n^2L\log(n/\delta)}{|f(\bx)-\tau|}\right),
\end{equation}
yielding the desired result.
\end{proof}

\section{Proofs for \Cref{alg:adaConf}: variance adaptive algorithm}\label{app:adavar}


\textbf{Proof sketch:} we begin by stating concentration results on sample variances and sample means of bounded random variables in Lemmas \ref{lem:bernvar} and \ref{lem:bernmean}.
We then redefine the good event $\xi$ for this algorithm as the event where $\{\hat{\mu}_i^{(r)}\}$ and $\{\hat{\sigma}_i^{(r)}\}$ stay within their confidence intervals.
We show in \Cref{lem:confxi} that this occurs with probability at least $1-\delta/2$.
These immediately yields control of our gradient estimators and bounds $\{\hat{g}_i^{(r,L)},\hat{g}_i^{(r,U)}\}$ and the sampling frequencies $\{\beta_i^{(r)}\}$ and $\{\alpha_i^{(r)}\}$.
In \Cref{lem:adaConfSecond} we show that the second order error incurred by Algorithm 3 is at most $\epsilon/2$, and in \Cref{lem:adaConffirst} bound the first order error by $\epsilon/2$ when the termination condition is met.
These are combined in \Cref{lem:adaConfError} to show that the output has error at most $\epsilon$.
Lemmas \ref{lem:adaConf_bernstein_lower} and \ref{lem:adaConfBudget} guarantee that once the round budget is sufficiently large, the termination condition must be met.
Finally, we invert this requirement to yield a bound on the total number of pulls required by our algorithm.

\subsection{Algorithm terminates correctly}
We begin by controlling the sample variance $\hat{\sigma}_i^{(r)}$ of each of the $\hat{\mu}_i^{(r)}$ estimators.
For this, we turn to a concentration result regarding the sample variance.
\begin{lem}[Concentration of $\hat{\sigma}_i^{(r)}$, Theorem 10 \cite{maurer2009empirical}] \label{lem:bernvar}
Let $n\ge2$ and $\b{X} = (X_1,\hdots,X_n)$ be a vector of independent random variables with values in [0,1]. Then for $\delta'>0$ we have, writing $\E V_n$ for $\E_{\b{X}} V_n(\b{X})$,
\begin{equation*}
    \P \left(\left|\sqrt{V_n(\b{X})} - \sqrt{\E V_n}\right| > \sqrt{\frac{2\log (2/\delta')}{n-1}}\right) \le \delta,
\end{equation*}
where the sample variance $V_n(\b{X}) := \frac{1}{n(n-1)} \sum_{i<j} (X_i-X_j)^2$.
\end{lem}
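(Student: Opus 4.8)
The plan is to treat this as a concentration statement for the sample standard deviation $\sqrt{V_n}$ viewed as a function of the independent coordinates $X_1,\dots,X_n$, following \cite{maurer2009empirical}. The starting observation is that $\sqrt{V_n(\mathbf X)}$ is a rescaled seminorm of the centered data: writing $P := I - \tfrac1n \mathbf 1 \mathbf 1^\top$ for the orthogonal projection onto $\mathbf 1^\perp$, one has the identity $(n-1)V_n(\mathbf X) = \|P\mathbf X\|_2^2$, so that $\sqrt{V_n(\mathbf X)} = \tfrac{1}{\sqrt{n-1}}\|P\mathbf X\|_2$. As a seminorm composed with a linear map this is a convex function of $\mathbf X$, and since $\|P\|_{\mathrm{op}}=1$ it is $\tfrac{1}{\sqrt{n-1}}$-Lipschitz in the Euclidean metric; in particular changing a single coordinate within $[0,1]$ moves $\sqrt{V_n}$ by at most $\tfrac{1}{\sqrt{n-1}}$.

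First I would apply a concentration inequality for convex, Euclidean-Lipschitz functions of bounded independent random variables (Talagrand's convex-distance inequality, or equivalently the entropy-method bound used by Maurer and Pontil) to $\sqrt{V_n}$. Because the function is convex and $\tfrac{1}{\sqrt{n-1}}$-Lipschitz, this yields two-sided sub-Gaussian concentration of $\sqrt{V_n}$ about its expectation at the scale $\sqrt{\tfrac{\log(1/\delta')}{n-1}}$ --- crucially at the $n^{-1/2}$ rate, rather than the $O(1)$ rate that a naive application of McDiarmid's inequality to $\sqrt{V_n}$ would give (the per-coordinate differences are only $\tfrac{1}{\sqrt{n-1}}$, so $\sum_k c_k^2 = \tfrac{n}{n-1}$ and bounded differences alone is too weak here). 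Convexity is what rescues the correct rate.

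It then remains to recenter from $\E\sqrt{V_n}$ to $\sqrt{\E V_n}$. The upper direction is immediate from Jensen, $\E\sqrt{V_n}\le \sqrt{\E V_n}$. For the lower direction I would bound the Jensen gap using $\E V_n = (\E\sqrt{V_n})^2 + \Var(\sqrt{V_n})$ together with subadditivity of the square root, giving $0\le \sqrt{\E V_n} - \E\sqrt{V_n}\le \sqrt{\Var(\sqrt{V_n})}$; the sub-Gaussian tail from the previous step forces $\Var(\sqrt{V_n}) = O\big(\tfrac{1}{n-1}\big)$, so the gap is of the same $n^{-1/2}$ order as the fluctuations themselves and carries no hidden dependence on the population variance. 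Absorbing this gap into the deviation $t$ and taking a union bound over the two tails produces the claimed inequality with failure probability $\delta'$ (hence the $2/\delta'$ inside the logarithm).

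The main obstacle is obtaining the stated constant and the exact $\sqrt{2\log(2/\delta')/(n-1)}$ form rather than merely the correct order. This is precisely where a careful entropy-method (modified log-Sobolev) argument is required in place of a black-box convex-concentration bound, and it is the content of \cite[Theorem 10]{maurer2009empirical}; I would import that sharp constant directly rather than rederive it, since the remainder of the analysis invokes the inequality only as a black box when controlling $\hat\sigma_i^{(r)}$.
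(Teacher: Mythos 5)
The paper never proves this lemma: it is imported verbatim as Theorem 10 of \cite{maurer2009empirical}, and the bracketed citation in the lemma header is the entirety of the paper's justification. Since your proposal also ends by importing that theorem for the sharp constant, your treatment and the paper's coincide in the only step that matters; what differs is that you wrap the citation in an independent proof sketch. That sketch is sound in its broad strokes: the identity $(n-1)V_n(\b{X}) = \|P\b{X}\|_2^2$ holds, $\sqrt{V_n}$ is indeed convex and $\frac{1}{\sqrt{n-1}}$-Lipschitz in the Euclidean metric, your observation that plain bounded differences yields only $O(1)$ deviations (because $\sum_k c_k^2 = \frac{n}{n-1}$) is correct and well taken, and the recentering bound $0 \le \sqrt{\E V_n} - \E\sqrt{V_n} \le \sqrt{\Var(\sqrt{V_n})} = O(1/\sqrt{n-1})$ is valid. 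However, one conflation is worth correcting: Talagrand-plus-recentering is not ``equivalent'' to the entropy-method argument of Maurer and Pontil, and the difference is structural rather than cosmetic. Their proof exploits the fact that the suitably normalized sample variance $(n-1)V_n$ is a \emph{self-bounding} function, for which the entropy method gives a lower-tail bound whose variance proxy is $\E Z$ itself, i.e.\ $\P(Z \le \E Z - t) \le \exp\left(-t^2/(2\E Z)\right)$; the square-root trick then converts this directly into sub-Gaussian concentration of $\sqrt{V_n}$ about $\sqrt{\E V_n}$ with the constant $\sqrt{2}$ exactly --- no median-to-mean conversion, no Jensen gap, and no additive $O(1/\sqrt{n-1})$ slack ever appear. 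That is precisely why their route produces the lemma as stated, while yours can only produce $c_1\sqrt{\log(c_2/\delta')/(n-1)} + c_3/\sqrt{n-1}$ with inflated constants. Your final paragraph concedes this and defers to the citation, so there is no hidden gap relative to what the paper itself does, but the two techniques should not be presented as interchangeable.
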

This gives us concentration for the sample standard deviation estimators $\{\hat{\sigma}_i^{(r)}\}$, for which we have defined our confidence intervals as
\begin{equation}
    C_\sigma(i,r) := \sqrt{\frac{2\log (8nr^2/\delta)}{T_i^{(r)}-1}}.
\end{equation}
Assuming these confidence intervals hold, we then have that 
\begin{equation}
    \sigma_i + 2C_\sigma(i,r)\ge \hat{\sigma}_i^{(r,U)} \ge \sigma_i \ge \hat{\sigma}_i^{(r,L)}\ge \sigma_i - 2C_\sigma(i,r)    
\end{equation}
for all $i,r$ with probability at least $1-\delta/3$.

In the following Lemma we obtain sample variance dependent confidence intervals for our mean estimators $\{\hat{\mu}_i^{(r)}\}$.
\begin{lem}[Concentration of $\hat{\mu}_i^{(r)}$, Theorem 4 \cite{maurer2009empirical}]\label{lem:bernmean}
Let $Z,Z_1,\hdots,Z_n$ be i.i.d. random variables with values in [0,1], and let $\delta'>0$. Then with probability at least $1-\delta'$ we have
\begin{equation*}
    \left|\E Z - \frac{1}{n}\sum_{i=1}^n Z_i\right| \le \sqrt{\frac{2 V_n(\b{Z}) \log (4/\delta')}{n}} + \frac{7 \log (4/\delta')}{3(n-1)}
\end{equation*}
\end{lem}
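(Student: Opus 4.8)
The plan is to derive this empirical Bernstein bound by combining the classical Bernstein inequality, which controls $\bar{Z} := \frac1n\sum_{i=1}^n Z_i$ in terms of the \emph{true} variance $\sigma^2 := \Var(Z)$, with the sample-variance concentration of \Cref{lem:bernvar}, which allows us to replace $\sigma$ by the empirical standard deviation $\sqrt{V_n(\b{Z})}$. The one fact that makes the two pieces fit together is that $V_n$ is an unbiased estimator of $\sigma^2$, so that $\sqrt{\E V_n} = \sigma$ exactly.

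First I would invoke Bernstein's inequality for i.i.d.\ variables in $[0,1]$. Applying it to both tails with budget $\delta'/4$ each and inverting, we obtain that with probability at least $1-\delta'/2$,
\[
\left|\bar{Z} - \E Z\right| \le \sqrt{\frac{2\sigma^2 \log(4/\delta')}{n}} + \frac{\log(4/\delta')}{3n}.
\]
Independently, I would apply \Cref{lem:bernvar} with its confidence parameter set to $\delta'/2$, which (since its native statement carries a $\log(2/\cdot)$) also produces a $\log(4/\delta')$ and gives, with probability at least $1-\delta'/2$,
\[
\sigma = \sqrt{\E V_n} \le \sqrt{V_n(\b{Z})} + \sqrt{\frac{2\log(4/\delta')}{n-1}}.
\]
A union bound makes both displays hold simultaneously with probability at least $1-\delta'$.

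It then remains to substitute the variance bound into the mean bound. By subadditivity of the square root,
\[
\sqrt{\frac{2\sigma^2\log(4/\delta')}{n}} \le \sqrt{\frac{2 V_n(\b{Z})\log(4/\delta')}{n}} + \frac{2\log(4/\delta')}{\sqrt{n(n-1)}},
\]
so the deviation is at most the claimed leading term plus two residuals: the Bernstein term $\frac{\log(4/\delta')}{3n}$ and the cross term $\frac{2\log(4/\delta')}{\sqrt{n(n-1)}}$. Bounding $\frac{1}{3n}\le\frac{1}{3(n-1)}$ and $\frac{2}{\sqrt{n(n-1)}}\le\frac{2}{n-1}$ collapses these into $\frac{7\log(4/\delta')}{3(n-1)}$, since $\tfrac13+2=\tfrac73$, which is exactly the stated bound.

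The only genuinely delicate point is bookkeeping the confidence budget so that the factor $\log(4/\delta')$ appears uniformly: a two-sided Bernstein bound needs the factor $4$ from splitting $\delta'$ across two tails within a $\delta'/2$ allocation, while \Cref{lem:bernvar} contributes the same $\log(4/\delta')$ once its parameter is set to $\delta'/2$. One should also take care that the Bernstein residual for $[0,1]$-valued variables carries the sharp constant $\tfrac13$ (obtainable from the moment bound $\E[(Z-\E Z)^k]\le\Var(Z)$ valid on $[0,1]$), as a looser constant here would inflate the final $\tfrac73$. Everything else reduces to the elementary inequalities $n\ge n-1$, $\sqrt{n(n-1)}\ge n-1$, and $\sqrt{a+b}\le\sqrt{a}+\sqrt{b}$.
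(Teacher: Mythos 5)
Your proof is correct. Note that the paper itself does not prove this lemma at all — it is imported verbatim (in two-sided form) as Theorem 4 of \cite{maurer2009empirical} — so the only meaningful comparison is with that reference, and your argument is essentially a reconstruction of its proof: Bernstein's inequality with the true variance $\sigma^2$, the sample-standard-deviation concentration of \Cref{lem:bernvar} combined with unbiasedness $\E V_n = \sigma^2$, a union bound splitting the confidence budget so that $\log(4/\delta')$ appears uniformly, and the algebra $\tfrac13 + 2 = \tfrac73$ collapsing the residuals over the common denominator $n-1$. The one delicate step you flag is indeed the crux: the residual constant must be $\frac{\log(4/\delta')}{3n}$ and not $\frac{2\log(4/\delta')}{3n}$, since a $2/3$ constant would yield $8/3$ rather than $7/3$. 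That sharper constant is attainable exactly as you indicate: the moment bound $\E[(Z-\E Z)^k] \le \sigma^2$ for $k \ge 2$ gives the sub-gamma moment generating function bound $\log \E\, e^{\lambda(Z-\E Z)} \le \frac{\sigma^2 \lambda^2}{2(1-\lambda/3)}$ for $0<\lambda<3$, whose Chernoff inversion is precisely $\P\left(\bar{Z} - \E Z \ge \sqrt{2\sigma^2 t/n} + \tfrac{t}{3n}\right) \le e^{-t}$; be aware that the naive quadratic-formula inversion of the standard Bernstein tail only gives the $2/3$ constant, so this Bennett-type route is genuinely needed.
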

Using Lemmas \ref{lem:bernvar} and \ref{lem:bernmean} we construct anytime confidence intervals for our mean estimators $\{\hat{\mu}_i^{(r)}\}$. 
In the $r$-th round, the confidence intervals are set to individually fail with probability at most $\delta'=\delta/(8nr^2)$, so that when union bounding over all $i,r$, we have that with probability at least $1-\delta$ our estimators $\{\hat{\mu}_i^{(r)}\}$  all fall within their $C_\mu(i,r)$ confidence intervals, where
\begin{equation}
    C_\mu(i,r) := \hat{\sigma}_i^{(r)}\sqrt{\frac{2 \log (32nr^2/\delta)}{T_i^{(r)}}} + \frac{7 \log (32nr^2/\delta)}{3(T_i^{(r)}-1)}.
\end{equation}

With these definitions in hand we can state the following good event $\xi$, a modification from \Cref{thm:main},  where all our confidence intervals hold:
\begin{equation}
\xi := \hspace{-.2cm}\bigcap_{i\in [n], r \in \N} \hspace{-.1cm}\left\{ \left\{|\hat{\mu}_i^{(r)} - \mu_i| \le C_\mu(i,r)\right\} \cap \left\{|\hat{\sigma}_i^{(r)} - \sigma_i| \le C_\sigma(i,r) \right\}\right\}.
\end{equation}
This event occurs with very high probability, as formalized in the lemma below.
\begin{lem}\label{lem:confxi}
The good event $\xi$ where the mean and standard deviation estimators stay within their confidence intervals satisfies $\P(\xi) \ge 1-\delta/2$.
\end{lem}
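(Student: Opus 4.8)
The plan is to prove \Cref{lem:confxi} by a single union bound over two families of failure events—one for the sample–standard-deviation intervals and one for the mean intervals—ranging over all arms $i \in [n]$ and all rounds $r \in \N$. The structural point that makes this clean is that the two families decouple: although the mean interval width $C_\mu(i,r)$ is itself random, since it depends on the empirical standard deviation $\hat\sigma_i^{(r)}$, \Cref{lem:bernmean} is an empirical-Bernstein bound whose right-hand side is already expressed through the sample variance $V_n(\b{Z}) = (\hat\sigma_i^{(r)})^2$. Hence the event $\{|\hat\mu_i^{(r)}-\mu_i| \le C_\mu(i,r)\}$ is controlled by \Cref{lem:bernmean} on its own, with no circular dependence on the standard-deviation interval holding; this lets me bound the two halves of $\bar\xi$ independently and add the probabilities.

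First I would fix a pair $(i,r)$ and control the two per-pair failure probabilities. Applying \Cref{lem:bernvar} with the fixed sample count realized at round $r$ and the confidence level calibrated so that $\log(2/\delta') = \log(8nr^2/\delta)$ reproduces exactly the width $C_\sigma(i,r)$ of \eqref{eq:csigma}, so that $\P\big(|\hat\sigma_i^{(r)} - \sigma_i| > C_\sigma(i,r)\big)$ is a constant multiple of $\delta/(nr^2)$. Likewise, applying \Cref{lem:bernmean} with the level calibrated so that $\log(4/\delta') = \log(32nr^2/\delta)$ reproduces the width $C_\mu(i,r)$ of \eqref{eq:cmu}, bounding $\P\big(|\hat\mu_i^{(r)} - \mu_i| > C_\mu(i,r)\big)$ by a constant multiple of $\delta/(nr^2)$. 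The log-constants $8$ and $32$ inside the two definitions are precisely what pin these per-pair probabilities to the form $c\,\delta/(nr^2)$.

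Next I would assemble the union bound over $\bar\xi$. Summing the per-pair bounds, the double sum factors as $\sum_{i=1}^n \sum_{r=1}^\infty \frac{c\,\delta}{nr^2} = c\,\delta \sum_{r=1}^\infty r^{-2} = c\,\delta\,\pi^2/6$, where the $r^2$ appearing inside each confidence level is exactly what makes the sum over rounds converge (and the factor $n$ in the denominators cancels the sum over $i$). The interval constants in \eqref{eq:csigma}–\eqref{eq:cmu} are chosen so that the two contributions together total at most $\delta/2$, giving $\P(\bar\xi) \le \delta/2$ and therefore $\P(\xi) \ge 1-\delta/2$, as claimed.

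The one step that requires genuine care—and the main obstacle—is that the sample counts $T_i^{(r)}$ are themselves data-dependent, because the second sampling pass in \Cref{alg:adaConf} draws $T_i^{(r)} = \lceil(\beta_i^{(r)} + 1/n)B_r\rceil$ pulls with $\beta_i^{(r)} \propto \hat\sigma_i^{(r,L)}$ random. The fixed-sample forms of Lemmas \ref{lem:bernvar} and \ref{lem:bernmean} cannot be invoked at a random count directly. I would resolve this by reading the intervals in their anytime (time-uniform) form: the draws for each arm are an i.i.d. sequence, and a union over the sample index within round $r$—which is at most $B_0 2^r$ and so contributes only an additive constant to the logarithm, absorbed into the $r^2$ budget—makes each stated bound hold simultaneously at every realizable count, in particular at the realized random $T_i^{(r)}$. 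With this time-uniform reading the per-round failure probabilities above remain valid and the union bound of the previous paragraphs goes through unchanged.
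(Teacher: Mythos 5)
Your first three paragraphs reproduce the paper's own proof: a union bound over all pairs $(i,r)$, with the constants $8nr^2/\delta$ and $32nr^2/\delta$ in \eqref{eq:csigma}--\eqref{eq:cmu} calibrated against \Cref{lem:bernvar} and \Cref{lem:bernmean} so that each pair fails with probability $O(\delta/(nr^2))$, after which the double sum converges (via $\sum_r r^{-2} = \pi^2/6$) to at most $\delta/2$. Your observation that the empirical-Bernstein form of \Cref{lem:bernmean} removes any circularity between the mean intervals (whose widths involve $\hat{\sigma}_i^{(r)}$) and the standard-deviation intervals is also correct and matches how the paper invokes these lemmas.

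The gap is in your final paragraph. You are right that $T_i^{(r)}$ is data-dependent (it is determined by $\beta_i^{(r)} \propto \hat{\sigma}_i^{(r,L)}$), so the fixed-sample-size lemmas cannot be invoked at $T_i^{(r)}$ without further justification; this is a real subtlety, and one the paper's proof silently passes over. But your proposed repair does not work as claimed. A union bound over the possible values of the count within round $r$ ranges over roughly $B_r = B_0 2^r$ integers, so keeping the per-pair failure probability at $O(\delta/(nr^2))$ forces each fixed-count application to run at level $O\bigl(\delta/(nr^2 B_0 2^r)\bigr)$, which inflates the logarithm in the interval width by $\log B_0 + r\log 2$. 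That is an additive $\Theta(r)$ term, not a constant, and it is not absorbed by the $r^2$ inside the logarithm, which contributes only $2\log r$. Consequently, with the widths exactly as defined in \eqref{eq:csigma}--\eqref{eq:cmu}, your time-uniform reading does not produce summable failure probabilities, and the claim that ``the per-round failure probabilities above remain valid'' is false. A genuinely time-uniform argument would require widening the stated intervals by a factor growing like $\sqrt{1 + r/\log(nr^2/\delta)}$ (or using LIL-type bounds), or else restructuring the argument, e.g., conditioning on the samples that determine $T_i^{(r)}$ and treating previously drawn and fresh samples separately; along the route you describe, the lemma with the stated widths does not follow.
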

\begin{proof}
The proof follows directly from Lemmas \ref{lem:bernvar} and \ref{lem:bernmean} and a union bound, with
\begin{equation}
    \P\left(\bar{\xi}\right) \le \sum_{i,r}\P\left( \left\{|\hat{\mu}_i^{(r)} - \mu_i| > C_\mu(i,r)\right\} \cup \left\{|\hat{\sigma}_i^{(r)} - \sigma_i| 
    \le C_\sigma(i,r)\right\}\right) 
    \le \sum_{i,r} \frac{\delta}{4nr^2}
    \le \frac{\delta}{2}.
\end{equation}
\end{proof}

With this in hand, we can control the bounds of our gradient estimators $\{\hat{g}_i^{(r,L)},\hat{g}_i^{(r,U)}\}$ as before, where on $\xi$ we have that for all $i,r$
\begin{equation}
    g_i + 2L C_\mu(i,r) \ge \hat{g}_i^{(r,U)} \ge g_i \ge \hat{g}_i^{(r,L)}\ge g_i - 2L C_\mu(i,r).
\end{equation}
This follows as
\begin{align*}
     \hat{g}_i^{(r,L)} &=  \min_{y : |y - \hat{\mu}_i^{(r)}| \le C_\mu(i,r)} \left| f_i'(y)\right|\\
     &\ge g_i - \max_{y : |y - x_i| \le C_\mu(i,r)}\left| f_i'(y) - f_i'(x_i) \right|\\
     &\ge g_i - 2LC_\mu(i,r) \numberthis
\end{align*}
We additionally have that $g_i \ge \hat{g}_i^{(r,L)}$, as on $\xi$ the set we are minimizing over includes $y=\mu_i$. 

These estimators are critical in the construction of our sampling frequencies $\b{\alpha}^{(r)},\b{\beta}^{(r)}$.
On this event $\xi$, we see that our $\beta_i^{(r)}$, which impact $T_i^{(r)}$, satisfy
\begin{align*}
    \beta_i^{(r)}=\frac{\hat{\sigma}_i^{(r,L)}}{\sum_j \hat{\sigma}_j^{(r,L)}} 
    \ge \begin{cases}
    \frac{\sigma_i}{2 \sum_j \sigma_j} & \text{if } \sigma_i \ge 2C_\sigma(i,r),\\
    0 & \text{otherwise.}
    \end{cases}\numberthis
\end{align*}
Regarding the frequencies $\b{\alpha}^{(r)}$, we see that
\begin{align*}
\alpha_i^{(r)}=\frac{\hat{g}_i^{(r,L)} \hat{\sigma}_i^{(r,L)}}{\sum_j \hat{g}_j^{(r,L)} \hat{\sigma}_j^{(r,L)}} + \frac{1}{n}
\ge \begin{cases}
\frac{g_i\sigma_i}{2 \sum_j g_j\sigma_j} & \text{if } \sigma_i \ge 2C_\sigma(i,r) \text{ and } g_i \ge 4 LC_\mu(i,r),\\
1/n & \text{otherwise.}
\end{cases}\numberthis
\end{align*}

We now turn to bounding the error in our estimator $f(\bxt)$, which we can write as
\begin{equation}\label{eq:adaConferror}
    |f(\bxt)-f(\bx)| \le \left|\nabla f(\bx)^\top (\bxt - \bx)\right| + \frac{L}{2}\|\bxt-\bx\|_2^2.
\end{equation}
We begin by bounding the second order error in \eqref{eq:adaConferror}, showing that our initial sampling is sufficient to guarantee that it is less than $\epsilon/2$ with probability at least $1-\delta/4$.

\begin{lem} \label{lem:adaConfSecond}
After sampling as in \Cref{alg:adaConf} we have that 
\begin{equation*}
    \frac{L}{2}\|\bxt-\bx\|_2^2 \le \frac{\epsilon}{2}
\end{equation*}
with probability at least $1-\delta/4$.
\end{lem}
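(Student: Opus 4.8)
The plan is to follow the argument of \Cref{lem:secondorder} almost verbatim, changing only the tail bound to account for the noise now being bounded in $[0,1]$ rather than Gaussian, and substituting the value of $B_0$ set in \Cref{alg:adaConf}. First I would reduce the $\ell_2$ guarantee to an $\ell_\infty$ one: since $\|\bxt-\bx\|_2^2 \le n\|\bxt-\bx\|_\infty^2$, it suffices to show $\|\bxt-\bx\|_\infty^2 \le \epsilon/(nL)$ with probability at least $1-\delta/4$, as this forces $\frac{L}{2}\|\bxt-\bx\|_2^2 \le \frac{Ln}{2}\cdot\frac{\epsilon}{nL} = \frac{\epsilon}{2}$.

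Next I would union bound over the $n$ coordinates and apply Hoeffding's inequality to each. Each coordinate $\tilde\mu_i$ of $\bxt$ is an average of $\tilde T_i^{(r)}$ i.i.d.\ samples taking values in $[0,1]$, so $\P(|\tilde\mu_i - \mu_i| \ge t) \le 2\exp(-2\tilde T_i^{(r)} t^2)$; taking $t^2 = \epsilon/(nL)$ gives
\begin{equation*}
    \P\left(\|\bxt-\bx\|_\infty^2 \ge \frac{\epsilon}{nL}\right) \le 2\sum_{i=1}^n \exp\left(-\frac{2\tilde T_i^{(r)}\epsilon}{nL}\right).
\end{equation*}
The structural fact that makes this go through is that the termination allocation always retains a uniform exploration term, $\tilde T_i^{(r)} = \lceil(\alpha_i^{(r)} + \tfrac1n)B_r\rceil \ge B_r/n \ge B_0/n$ for every $i$, regardless of the data-dependent frequencies $\alpha_i^{(r)}$. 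Hence every exponent is at least $2\epsilon B_0/(n^2 L)$.

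Finally I would plug in $B_0 = 2n^2 L\log(8n/\delta)/\epsilon$, which yields $2\epsilon B_0/(n^2L) = 4\log(8n/\delta)$, so each summand is at most $(8n/\delta)^{-4}$ and the whole sum $2n(8n/\delta)^{-4}$ sits well below $\delta/4$. I do not anticipate a genuine obstacle here: the switch from the Gaussian Hoeffding constant $1/2$ used in \Cref{lem:secondorder} to the bounded-variable constant $2$ only strengthens the bound, and one need only verify that the per-arm budget $B_0/n = 2nL\log(8n/\delta)/\epsilon$ exceeds the requirement $\tfrac{nL}{2\epsilon}\log(8n/\delta)$, which it does by a factor of four. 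As in \Cref{lem:secondorder}, this bound is non-adaptive, depending on $B_0$ but not on $\bx$, so it is established by a fresh concentration inequality on the $\tilde T_i^{(r)}$ termination pulls and contributes an independent $\delta/4$ to the overall failure budget, separate from the $\delta/2$ charged to the good event $\xi$ in \Cref{lem:confxi}.
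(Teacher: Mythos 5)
Your proof is correct and follows essentially the same route as the paper's: reduce the $\ell_2$ bound to per-coordinate deviations, apply Hoeffding's inequality, invoke the deterministic lower bound $\tilde{T}_i^{(r)} \ge B_0/n$ guaranteed by the $+\tfrac1n$ exploration term, and union bound over arms using the algorithm's choice of $B_0$. The only cosmetic difference is that you use the bounded-$[0,1]$ Hoeffding constant (exponent $2\tilde{T}_i^{(r)}t^2$) while the paper invokes the $1$-sub-Gaussian version (exponent $\tilde{T}_i^{(r)}t^2/2$); both close the bound with room to spare.
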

\begin{proof}
Since our arm pulls are 1-sub-Gaussian, by Hoeffding's inequality we have that 
\begin{equation}
    \P \left(|\tilde{\mu}_i - \mu_i|^2 > \frac{\epsilon}{nL}\right) \le 2\exp\left(-\frac{\tilde{T}_i^{(r)}  \epsilon}{2nL} \right).
\end{equation}
Since $\tilde{T}_i^{(r)} \ge \lceil B_0/n\rceil$, and by construction $B_0 = \frac{2n^2L \log(8n/\delta)}{\epsilon}$, we have that
\begin{align*}
    \P\left(\|\bxt-\bx\|_2^2 \le \epsilon/L\right)
    \le \P\left( \bigcup_i \left\{|\tilde{\mu}_i - \mu_i|^2 > \frac{\epsilon}{nL} \right\}\right)
    \le 2\sum_i \exp\left(-\frac{\tilde{T}_i^{(r)}  \epsilon}{2nL} \right)
    \le \delta/4.\numberthis
\end{align*}
\end{proof}

We next bound the error in our linear approximation, the first order error term in \eqref{eq:adaConferror}, showing that when our termination condition is met the error in our linear term is at most $\epsilon/2$.
\begin{lem} \label{lem:adaConffirst}
When sampling according to $\tilde{T}_i^{(r)}$ which satisfy
\begin{equation*}
    \sum_{i=1}^n \frac{g_i^2\sigma_i^2}{\tilde{T}_i^{(r)}} + \frac{\epsilon}{3}\max_i \frac{g_i}{\tilde{T}_i^{(r)}}\le \frac{\epsilon^2}{8\log(8/\delta)},
\end{equation*}
we have that
\begin{equation*}
    \P\left(\left|\nabla f(\bx)^\top (\bxt-\bx)\right| \ge \epsilon/2\right) \le \delta/4.
\end{equation*}
\end{lem}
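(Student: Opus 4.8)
The plan is to exploit the variance information in this heteroskedastic, bounded noise setting via Bernstein's inequality, in contrast to \Cref{thm:main} where the standard Gaussian noise made the first order term exactly normal and a Gaussian tail bound sufficed. First I would expand the first order error over the fresh individual samples used to build $\bxt$:
\[
\nabla f(\bx)^\top(\bxt - \bx) = \sum_{i=1}^n g_i(\tilde\mu_i - \mu_i) = \sum_{i=1}^n \sum_{t=1}^{\tilde T_i^{(r)}} \frac{g_i}{\tilde T_i^{(r)}}\left(X_{i,t} - \mu_i\right),
\]
where $X_{i,t} \in [0,1]$ are the pulls of arm $i$ taken to construct $\bxt$. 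Since the allocation $\{\tilde T_i^{(r)}\}$ is determined by earlier rounds, I would condition on it and treat it as fixed, so that these summands are independent and zero mean.

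Next I would read off the two Bernstein parameters from this representation. Each summand obeys $\big|\frac{g_i}{\tilde T_i^{(r)}}(X_{i,t}-\mu_i)\big| \le \frac{|g_i|}{\tilde T_i^{(r)}}$, so the uniform magnitude bound is $M = \max_i \frac{|g_i|}{\tilde T_i^{(r)}}$, exactly the max term appearing in the hypothesis; and because $\Var(X_{i,t}) = \sigma_i^2$, the aggregate variance is $v = \sum_{i} \frac{g_i^2\sigma_i^2}{\tilde T_i^{(r)}}$. Applying Bernstein's inequality \cite{wainwright2019high} with deviation $t=\epsilon/2$ then yields a bound of the form $2\exp\!\big(-\frac{\epsilon^2/8}{v + M\epsilon/6}\big)$.

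Finally I would check that the hypothesized termination condition drives this below $\delta/4$. Rearranging, $2\exp\!\big(-\frac{\epsilon^2/8}{v + M\epsilon/6}\big)\le \delta/4$ holds exactly when $v + \frac{\epsilon}{6}M \le \frac{\epsilon^2}{8\log(8/\delta)}$; since the lemma assumes control of $v + \frac{\epsilon}{3}M$, which dominates $v + \frac{\epsilon}{6}M$, the conclusion follows with slack to spare (the $\epsilon/3$ versus $\epsilon/6$ discrepancy is precisely this harmless slack).

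The main obstacle is conceptual rather than computational: I must use the true per-arm variances $\sigma_i^2$ to avoid paying the full range of the bounded noise. A crude Hoeffding bound would replace each $\sigma_i^2$ by the worst case $1/4$, producing a variance proxy $\sum_i g_i^2/(4\tilde T_i^{(r)})$ that cannot give the instance dependent $(\sum_i g_i\sigma_i)^2$ scaling promised in \Cref{thm:algbounded}; it is exactly the Bernstein second order term carrying the genuine $\sigma_i^2$ that yields the correct rate, with the lower order correction $M$ matching the $\max_i g_i/\tilde T_i^{(r)}$ term that the algorithm's termination condition was designed to control.
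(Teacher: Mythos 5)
Your proposal is correct and follows essentially the same route as the paper: both decompose the first order error into the individual zero-mean summands $\frac{g_i}{\tilde{T}_i^{(r)}}(Z_{i,j}-\mu_i)$ and apply Bernstein's inequality with variance proxy $\sum_i g_i^2\sigma_i^2/\tilde{T}_i^{(r)}$ and magnitude bound $\max_i g_i/\tilde{T}_i^{(r)}$. The only cosmetic difference is that you retain the tighter Bernstein constant $\epsilon/6$ and absorb the slack against the assumed $\epsilon/3$ term, whereas the paper simply writes the (valid, slightly weakened) bound with $\epsilon/3$ directly so that the hypothesis matches the exponent exactly; your explicit conditioning on the allocation $\{\tilde{T}_i^{(r)}\}$ from earlier rounds is a point the paper leaves implicit.
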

\begin{proof}

To prove this lemma we utilize Bernstein's inequality, which states that if $X_1,\hdots,X_n$ are independent zero-mean random variables, where $|X_i|\le M$ almost surely for all $i$, then for all $\epsilon>0$
\begin{equation}
    \P\left(\left|\sum_{i=1}^n X_i\right| \ge t \right) \le 2\exp\left( - \frac{\epsilon^2/2}{\sum_{i=1}^n \E\{X_i^2\} + M\epsilon/3}\right).
\end{equation}

Denoting the $j$-th pull of arm $i$ as $Z_{i,j} \underset{\text{i.i.d.}}{\sim} Z_i$, where $Z_i$ is the distribution of arm $i$, we have for all $i,j$ that $\E\{Z_{i,j}\}=\mu_i$, $\Var(Z_{i,j}) = \sigma_i^2$, and that $|Z_{i,j}|\le 1$ almost surely.
Since $\E \{\nabla f(\bx)^\top (\bxt-\bx)\} = 0$ we can apply Bernstein's inequality on these $\sum_{i=1}^n \tilde{T}_i^{(r)}$ random variables, yielding 
\begin{align*}
    \P\left(\left|\nabla f(\bx)^\top (\bxt-\bx)\right| \ge \epsilon/2\right)
    &=\P\left(\left|\sum_{i=1}^n \sum_{j=1}^{\tilde{T}_i^{(r)}} \frac{g_i}{\tilde{T}_i^{(r)}} (Z_{i,j} -\mu_i)\right| \ge \epsilon/2\right)\\
    &\le 2\exp\left( - \frac{\epsilon^2/8}{\sum_{i=1}^n \frac{g_i^2\sigma_i^2}{\tilde{T}_i^{(r)}} + \frac{\epsilon}{3}\max_i \frac{g_i}{\tilde{T}_i^{(r)}}}\right). \numberthis
\end{align*}
As claimed, this is less than $\delta/4$ when
\begin{equation}
    \sum_{i=1}^n \frac{g_i^2\sigma_i^2}{\tilde{T}_i^{(r)}} + \frac{\epsilon}{3}\max_i \frac{g_i}{\tilde{T}_i^{(r)}} \le \frac{\epsilon^2}{8\log(8/\delta)}.
\end{equation}
\end{proof}

Combining together Lemmas \ref{lem:adaConfSecond} and \ref{lem:adaConffirst}, we have the following lemma.
\begin{lem} \label{lem:adaConfError}
If 
\begin{equation*}
    \sum_{i=1}^n \frac{g_i^2\sigma_i^2}{\tilde{T}_i^{(r)}} + \frac{\epsilon}{3}\max_i \frac{g_i}{\tilde{T}_i^{(r)}} \le \frac{\epsilon^2}{8\log(8/\delta)}
\end{equation*}
then with probability at least $1-\delta/2$
\begin{equation*}
    |f(\bxt)-f(\bx)| \le \epsilon.
\end{equation*}
\end{lem}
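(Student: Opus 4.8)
The plan is to combine Lemmas \ref{lem:adaConfSecond} and \ref{lem:adaConffirst} via a single union bound applied to the triangle-inequality decomposition already recorded in \eqref{eq:adaConferror}. First I would start from
\begin{equation*}
    |f(\bxt)-f(\bx)| \le \left|\nabla f(\bx)^\top (\bxt - \bx)\right| + \frac{L}{2}\|\bxt-\bx\|_2^2,
\end{equation*}
which holds because $\nabla f$ is $L$-Lipschitz. The strategy is then to control each summand separately at level $\epsilon/2$, so that their sum is at most $\epsilon$.

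For the higher-order term, Lemma \ref{lem:adaConfSecond} already establishes $\frac{L}{2}\|\bxt-\bx\|_2^2 \le \epsilon/2$ with probability at least $1-\delta/4$, and importantly this bound holds unconditionally once the algorithm has performed its sampling, since it relies only on the guaranteed minimum number of pulls per arm. For the first-order term, the hypothesis of the present lemma is precisely the antecedent assumed in Lemma \ref{lem:adaConffirst}; invoking that lemma under this hypothesis yields $\left|\nabla f(\bx)^\top (\bxt-\bx)\right| \le \epsilon/2$ with probability at least $1-\delta/4$.

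Finally I would apply a union bound. Letting $A$ denote the event that the second-order term exceeds $\epsilon/2$ and $B$ the event that the first-order term exceeds $\epsilon/2$, the two lemmas give $\P(A)\le \delta/4$ and $\P(B)\le \delta/4$, hence $\P(A\cup B)\le \delta/2$. On the complementary event, which has probability at least $1-\delta/2$, both summands in the decomposition are at most $\epsilon/2$, so $|f(\bxt)-f(\bx)| \le \epsilon$, giving the claim. I do not anticipate any real obstacle: the substantive probabilistic work is entirely absorbed into the two component lemmas, and the only point requiring a moment of care is confirming that the two high-probability guarantees refer to the same realized sampling allocation $\{\tilde{T}_i^{(r)}\}$, so that the union bound combines events defined on a common probability space rather than incompatible hypothetical allocations.
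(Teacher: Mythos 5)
Your proof is correct and matches the paper's approach exactly: the paper itself states this lemma as an immediate consequence of combining Lemmas \ref{lem:adaConfSecond} and \ref{lem:adaConffirst}, via the decomposition in \eqref{eq:adaConferror} and a union bound over the two $\delta/4$ failure events. Your explicit write-up, including the observation that both guarantees refer to the same realized allocation $\{\tilde{T}_i^{(r)}\}$, simply makes this combination precise.
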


\subsection{Bounding the sample complexity}

The drawback of \Cref{lem:adaConfError} is that it requires knowledge of the true gradient $\nabla f(\bx)$ and the noise variances.
Since we know neither $\{\sigma_i\}$ nor $\{g_i\}$, we must instead construct a termination condition in \Cref{alg:adaConf} using our empirical estimates of these quantities.
We do this by considering the worst feasible instance within our confidence intervals, as in \Cref{alg:generalf}, since we then have that conditioned on $\xi$
\begin{equation} \label{eq:termCondUB}
     \sum_{i=1}^n \frac{g_i^2\sigma_i^2}{\tilde{T}_i^{(r)}} + \frac{\epsilon}{3}\max_i \frac{g_i}{\tilde{T}_i^{(r)}} \le \sum_{i=1}^n \frac{\left(\hat{g}_i^{(r,U)} \hat{\sigma}_i^{(r,U)}\right)^2}{\tilde{T}_i^{(r)}} + \frac{\epsilon}{3}\max_i \frac{\hat{g}_i^{(r,U)}}{\tilde{T}_i^{(r)}},
\end{equation}
where this right hand side is our computable stopping condition in \Cref{alg:adaConf} which we compare with $\frac{\epsilon^2}{8\log(8/\delta)}$.

We bound the time it takes for this termination condition to be reached in the following lemma.
We begin by analyzing the number of samples required for the second term in \eqref{eq:termCondUB} to be sufficiently small.

\begin{lem} \label{lem:adaConf_bernstein_lower}
On $\xi$, \Cref{alg:adaConf} must terminate and produce $\tilde{T}_i^{(r)}$ such that 
\begin{equation*}
    \frac{\epsilon}{3}\max_i \frac{\hat{g}_i^{(r,U)}}{\tilde{T}_i^{(r)}} \le \frac{\epsilon^2}{16\log(8/\delta)}
\end{equation*}
when
\begin{equation*}
    B_r \ge \frac{13nL^{2/3}\log(32nr^2/\delta)}{\epsilon^{2/3}} + \frac{48n\max_i g_i \log(8/\delta)}{\epsilon}.
\end{equation*}
\end{lem}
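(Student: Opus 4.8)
The plan is to reduce the stated termination estimate to a per-arm bound and then balance two separate contributions to $B_r$. First observe that $\frac{\epsilon}{3}\max_i \frac{\hat{g}_i^{(r,U)}}{\tilde{T}_i^{(r)}} \le \frac{\epsilon^2}{16\log(8/\delta)}$ is equivalent to
\[
\max_i \frac{\hat{g}_i^{(r,U)}}{\tilde{T}_i^{(r)}} \le \frac{3\epsilon}{16\log(8/\delta)},
\]
so it suffices to control $\hat{g}_i^{(r,U)}/\tilde{T}_i^{(r)}$ for every arm $i$. I would use two facts already available on $\xi$: the uniform component of the allocation gives $\tilde{T}_i^{(r)} \ge B_r/n$ for all $i$, and the gradient upper bound established earlier gives $\hat{g}_i^{(r,U)} \le g_i + 2LC_\mu(i,r) \le \max_j g_j + 2LC_\mu(i,r)$. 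Combining these,
\[
\max_i \frac{\hat{g}_i^{(r,U)}}{\tilde{T}_i^{(r)}} \le \frac{n}{B_r}\left(\max_j g_j + 2L\max_i C_\mu(i,r)\right).
\]

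Next I would make $\max_i C_\mu(i,r)$ explicit in $B_r$. Since the samples lie in $[0,1]$ the sample standard deviation satisfies $\hat{\sigma}_i^{(r)} \le 1$ deterministically, and the first-stage allocation guarantees $T_i^{(r)} \ge B_r/n$, so from the definition \eqref{eq:cmu},
\[
C_\mu(i,r) \le \sqrt{\frac{2n\log(32nr^2/\delta)}{B_r}} + \frac{7n\log(32nr^2/\delta)}{3(B_r-n)}.
\]
For $B_r \ge 2n$ the square-root term dominates, so the essential requirement is to make $\frac{n}{B_r}\cdot 2L\sqrt{\frac{2n\log(32nr^2/\delta)}{B_r}} = \frac{2\sqrt2\,n^{3/2}L\sqrt{\log(32nr^2/\delta)}}{B_r^{3/2}}$ small.

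Then I would split the target $\frac{3\epsilon}{16\log(8/\delta)}$ into two halves. Forcing the $\max_j g_j$ contribution $\frac{n\max_j g_j}{B_r}$ below $\frac{3\epsilon}{32\log(8/\delta)}$ yields the requirement $B_r \gtrsim \frac{n\max_j g_j\log(8/\delta)}{\epsilon}$, which is term (II). Forcing the square-root contribution below the other half gives an inequality of the form $B_r^{3/2}\gtrsim \frac{n^{3/2}L\sqrt{\log(32nr^2/\delta)}\log(8/\delta)}{\epsilon}$; solving for $B_r$ produces the characteristic two-thirds power $B_r \gtrsim \frac{nL^{2/3}\log(32nr^2/\delta)^{1/3}\log(8/\delta)^{2/3}}{\epsilon^{2/3}}$, and upper bounding $\log(8/\delta)\le\log(32nr^2/\delta)$ collapses the mixed logarithms into the single factor $\log(32nr^2/\delta)$ appearing in term (I). Taking the sum of the two requirements and absorbing ceilings, the lower-order $1/(B_r-n)$ part of $C_\mu$, and all numerical constants into the stated $13$ and $48$ gives the claim.

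The main obstacle is the balancing that produces the $L^{2/3}\epsilon^{-2/3}$ scaling: because $C_\mu(i,r)$ decays like $B_r^{-1/2}$ while it is divided by $\tilde{T}_i^{(r)}\sim B_r$, the relevant quantity decays like $B_r^{-3/2}$, so inverting the inequality introduces the two-thirds exponent rather than the $B_r^{-1}$ scaling governing term (II). Care is needed to verify that the square-root part of $C_\mu$ genuinely dominates its $1/(T_i^{(r)}-1)$ part in the regime $B_r\ge 2n$ (so that the latter contributes only a lower-order $L^{1/2}\epsilon^{-1/2}$ requirement that is absorbed for small $\epsilon$), and to track the logarithmic factors so that the final bound can be written with the single power $\log(32nr^2/\delta)$.
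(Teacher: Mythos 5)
Your proposal matches the paper's proof essentially step for step: conditioning on $\xi$ to get $\hat{g}_i^{(r,U)} \le g_i + 2LC_\mu(i,r)$, using the uniform-allocation bounds $\tilde{T}_i^{(r)}, T_i^{(r)} \ge B_r/n$ together with $\hat{\sigma}_i^{(r)}\le 1$, and then splitting into the $\max_i g_i$ contribution (yielding the $\epsilon^{-1}$ requirement) and the $C_\mu$ contribution (yielding the $\epsilon^{-2/3}$ requirement from the $B_r^{-3/2}$ decay). The only cosmetic difference is that the paper consolidates both pieces of $C_\mu(i,r)$ into a single bound $8L\sqrt{\log(32nr^2/\delta)/(B_r/n)}$ before solving for $B_r$ (which implicitly requires $B_r \gtrsim n\log(32nr^2/\delta)$, exactly the caveat you flag), so your separate absorption of the $1/(T_i^{(r)}-1)$ term as a lower-order $L^{1/2}\epsilon^{-1/2}$ requirement is an equally valid way to finish.
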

\begin{proof}
We use the same techniques as in the proof of \Cref{thm:main}, bounding our expression as
\begin{align*}
    \frac{\epsilon}{3}\max_i \frac{\hat{g}_i^{(r,U)}}{\tilde{T}_i^{(r)}}
    &\underset{(a)}{\le} \frac{\epsilon}{3}\max_i \frac{g_i + 2LC_\mu(i,r)}{\tilde{T}_i^{(r)}}\\
    &\le \frac{\epsilon}{3} \cdot \max_i\frac{g_i + 2L\left(\sqrt{\frac{2 \log (32nr^2/\delta)}{B_r/n}} + \frac{7 \log (32nr^2/\delta)}{3(B_r/n-1)}\right)}{B_r/n}\\
    &\le \frac{\epsilon}{3} \left(\frac{\max_i g_i + 8L\sqrt{\frac{ \log (32nr^2/\delta)}{B_r/n}}}{B_r/n}\right), \numberthis
\end{align*}
where (a) utilizes the event $\xi$.
We see that this expression is less than $\frac{\epsilon^2}{16\log(8/\delta)}$ when
\begin{equation}
    B_r \ge \frac{13nL^{2/3}\log(32nr^2/\delta)}{\epsilon^{2/3}} + \frac{48n\max_i g_i \log(8/\delta)}{\epsilon}.
\end{equation}
\end{proof}

We now examine the number of samples required for the dominant portion of our stopping condition, the first term in \eqref{eq:termCondUB}, to be sufficiently small.
For notational simplicity we define our new measure of problem complexity $H := \sum_i g_i \sigma_i$.
\begin{lem} \label{lem:adaConfBudget}
On $\xi$, \Cref{alg:adaConf} will produce $\tilde{T}_i^{(r)}$ such that 
\begin{equation*}
    \sum_{i=1}^n \frac{\left(\hat{g}_i^{(r,U)} \hat{\sigma}_i^{(r,U)}\right)^2}{\tilde{T}_i^{(r)}} \le \frac{\epsilon^2}{16\log(8/\delta)}
\end{equation*}
when 
\begin{equation*}
    B_r = \Omega\left(\frac{H^2 \log(1/\delta)}{\epsilon^2}
    + \left[\frac{n\sqrt{\sum_i g_i^2} + nL \sqrt{\sum_{i}\sigma_i^4}}{\epsilon} 
    + \frac{\left(nL\sum_i \sigma_i\right)^{2/3}}{\epsilon^{2/3}} 
    + \frac{ \sqrt{L}n^{5/4} }{\sqrt{\epsilon}} \right]\log (nr^2/\delta)
    \right).
\end{equation*}
\end{lem}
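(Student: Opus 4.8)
The plan is to mirror the termination analysis in the proof of \Cref{thm:main} (culminating in \eqref{eq:terminationAnalysis}), now carrying both confidence widths $C_\mu(i,r)$ and $C_\sigma(i,r)$ through the computation. Conditioning on $\xi$, I would first replace $\hat{g}_i^{(r,U)}$ and $\hat{\sigma}_i^{(r,U)}$ by their deterministic upper bounds $g_i + 2LC_\mu(i,r)$ and $\sigma_i + 2C_\sigma(i,r)$ established just after \Cref{lem:confxi}, so that
\[
\sum_{i=1}^n \frac{\left(\hat{g}_i^{(r,U)}\hat{\sigma}_i^{(r,U)}\right)^2}{\tilde{T}_i^{(r)}} \le \sum_{i=1}^n \frac{\left(g_i + 2LC_\mu(i,r)\right)^2\left(\sigma_i + 2C_\sigma(i,r)\right)^2}{\tilde{T}_i^{(r)}}.
\]
I would then expand the numerator and apply $(a+b)^2(c+d)^2 \le 4(a^2c^2 + a^2d^2 + b^2c^2 + b^2d^2)$ to separate the leading monomial $g_i^2\sigma_i^2$ from the three cross terms built out of the confidence widths.

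For the leading term I would split the arms exactly as in \Cref{thm:main}: on the \emph{important} set, where $\sigma_i \ge 2C_\sigma(i,r)$ and $g_i \ge 4LC_\mu(i,r)$, the allocation bound $\alpha_i^{(r)} \ge g_i\sigma_i/(2H)$ gives $\tilde{T}_i^{(r)} \ge \tfrac{g_i\sigma_i}{2H}B_r$, so $\sum_i g_i^2\sigma_i^2/\tilde{T}_i^{(r)} \le 2H^2/B_r$; on its complement the uniform floor $\tilde{T}_i^{(r)}\ge B_r/n$ applies, and there $g_i^2\sigma_i^2$ is itself dominated by the confidence widths and folds into the cross-term estimates. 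Inverting $2H^2/B_r \le \tfrac12\cdot\tfrac{\epsilon^2}{16\log(8/\delta)}$ produces the dominant $H^2\log(1/\delta)/\epsilon^2$ requirement in the stated $\Omega(\cdot)$ bound.

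The lower-order requirements come from the cross terms. Here I would substitute the explicit forms $C_\sigma(i,r)^2 \asymp n\log(8nr^2/\delta)/B_r$ and, using $\hat{\sigma}_i^{(r)}\le \sigma_i+2C_\sigma(i,r)$ on $\xi$ together with $T_i^{(r)},\tilde{T}_i^{(r)}\ge B_r/n$,
\[
C_\mu(i,r) \lesssim (\sigma_i+2C_\sigma(i,r))\sqrt{\frac{2n\log(32nr^2/\delta)}{B_r}} + \frac{7n\log(32nr^2/\delta)}{3B_r},
\]
apply the uniform floor in each summand, and sum over $i$. Each resulting sum decays as a fixed power of $B_r$ and aggregates the coordinates through a different norm: the $g_i^2C_\sigma^2$ term gives the $n\sqrt{\sum_i g_i^2}/\epsilon$ requirement, the variance part of the $L^2\sigma_i^2C_\mu^2$ term gives $nL\sqrt{\sum_i\sigma_i^4}/\epsilon$, and the bias parts of $C_\mu$ together with the fully biased $L^2C_\mu^2C_\sigma^2$ term give rise, respectively, to the $(nL\sum_i\sigma_i)^{2/3}/\epsilon^{2/3}$ and $\sqrt{L}n^{5/4}/\sqrt{\epsilon}$ requirements; inverting each sum against $\epsilon^2/(16\log(8/\delta))$ and taking the maximum yields the claimed bound.

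The main obstacle I anticipate is the bookkeeping rather than any single inequality. Because $C_\mu(i,r)$ is itself a sum of a variance-scaled Bernstein term and a bias term and depends on $\hat{\sigma}_i^{(r)}$, the product $(g_i+2LC_\mu)^2(\sigma_i+2C_\sigma)^2$ expands into many monomials in $g_i,\sigma_i,C_\mu,C_\sigma$, each decaying at a different rate in $B_r$ and summing through a different functional of $(g_i)$ or $(\sigma_i)$. The delicate part is keeping constants explicit and verifying that every such monomial is dominated by one of the four named lower-order terms, so that no stray monomial produces a strictly larger requirement; I would organize this by repeated $(a+b)^2 \le 2a^2+2b^2$ splittings, bounding each monomial by the uniform-floor allocation $\tilde{T}_i^{(r)}\ge B_r/n$ before summing over $i$.
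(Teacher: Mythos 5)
Your overall architecture matches the paper's proof: condition on $\xi$, replace $\hat{g}_i^{(r,U)}$ and $\hat{\sigma}_i^{(r,U)}$ by $g_i+2LC_\mu(i,r)$ and $\sigma_i+2C_\sigma(i,r)$, isolate the leading $g_i^2\sigma_i^2$ contribution via the allocation bound $\tilde{T}_i^{(r)}\gtrsim \frac{g_i\sigma_i}{H}B_r$ on the set where both $g_i$ and $\sigma_i$ exceed their confidence widths, and invert the decay of the cross terms in $B_r$. The pieces yielding $H^2\log(1/\delta)/\epsilon^2$, $n\sqrt{\sum_i g_i^2}\,/\epsilon$, $nL\sqrt{\sum_i\sigma_i^4}\,/\epsilon$, and $\sqrt{L}n^{5/4}/\sqrt{\epsilon}$ go through as you describe. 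However, there is a genuine gap in your treatment of the bias part of $C_\mu(i,r)$ on arms with large $\sigma_i$ and small $g_i$: you state that you would bound \emph{every} cross term using only the uniform floor $T_i^{(r)},\tilde{T}_i^{(r)}\ge B_r/n$. For the monomial $L^2\sigma_i^2\bigl(\tfrac{7\log(32nr^2/\delta)}{3(T_i^{(r)}-1)}\bigr)^2/\tilde{T}_i^{(r)}$ this gives, after summing over $i$, a term of order $n^3L^2\bigl(\sum_i\sigma_i^2\bigr)\log^2(\cdot)/B_r^3$, which inverts to the requirement $B_r\gtrsim nL^{2/3}\bigl(\sum_i\sigma_i^2\bigr)^{1/3}\epsilon^{-2/3}$ --- \emph{not} the claimed $\bigl(nL\sum_i\sigma_i\bigr)^{2/3}\epsilon^{-2/3}$. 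By Cauchy--Schwarz, $\bigl(\sum_i\sigma_i\bigr)^2\le n\sum_i\sigma_i^2$, so your quantity is larger, by a factor of $n^{1/3}$ when a single arm carries all the variance. The paper avoids this precisely through the intermediate variance-proportional sampling stage of \Cref{alg:adaConf}: on $\xi$, for arms with $\sigma_i\ge 2C_\sigma(i,r)$ one has $T_i^{(r)}\ge\beta_i^{(r)}B_r\ge\frac{\sigma_i}{2\sum_j\sigma_j}B_r$, hence $\sigma_i^2/(T_i^{(r)})^2\le 4(\sum_j\sigma_j)^2/B_r^2$ uniformly in $i$, which after summation produces the $n^2L^2(\sum_i\sigma_i)^2\log^2(\cdot)/B_r^3$ term in \eqref{eq:termCond3} and hence the stated exponent. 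Your plan never invokes $\beta_i^{(r)}$, so the step identifying your bias-term requirement with the stated one fails.

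The damage is quantitative rather than fatal, but repairing it requires an argument you do not make. Writing $A=nL\sqrt{\sum_i\sigma_i^4}\,/\epsilon$ and $C=\sqrt{L}n^{5/4}/\sqrt{\epsilon}$, Cauchy--Schwarz in the form $\bigl(\sum_i\sigma_i^2\bigr)^2\le n\sum_i\sigma_i^4$ gives $\bigl(nL^{2/3}(\sum_i\sigma_i^2)^{1/3}\epsilon^{-2/3}\bigr)^3\le A\,C^2$, so your stray term is dominated by $\max(A,C)$ and thus by the stated $\Omega(\cdot)$ expression; with that observation added, your uniform-floor route does prove the lemma. The cleaner and intended route, however, is to use the $\beta$-sampling lower bound on $T_i^{(r)}$ for high-variance arms, which is the reason \Cref{alg:adaConf} performs that sampling pass at all.
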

\begin{proof}
Examining when this termination is met, we need to upper bound terms in the right hand side of \eqref{eq:termCondUB}.
We do so by analyzing four separate cases, based on the magnitudes of $g_i$ and $\sigma_i$. 
Our primary term of consideration is when both are relatively large, in which case $\alpha_i$ (dropping the implicit $r$ superscript) are order optimal sampling frequencies.
For ease of notation, we define the set of coordinates with large $g_i$ as $\CS_g := \{i : g_i \ge 4LC_\mu(i,r)\}$, and the corresponding set $\CS_\sigma := \{ i : \sigma_i \ge 2C_\sigma(i,r)\}$, where we note that for those indices $\CS := \{i : g_i \ge 4LC_\mu(i,r) \text{ and } \sigma_i \ge 2C_\sigma(i,r)\} = \CS_g \cap \CS_\sigma$, we have $\tilde{T}_i^{(r)} \ge \alpha_i B_r \ge \frac{g_i \sigma_i}{4H} B_r$ for $i\in \CS$.
We note that due to our second round of sampling, we have that $T_i^{(r)} \ge \beta_i^{(r)} B_r \ge \frac{\sigma_i}{2\sum_j \sigma_j} B_r$ for $i\in \CS_\sigma$, and that for all $i$ we have $T_i^{(r)} \ge B_r/n$ and $\tilde{T}_i^{(r)} \ge B_r/n$.
We begin by splitting our summation into the 4 terms discussed.

\begin{align*}
     &\sum_{i=1}^n \frac{\left(\hat{g}_i^{(r,U)}  \hat{\sigma}_i^{(r,U)}\right)^2}{\tilde{T}_i^{(r)}}\\
     &\le \sum_{i=1}^n \frac{(g_i + 2LC_\mu(i,r))^2\left(\sigma_i+C_\sigma(i,r)\right)^2}{\tilde{T}_i^{(r)}}\\
     &\le \sum_{i\in \CS} \frac{6g_i^2\sigma_i^2}{\tilde{T}_i^{(r)}} 
        + \sum_{\CS_g \setminus \CS} \frac{7g_i^2 C_\sigma^2(i,r)}{\tilde{T}_i^{(r)}}
        + \sum_{\CS_\sigma \setminus \CS} \frac{81L^2C_\mu^2(i,r)\sigma_i^2}{\tilde{T}_i^{(r)}}
     +\sum_{i \in \bar{\CS}_\sigma \cup \bar{\CS}_g} \frac{324L^2C_\mu^2(i,r) C_\sigma^2(i,r)}{\tilde{T}_i^{(r)}}\numberthis \label{eq:termCondEval}
\end{align*}
We now bound each of these terms sequentially.
We begin by noting that on $\xi$
\begin{align*}
     \sum_{i \in \CS}\frac{g_i^2\sigma_i^2}{\tilde{T}_i^{(r)}} &\le \frac{H^2}{4B_r}, \numberthis \label{eq:termCond1}
\end{align*}
which is our primary error term.

Our second term stems from when $g_i$ is large but $\sigma_i$ is small, where we utilize the uniform sampling based lower bound on the number of pulls. Here we see that
\begin{align*}
    \sum_{\CS_g \setminus \CS} \frac{g_i^2 C_\sigma^2(i,r)}{\tilde{T}_i^{(r)}}
    = \sum_{\CS_g \setminus \CS} \frac{2g_i^2 \log (8nr^2/\delta)}{\tilde{T}_i^{(r)} \cdot (T_i^{(r)}-1)}
    = O\left(\frac{n^2 \left(\sum_i g_i^2\right) \log (nr^2/\delta)}{B_r^2} \right). \numberthis \label{eq:termCond2}
\end{align*}
Our third term occurs when $\sigma_i$ is large but $g_i$ is small, in which case we utilize the fact that $T_i^{(r)}\ge \frac{\sigma_i}{2 \sum_j \sigma_j} B_r$ sample proportionally to $\sigma_i$. Using the fact that on $\xi$ our estimators stay within their confidence intervals, we can bound this sum as
\begin{align*}
    \sum_{\CS_\sigma \setminus \CS} \frac{L^2C_\mu^2(i,r)\sigma_i^2}{\tilde{T}_i^{(r)}}
    &\le \sum_{\CS_\sigma \setminus \CS}\frac{L^2\left(\sigma_i\sqrt{\frac{2 \log( 32nr^2/\delta)}{T_i^{(r)}}} + \frac{13 \log (32nr^2/\delta)}{3(T_i^{(r)}-1)}\right)^2\sigma_i^2}{\tilde{T}_i^{(r)}}\\
    &= O\left( \sum_{\CS_\sigma \setminus \CS}\frac{L^2\left(\frac{ \sigma_i^2\log(nr^2/\delta)}{T_i^{(r)}} + \frac{\log^2 (nr^2/\delta)}{\left(T_i^{(r)}\right)^2}\right)\sigma_i^2}{\tilde{T}_i^{(r)}}\right)\\
    &\le O\left( L^2\sum_{\CS_\sigma \setminus \CS}\left(\frac{ \sigma_i^4\log(nr^2/\delta)}{T_i^{(r)}\cdot \tilde{T}_i^{(r)}} + \frac{\sigma_i^2\log^2 (nr^2/\delta)}{\left(T_i^{(r)}\right)^2\cdot \tilde{T}_i^{(r)}}\right)\right)\\
    &\le O\left( \frac{ n^2L^2 \left(\sum_{i}\sigma_i^4\right)\log(nr^2/\delta)}{B_r^2} + \frac{n^2 L^2\left(\sum_i \sigma_i\right)^2\log^2 (nr^2/\delta)}{B_r^3}\right). \numberthis \label{eq:termCond3}
\end{align*}
Our final term is when both $\sigma_i$ and $g_i$ are small. In this case $\sigma_i^2 \le 2C_\sigma(i,r) = \frac{2\log(8nr^2/\delta)}{T_i^{(r)}-1}$, and so we can simplify the sum as
\begin{align*}
    \sum_{i \in \bar{\CS}_\sigma \cup \bar{\CS}_g} \frac{L^2C_\mu^2(i,r) C_\sigma^2(i,r)}{\tilde{T}_i^{(r)}}
    &\le \sum_{i \in \bar{\CS}_\sigma \cup \bar{\CS}_g} \frac{ L^2 \left(\sigma_i\sqrt{\frac{2 \log( 32nr^2/\delta)}{T_i^{(r)}}} + \frac{13 \log (32nr^2/\delta)}{3(T_i^{(r)}-1)}\right)^2 \frac{2\log (8nr^2/\delta)}{T_i^{(r)}-1}}{\tilde{T}_i^{(r)}}\\
    &= O\left(\sum_{i \in \bar{\CS}_\sigma \cup \bar{\CS}_g} \frac{ L^2 \log^3 (nr^2/\delta) }{\tilde{T}_i^{(r)} \left(T_i^{(r)}\right)^3} \right)\\
    &\le O\left( \frac{ L^2n^5 \log^3 (nr^2/\delta) }{B_r^4} \right). \numberthis \label{eq:termCond4}
\end{align*}

We now see that the left hand side of our termination condition is upper bounded by plugging \eqref{eq:termCond1}, \eqref{eq:termCond2}, \eqref{eq:termCond3}, and \eqref{eq:termCond4} into \eqref{eq:termCondEval}, yielding
\begin{align*}
    &\sum_{i=1}^n \frac{\left(\hat{g}_i^{(r,U)} \hat{\sigma}_i^{(r,U)}\right)^2}{\tilde{T}_i^{(r)}}= \numberthis\\
    &O\left( \frac{H^2}{B_r} 
    + \frac{n^2\left(\sum_i g_i^2\right) \log (nr^2/\delta)}{B_r^2} 
    + \frac{ n^2L^2 \left(\sum_{i}\sigma_i^4\right)\log(nr^2/\delta)}{B_r^2} 
    + \frac{n^2 L^2\left(\sum_i \sigma_i\right)^2\log^2 (nr^2/\delta)}{B_r^3} 
    + \frac{ L^2n^5 \log^3 (nr^2/\delta) }{B_r^4}\right).
\end{align*}
We can terminate as soon as this quantity is less than $\frac{\epsilon^2}{16 \log(8/\delta)}$, which must happen when
\begin{equation}
    B_r = \Omega\left( \frac{H^2 \log(1/\delta)}{\epsilon^2} 
    + \left[\frac{n\sqrt{\left(\sum_i g_i^2\right)}}{\epsilon} 
    + \frac{n L \sqrt{\left(\sum_{i}\sigma_i^4\right)}}{\epsilon}
    + \frac{\left(nL\sum_i \sigma_i\right)^{2/3}}{\epsilon^{2/3}} 
    + \frac{ \sqrt{L}n^{5/4} }{\sqrt{\epsilon}} \right]\log (nr^2/\delta)
    \right).
\end{equation}
The final thing to note is that our algorithm starts with a budget of $B_0$, and so the total number of samples used must be at least $B_0$.
\end{proof}

With these lemmas in hand, the proof of  \Cref{thm:algbounded} is straightforward.
\begin{proof}[Proof of \Cref{thm:algbounded}]
The error of the plug-in estimator $f(\bxt)$ satisfies
\begin{align*}
    \P\left(|f(\bxt)-f(\bx)| > \epsilon\right) 
    &= \P\left(\bar{\xi}\right) + \P\left(|f(\bxt)-f(\bx)| > \epsilon \ \middle\vert \ \xi\right) \\
    &\le \P\left(\bar{\xi}\right) + 
    \P\left(\left|\nabla f(\bx)^\top (\bxt - \bx)\right| > \epsilon/2 \ \middle\vert \ \xi\right) 
    + \P\left(\frac{L}{2}\|\bxt-\bx\|_2^2 > \epsilon/2 \ \middle\vert \ \xi\right) \\
    &\le \delta, \numberthis
\end{align*}
by utilizing Lemmas \ref{lem:confxi} and \ref{lem:adaConfError}, as on $\xi$ \Cref{alg:adaConf} will not terminate until the conditions of \Cref{lem:adaConfError} are met.

Analyzing the number of arm pulls required for this algorithm to succeed, we have by Lemmas \ref{lem:adaConf_bernstein_lower} and \ref{lem:adaConfBudget} that the algorithm will terminate when
\begin{equation} \hspace{-.03cm}
    B_r = \Omega\left( \frac{H^2 \log(1/\delta)}{\epsilon^2} 
    + \left[\frac{n\sqrt{\left(\sum_i g_i^2\right)}}{\epsilon} 
    + \frac{n L \sqrt{\left(\sum_{i}\sigma_i^4\right)}}{\epsilon}
    + \frac{\left(nL\sum_i \sigma_i\right)^{2/3} + nL^{2/3}}{\epsilon^{2/3}} 
    + \frac{ \sqrt{L}n^{5/4} }{\sqrt{\epsilon}} \right]\log (nr^2/\delta)
    \right).\hspace{-.1cm}
\end{equation}
Since our algorithm makes at most $3B_\ell + 2n$ pulls during round $\ell$, and at most $4B_r+n$ pulls to construct $\bxt$ if the termination condition is met in round $r$, it in total makes no more than $10B_r$ pulls before terminating at the end of round $r$.

Converting this to an overall budget, we see that the $\log(r)$ term generates additional $\log\log$ factors due to the doubling round budgets, as in \Cref{thm:thresholding}.
This yields an overall budget of
\begin{align*}
    &O\left( \frac{H^2 \log(1/\delta)}{\epsilon^2}
    + \frac{n^2L \log(n/\delta)}{\epsilon}\right)\\
    &+ \tilde{O}\left(\left[\frac{n\sqrt{\left(\sum_i g_i^2\right)}}{\epsilon} 
    + \frac{n L \sqrt{\left(\sum_{i}\sigma_i^4\right)}}{\epsilon}
    + \frac{\left(nL\sum_i \sigma_i\right)^{2/3} + nL^{2/3}}{\epsilon^{2/3}} 
    + \frac{ \sqrt{L}n^{5/4} }{\sqrt{\epsilon}} \right]\log (n/\delta)
    \right),\numberthis
\end{align*}
where $\tilde{O}$ suppresses $\log \log (\cdot)$ terms in poly$\left(n,\delta, \sum_i g_i ,\sum_i \sigma_i , \epsilon^{-1} \right)$.
\end{proof}

\subsection{Improved $\alpha_i^{(r)}$}
As previously mentioned, the frequencies $\b{\alpha}$ in \Cref{alg:adaConf} are deisgned with simplicity and asymptotic optimality in mind. By taking
\begin{equation}
    \alpha_i^{(r)}  = \frac{\hat{g}_i^{(r,L)} \hat{\sigma}_i^{(r,L)}}{\sum_j \hat{g}_j^{(r,L)} \hat{\sigma}_j^{(r,L)}} 
    + \frac{\hat{g}_i^{(r,L)} }{\sum_j \hat{g}_j^{(r,L)} } 
    + \frac{ \left(\hat{\sigma}_i^{(r,L)}\right)^2}{\sum_j \left(\hat{\sigma}_j^{(r,L)}\right)^2} 
    + \frac{1}{n},
\end{equation}
we are able to give tighter bounds on our lower order terms, eliminating dependencies on $\sum_i \sigma_i^4$ and $\sum_i g_i^2$.

Concretely, the sample complexity bound can be improved to
\begin{align*}
    &O\left( \frac{H^2 \log(1/\delta)}{\epsilon^2}
    + \frac{n^2L \log(n/\delta)}{\epsilon}\right)\\
    &+ \tilde{O}\left(\left[\frac{\sqrt{n}\left(\sum_i g_i\right) + \sqrt{n}L \left(\sum_{i}\sigma_i^2\right)}{\epsilon} 
    + \frac{\left(nL\sum_i \sigma_i\right)^{2/3} + nL^{2/3}}{\epsilon^{2/3}} 
    + \frac{ \sqrt{L}n^{5/4} }{\sqrt{\epsilon}} \right]\log (n/\delta)
    \right). \numberthis
\end{align*}
This is achieved by using that $\tilde{T}_i^{(r)} \ge \frac{g_i}{2\sum_j g_j}$ for large enough $g_i$ in \eqref{eq:termCond2}, and that $\tilde{T}_i^{(r)} \ge \frac{\sigma_i^2}{2\sum_j \sigma_j^2}$ for large enough $\sigma_i$ in \eqref{eq:termCond3}, as sampling proportionally to $\sigma_i^2$ minimizes the maximal confidence interval width.

This only increases the sample complexity by at most a factor of 3 as we are doubling the amount of sampling we perform in the last iteration, and performing a ceiling operation.

\subsection{Gain of variance adaptivity} 
In the case with unknown variance and gradient information, we see that our optimal sampling frequencies are $\alpha_i \propto g_i \sigma_i$, yielding a first order error of
\begin{equation}
    \CN\left(0,\frac{1}{T}\sum_{i=1}^n \frac{g_i^2\sigma_i^2}{\alpha_i}\right) \sim
    \CN\left(0,\frac{1}{T}\left(\sum_{i=1}^n g_i\sigma_i\right)^2\right).
\end{equation}
Hence, we have that 
\begin{equation}
    O\left( \left(\sum_i g_i \sigma_i \right)^2 \epsilon^{-2} \log(1/\delta)\right) + o(\epsilon^{-2})
\end{equation}
samples are sufficient for an adaptive algorithm to achieve $\epsilon$ error.
For a nonadaptive scheme, we see by a similar argument that the first order error of such a scheme is distributed as 
\begin{equation}
    \CN\left(0,\frac{n}{T}\sum_{i=1}^n g_i^2\sigma_i^2\right),
\end{equation}
requiring 
\begin{equation}
    O\left( n\left(\sum_i g_i^2 \sigma_i^2 \right) \epsilon^{-2} \log(1/\delta)\right) + o(\epsilon^{-2})
\end{equation}
samples to achieve $\epsilon$ error.
Hence, the gain afforded by adaptivity is
\begin{equation}
    O\left(\frac{n\sum_i g_i^2 \sigma_i^2}{\left(\sum_i g_i \sigma_i \right)^2}\right).
\end{equation}

\section{Lower bound} \label{app:lb}
\begin{proof}[Proof of \Cref{prop:lowerbound}]

For a plug-in estimator $\bxh$ generated by sampling the $i$-th arm $T_i$ times, we see that the error $f(\bx)-f(\bxh)$ is distributed as
\begin{equation}
\b{g}^\top (\bx-\bxh)
\sim \CN\left(0,\sum_i \frac{g_i^2\sigma_i^2}{T_i}\right)
\end{equation}
when arm pulls from arm $i$ are corrupted by independent additive Gaussian noise with variance $\sigma_i^2$.
We then have by standard Gaussian anti-concentration inequalities that $\P(\CN(0,1)> x) \ge \frac{1}{4}e^{-x^2}$, and so
\begin{equation}
    \P \left(\CN\left(0,\sum_i \frac{g_i^2\sigma_i^2}{T_i}\right) \ge \epsilon \right) \ge \frac{1}{4}\exp\left(-\frac{\epsilon^2}{\sum_i \frac{g_i^2\sigma_i^2}{T_i}} \right),
\end{equation}
where in order for this to be less than $\delta$, it must be the case that
\begin{equation}
    \P \left(\CN\left(0,\sum_i \frac{g_i^2 \sigma_i^2}{T_i}\right) \ge \epsilon \right) <\delta \implies \sum_i \frac{g_i^2\sigma_i^2}{T_i} \le \frac{\epsilon^2}{\log(1/4\delta)}.
\end{equation}
Rewriting this as  $T_i = \alpha_i T$, where $\b{\alpha}$ is the sampling probability vector, we have as before that
\begin{align*}
    \min_{\alpha \in \Delta^n} \sum_i \frac{g_i^2\sigma_i^2}{\alpha_i T} = \frac{1}{T} \left(\sum_i g_i\sigma_i\right)^2 \numberthis
\end{align*}
by taking the Lagrangian, invoking Sion's minimax theorem \cite{sion1958general}, and optimizing to yield $\alpha_i \propto g_i\sigma_i$.
Plugging this back in, we see that
\begin{align}
    \sum_i \frac{g_i^2\sigma_i^2}{\alpha_i T} \le \frac{\epsilon^2}{\log(1/4\delta)} \implies T \ge \left(\sum_i g_i\sigma_i\right)^2\epsilon^{-2} \log(1/4\delta).
\end{align}
\end{proof}

\section{Experimental Details} \label{app:expdetails}

We implemented all algorithms in Python.
The results and figures in this paper can be reproduced from the available code.
We now describe important practical optimizations for our adaptive methods to outperform the optimism-based fully sequential method and batched uniform scheme.
All experiments were run on one core of an AMD Opteron Processor 6378 with 500GB memory (no parallelism within a trial).
No empirical standard deviations were included, as all simulations were run for 100 trials for the optimism based fully sequential method and 1000 trials for the batched methods.
Due to the low variance in our outputs, the confidence intervals around our average error are extremely small.
We use $\delta= .01$ for all experiments.

\subsection{Additional numerical results} \label{app:extraexp}
For generating data for our $Ax$ experiments, we construct our ground truth $\bx$ vector as drawn from a pareto(.5) distribution, then normalize it so the largest entry is 20, to maintain a reasonable signal to noise ratio.
We then construct $A = \bx x^\top / \|x\|_2^2 + Z$, where the entries of $x$ are i.i.d. Uniform([.5,1]), and $Z\in\R^{n\times d}$ has i.i.d. normal entries with variance $1/d^2$, to prevent the variances from scaling with the dimension.
We correct the final product by updating
$A \gets A - (Ax - \bx) \mathds{1}^\top / \|x\|_1$ to ensure that $Ax = \bx$.
This functions as desired, as since $x>0$ entrywise we have that $\|x\|_1 = \mathds{1}^\top x$.
We provide several additional simulations below in \Cref{fig:supFig}, corroborating our numerical results in the main body.
Note that when there is an underlying computational problem, as in the case of estimating $\|Ax\|_2^2$, this underlying computational problem allows for exact computation of certain coordinates, resulting in much larger gains of adaptivity for certain computational budgets.

\begin{figure}[h]
    \centering
    \begin{subfigure}[b]{0.32\linewidth}
    \includegraphics[width=\linewidth]{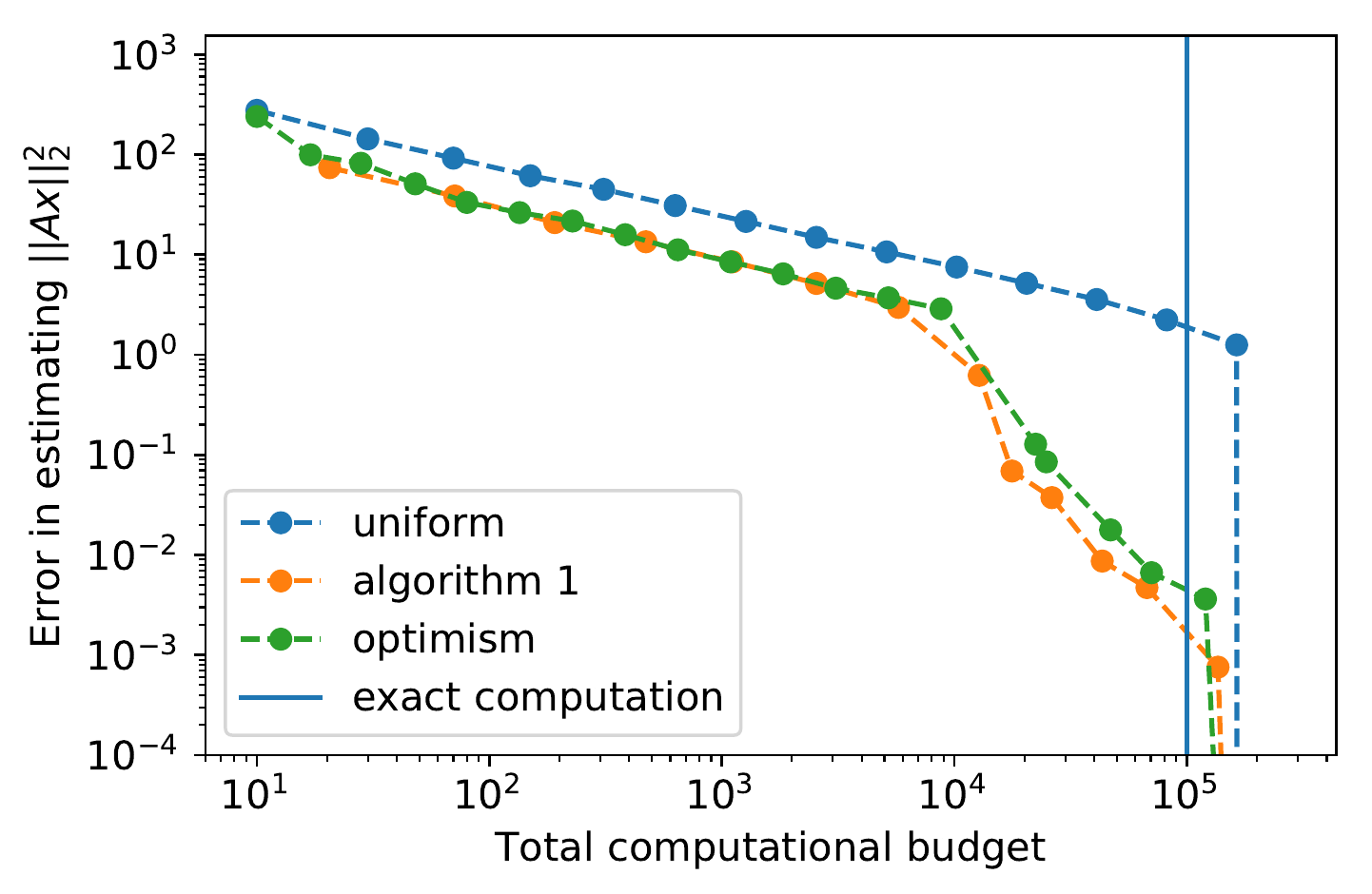}
    \subcaption{}
    \label{fig:Ax_n10_d10k_wOpt}
    \end{subfigure}
    \begin{subfigure}[b]{.32\linewidth}
    \includegraphics[width=\linewidth]{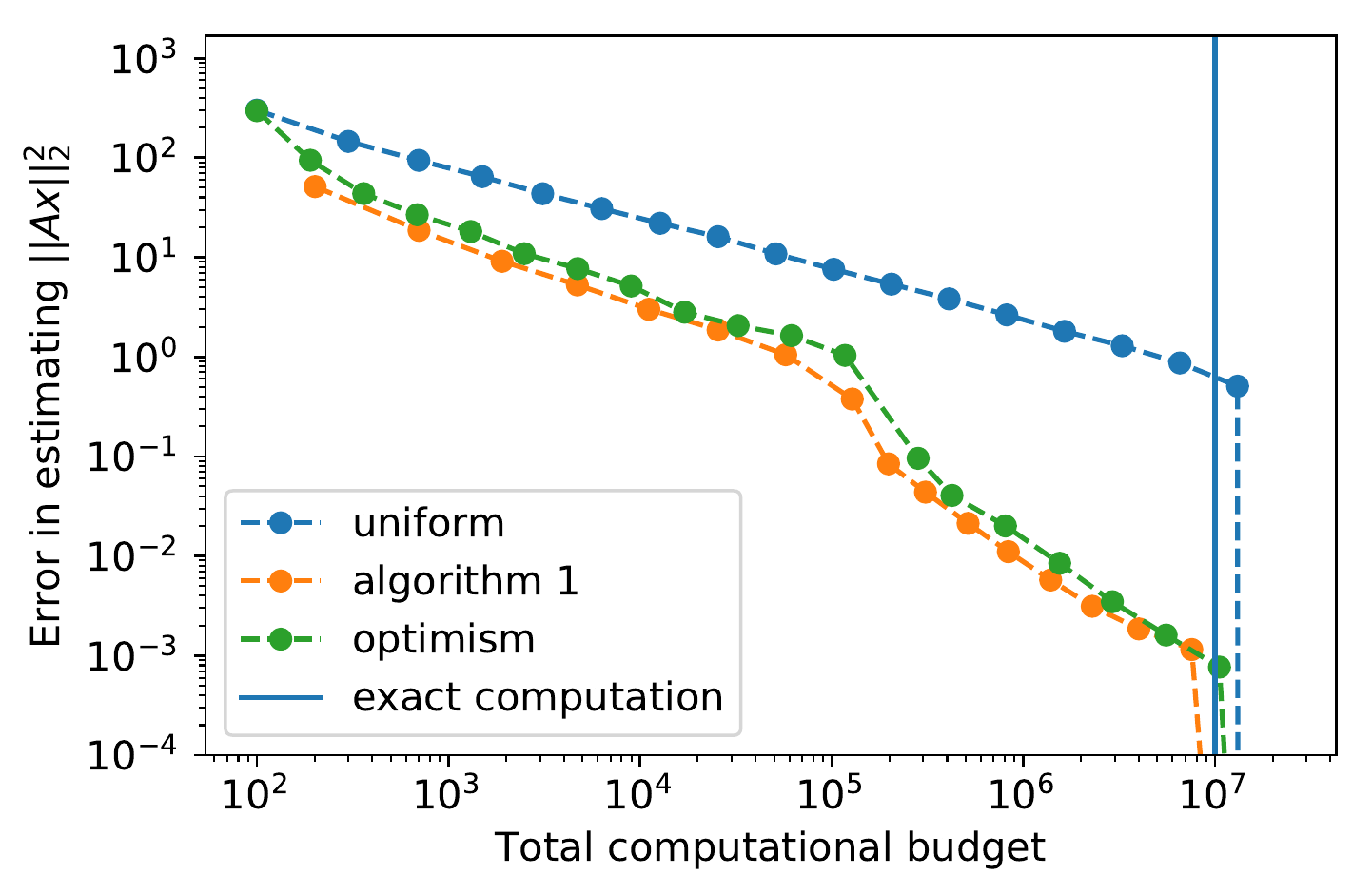}
    \subcaption{}
    \label{fig:Ax_n100_d100k_wOpt}
    \end{subfigure}
    \begin{subfigure}[b]{.32\linewidth}
    \includegraphics[width=\linewidth]{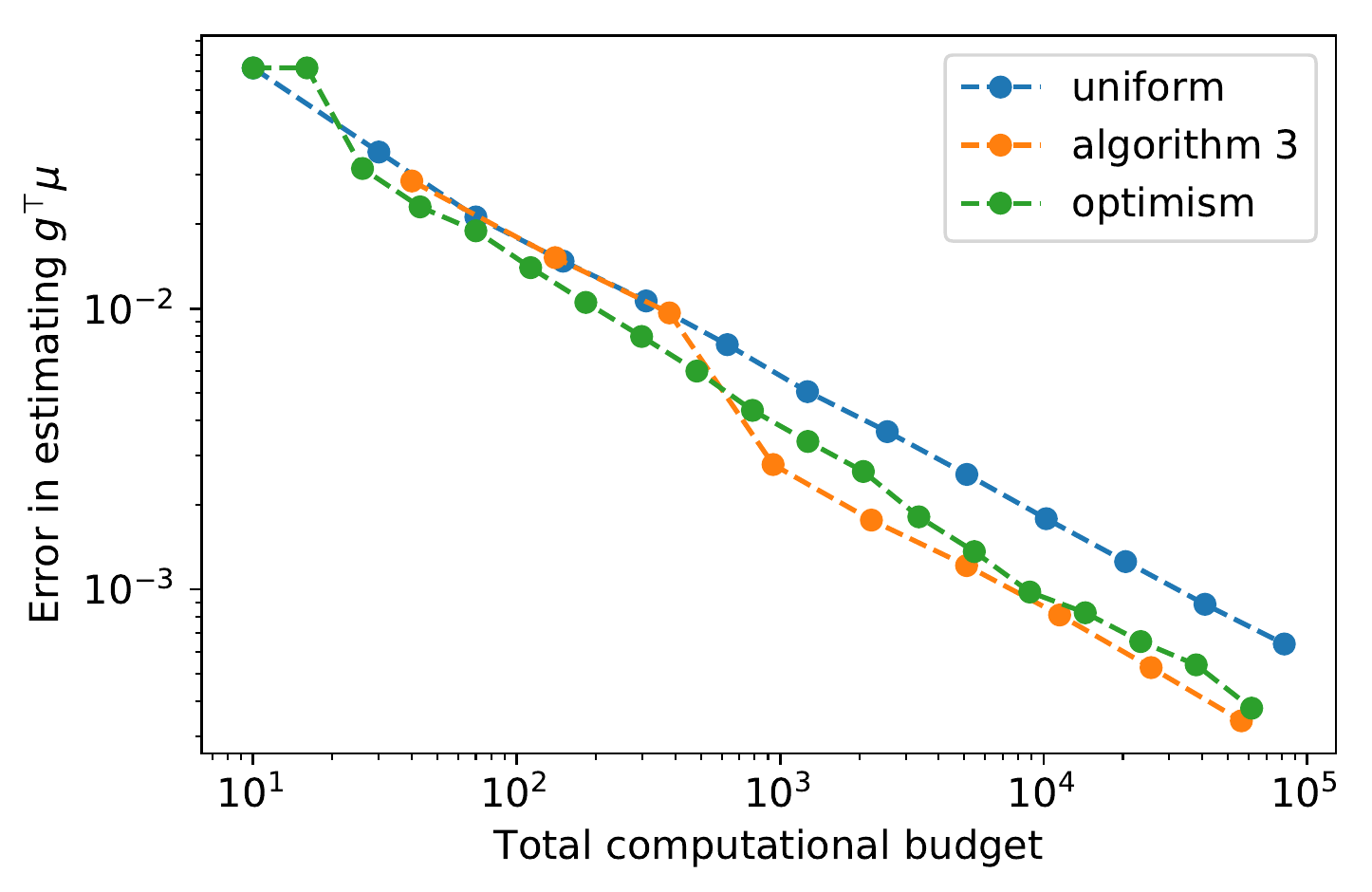}
    \subcaption{}
    \label{fig:bernoulli_n10_pareto}
    \end{subfigure}
    \caption{In (a) we have the estimation error of $\|Ax\|_2^2$ for matrix $A$, vector $x$, for $n=10$ and $d=10k$. The problem instance shown in \Cref{fig:Ax_n10_d10k_wOpt} is projected to have a gain of adaptivity of $7.5$. In (b) we see the estimation error of $\|Ax\|_2^2$ for matrix $A$, vector $x$, for $n=100$ and $d=10k$.
    In (c) we plot the estimation error of $\b{g}^\top \bx$ for $n=10$, $w$ uniform in [0,1] and $\bx$ normalized pareto distribution.} \label{fig:supFig}
\end{figure}

\subsection{Modifications from written algorithm} \label{app:changesAlg}
In order to obtain good finite sample performance, we make several changes to the algorithms as stated.
Firstly, we reuse all samples throughout the algorithm, including in the final outputted $\bxt$.
Secondly, we start from a budget of 1 pull per arm with $B_0=n$, and double from there.
Additionally,  we use as our confidence interval width $C_r = \sqrt{\log(1/\delta)/\tilde{B}_r}$.
Finally, we use $\tilde{T}_i^{(r)} = \lceil \min(r,10)\alpha_i B_r \rceil$, where uniform sampling is not practically necessary due to the reuse of samples, and the factor of 10 allows our algorithm to perform better in terms of constants.
Our theoretical analysis can accommodate this factor of 10.
Note that this factor of 10 directly translates into the slope of the linear relationship; since the algorithm will have made approximately $2B_r$ uniform samples by the end of round $r$, it is only able to allocate $10/(10+2)\approx .83$ fraction of its samples to the optimal sampling frequencies.
This is empirically validated by our simulations, as we can see that the slope of the line of best fit is indeed approximately .83.
This parameter of 10 needs to be tuned carefully in practice, as it indicates how greedy the algorithm will be; if this factor is chosen to be too large, then the algorithm will ignore higher order effects and act too strongly on the imperfect gradient information.
For the plots showing the error as a function of number of samples, we take all the uniform samples up to round $r$ for the adaptive algorithm, then generate the termination sampling pattern $\tilde{T}_i^{(r)}$, sample according to that, and measure and record the error.
These samples, corresponding to the $\tilde{T}_i^{(r)}$, are then discarded, and uniform samples for round $r+1$ are taken.

For the gain in terms of stopping time of adaptive over uniform sampling, we use slightly perturbed realizations to better display the trend, and avoid edge / rounding effects.
To this end, we modify our algorithms to utilize the round budget $B_r = n (2-c)^r$.
We then iterate over this base, ranging $c$ from $c_0=2$ to $c_{99}=\frac{301}{200}$ with $c_i=2-\frac{i}{200}$ for 100 trials.
No failures were recorded in the stopping time simulations; that is, all outputted $\bxt$ satisfied $|f(\bxt)-f(\bx)|\le \epsilon$ for both the batched uniform and our adaptive algorithms.
Additionally, the stopping conditions yielded comparable error for both methods, with average outputted error within a factor of 2 for the uniform and adaptive methods across all settings tested.

We implement the optimism-based algorithm of \cite{carpentier2015adaptive} with a priority queue to ensure efficient selection of an arm at time $t$, requiring amortized $\log n$ time instead of $O(n)$ time.
We additionally generate all possible Bernoulli samples ahead of time, and within our timed experiments only query the pre-drawn samples.
Confidence intervals are used with $\beta=\sqrt{\log(1/\delta)}$.

In order to make the algorithms efficient in the $\|Ax\|_2^2$ example, we first randomly permute the columns of $A$ and entries of $x$ jointly, and then take our arm pulls as iterating over these entries in order.
This allows the uniform arm pulls in our batched algorithms to be taken as BLAS efficient operations, simply taking the matrix vector product between a column subsampled $A$ and $x$.
Note that this can have statistical benefits from correlating the noise in our sampling, as discussed in \cite{baharav2019ultra}.

\subsection{Optimism based algorithm}
We describe our optimism based algorithm below for estimating $f(\bx)= \|\bx\|_2^2$ in the case of $\|Ax\|_2^2$.
When we refer to pulling arm $i$, we mean selecting a random coordinate $J\sim \text{Unif}([d])$ and computing $A_{i,J}x_J$.
We utilize a priority queue on the $B_{i,t}$, noting that only $B_{i^{*}_t,t}$ will change at any given time step. 
This means that every iteration of this algorithm can be performed in amortized $O(\log n)$ time, instead of the naive $O(n)$ complexity of recomputing the $B_{i,t}$ from scratch in every iteration and iterating over to find the largest.
\begin{algorithm}[h]
\begin{algorithmic}[1]
\caption{\label{alg:adaOptimismApp} \texttt{Optimism based algorithm for $\|Ax\|_2^2$}}
\State \textbf{Input:} arms $[n]$, matrix $A\in\R^{n\times d}$, vector $x \in \R^d$, target accuracy $\epsilon$, error probability $\delta$
\State Pull each arm once
\For{$t=n+1,\hdots,T$}
\State Compute for each arm $i\in [n]$
$$ B_{i,t} = \frac{1}{T_{i,t}} \left(\hat{\mu}_{i,t} + \frac{\sqrt{\log(1/\delta)}}{T_{i,t}}\right)$$
\State Select $i^{*}_t \in \argmax_{i: T_{i,t}\le d} B_{i,t}$
\If{ $T_{i,t}=d$}
\State Exactly compute $\mu_i = A_i^\top x$
\Else
\State Pull arm $i^{*}_t$
\EndIf
\If{$T_{i,t} = d$ for all $i$} 
\State \Return{Exactly computed $\|\bxh_t\|_2^2 = \|Ax\|_2^2$}
\EndIf
\EndFor
\State \Return $\sum_{i=1}^n \hat{\mu}_{i,t}^2 - \frac{\mathds{1}\{ T_{i,t} <d\}}{T_{i,t}} $
\end{algorithmic}
\end{algorithm}

\end{appendix}

\end{document}